\documentclass[10pt,twoside]{article}
\usepackage{amsmath}
\usepackage[utf8]{inputenc}
\usepackage{url}
\usepackage{dsfont}
\usepackage{graphicx}
\usepackage{yhmath}
\usepackage[table]{xcolor}
\usepackage{mathrsfs} 
\usepackage{amssymb}
\usepackage{url}
\usepackage{makecell}
\usepackage{arydshln}
\usepackage{multirow}
\usepackage{arydshln}
\usepackage{mathtools}
\usepackage{stmaryrd}  
 \usepackage{amsthm,amssymb}
\usepackage{hyperref}
\usepackage{bm} 
\usepackage{dsfont}

\input epsf
\setlength{\topmargin}{-0.1in} \setlength{\textwidth}{16.8 cm}
\setlength{\textheight}{22.2 cm} \setlength{\oddsidemargin}{-0.1in}
\setlength{\evensidemargin}{-0.15in}



\numberwithin{equation}{section}

\newtheorem{thm}{Theorem}[section]

\newtheorem{example}[thm]{Example}

\newtheorem{lem}[thm]{Lemma}

\newtheorem{rmrk}[thm]{Remark}

\newenvironment{dem}{\ \\ {\bf Proof. }}
{\qed\par\medskip\noindent}
\newcommand{\E}{\ensuremath{\mathbb{E}}}
\newcommand{\R}{\ensuremath{\mathbb{R}}}
\newcommand{\Z}{\ensuremath{\mathbb{Z}}}
\newcommand{\C}{\ensuremath{\mathbb{C}}}
\newcommand{\N}{\ensuremath{\mathbb{N}}}

\newcommand{\lip}{\ensuremath{\mathrm{Lip}}}
\definecolor{grisclair}{gray}{0.9}
\font\dsrom=dsrom10 scaled 1200
\def \ind{\textrm{\dsrom{1}}}
\DeclareMathOperator*{\argmin}{argmin}

 \newcommand{\dx}{ {d_x} }

\newcommand{\dy}{ {d_y} }

\newcommand{\mk}{ { \mathcal{K}} }
\newcommand{\mx}{ { \mathcal{X}} }
\newcommand{\my}{ { \mathcal{Y}} }
\newcommand{\mz}{ { \mathcal{Z}} }
\newcommand{\mf}{ { \mathcal{F}} }
\newcommand{\tmix}{ { t_{\text{mix}}} }

\newcommand{\KL}{ { \text{KL} } }



\begin{document}
\title{\bf Minimax optimality of deep neural networks on dependent data via PAC-Bayes bounds}
 \maketitle \vspace{-1.0cm}
 %

  \begin{center}
   Pierre Alquier$^{\text{a}}$  and
   William Kengne$^{\text{b}}$
 \end{center}

  \begin{center}
{\it $^{\text{a}}$  ESSEC Business School, Asia-Pacific campus, Singapore \\
 $^{\text{b}}$ Universit\'{e} Jean Monnet, ICJ UMR5208, CNRS, Ecole Centrale de Lyon, \\
 INSA Lyon, Universit\'{e} Claude Bernard Lyon 1, Saint-\'{E}tienne, France \\ 
    E-mail:  alquier@essec.edu ; william.kengne@univ-st-etienne.fr \\
  }
\end{center}

 \pagestyle{myheadings}
 \markboth{PAC-Bayes deep neural networks estimator}{Alquier and Kengne}

~~\\
\textbf{Abstract}:
 In a groundbreaking work,~\cite{schmidt2020nonparametric} proved the minimax optimality of deep neural networks with ReLU activation for least-square regression estimation over a large class of functions defined by composition. In this paper, we extend these results in many directions. First, we remove the i.i.d. assumption on the observations,  to allow some time dependence. The observations are assumed to be a Markov chain with a non-null pseudo-spectral gap. Then, we study a more general class of machine learning problems, which includes least-square and logistic regression as special cases. Leveraging on PAC-Bayes oracle inequalities and a version of Bernstein inequality due to~\cite{paulin2015concentration}, we derive upper bounds on the estimation risk for a generalized Bayesian estimator. In the case of least-square regression, this bound matches (up to a logarithmic factor) the lower bound in \cite{schmidt2020nonparametric}. We establish a similar lower bound for classification with the logistic loss, and prove that the proposed DNN estimator is optimal in the minimax sense.

 \medskip
 
 {\em Keywords:} Deep neural networks, minimax-optimality, Bayesian neural network, PAC-Bayes bounds, oracle inequality.

\section{Introduction}\label{sect_intro}
Since almost 20 years, deep neural networks (DNN) are the most efficient predictors for most large-scale applications of machine learning, including images, sound and video processing, high-dimensional time-series prediction, etc.~\cite{bengio2017deep}. However, a complete theory of deep learning is still missing. In particular, until recently, it was not understood why deep networks perform better than other nonparametric methods in classification or regression problems. In a groundbreaking work,~\cite{schmidt2020nonparametric} defined a set of functions that are composition of dimension-reducing smooth functions. He derived the minimax rate of estimation for this class of functions. A very important fact is that this rate can be reached by deep ReLU networks, but not by more classical methods such as wavelets. Since then, these results were extended in many directions. The smoothness of the function was defined in terms of H\"older spaces, this was extended to Besov spaces by~\cite{suzuki2019}. 
{\color{black} For least squares regression, \cite{kohler2021rate} have considered fully connected feedforward neural networks and obtained a similar rate as in \cite{schmidt2020nonparametric}, whereas \cite{jiao2023deep} established a minimax optimal rate that depends polynomially on the input dimension.
For deep learning from  non-i.i.d. data, we refer to \cite{ma2022theoretical, kohler2023rate, feng2023over, kurisu2025adaptive}  for some recent advances on nonparametric regression and for 
\cite{kengne2023penalized,kengne2024deep,
kengne2024sparse, kengne2025excess},
for some recent results on classification and regression, and where the optimality of the bounds established is unclear.
While the estimator used in~\cite{schmidt2020nonparametric} is frequentist, similar results were proven for Bayesian deep neural networks by~\cite{polson2018posterior,sun2022learning,
castillo2024bayesian,castillo2024posterior,
castillo2024deep}, from independent observations.
See also \cite{che2020b, sun2022consistent, steffen2022pac, kong2024posterior, mai2024misclassification} for some recent results based on Bayesian deep neural networks, for regression and classification, and where the minimax rate is established in the i.i.d. setting.
In this new contribution, we perform the PAC-Bayes approach for deep learning from Markov chains and deal with a setting that includes classification and nonparametric regression.
}    
%
%

\medskip

More precisely, we extend the results of~\cite{schmidt2020nonparametric} in two directions. First, we relax the independence assumption on the observations. Instead, the observations can be sampled from a Markov chain with a positive pseudo-spectral gap.
%
%
Then, we study a more general class of learning problems, that includes least-square regression but also classification with the logistic loss. We propose a generalized Bayesian estimator for which, we derive an upper bound on the excess risk. In the case of least-square regression, this bound matches (up to a logarithmic factor) the lower bound in~\cite{schmidt2020nonparametric}. We are not aware of similar lower bounds on the excess risk in the logistic regression case, so we derive such a bound, which matches (up to a logarithmic factor), our upper bound.

\medskip

Generalized Bayesian methods are an extension of Bayesian estimation available \textcolor{black}{when no likelihood is available, or when we do not assume that the likelihood is well-specified}. In this case, we replace the negative log-likelihood by a loss function similar to the ones used in machine learning. This approach is discussed in depth by~\cite{knoblauch2022optimization}. It can be motivated by the fact that such estimators are obtained by minimizing upper bounds on the generalization error known as PAC-Bayes bounds~\cite{mca1998}. It was actually observed empirically that the minimization of PAC-Bayes bounds for neural networks leads to good results in practice~\cite{dzi2017,per2020,clerico2021wide,clerico2022pacdet}. It appears that, PAC-Bayes bounds can also be used to derive oracle inequalities and rates of convergence in statistics~\cite{catoni2007pac,hellstrom2023generalization}. In the i.i.d. setting, PAC-Bayes oracle bounds were actually proven to lead to minimax rates for least-square regression in various nonparametric classes of functions,
 including the class of compositions structured H\"older functions studied in \cite{schmidt2020nonparametric}, see, for instance, \cite{che2020b,steffen2022pac,mai2024misclassification}. We refer the reader to~\cite{alquier2024user} for a comprehensive review on PAC-Bayes bounds, oracle inequalities, and applications to neural networks.

\medskip
PAC-Bayes bounds were extended to non-i.i.d. settings under various assumptions: mixing, weak-dependence, Markov property, among others. The PAC-Bayes oracle inequalities in~\cite{alq2012} lead to slow rates of convergence for weakly-dependent time series. It is to be noted that one of the applications covered in the paper are (shallow) neural networks. Faster rates were proven in~\cite{alq2013}, under more restrictive mixing assumptions. More recently,~\cite{banerjee2021pac} derived PAC-Bayes bounds for Markov chains via $\alpha$-mixing coefficients. Here, we prove a new PAC-Bayes oracle inequality for Markov chains. It is based on a version of Bernstein inequality due to~\cite{paulin2015concentration} that turned out to be very useful in machine learning~\cite{garnier2023hold}.
 We then apply it on the class of compositions based on H\"older functions to derive our rates of convergence for the DNN estimators.
  Note that, the generalized-Bayes estimators in~\cite{alq2012,alq2013,banerjee2021pac} require some prior knowledge on the strength of the dependence between the observations to be calibrated optimally. Our results are no different, in the sense that we need to know a lower bound \textit{a priori} on the pseudo-spectral gap of the observations to calibrate our estimator properly. However, thanks to some recent advances on the estimation of the mixing time of Markov chains~\cite{levin2016estimating,hsu2019mixing,wolfer2024improved}, which itself provides bounds on the pseudo-spectral gap, we can actually get rid of this assumption in some specific cases.

\medskip
The paper is organized as follows. In a first time, we describe the setting of our general learning problem, and the main assumptions: Section~\ref{sec:setting_notations}.
The generalized Bayesian estimator is described in Section~\ref{sec:prior_and_posterior}. 
Section~\ref{sec:oracle} established our PAC-Bayesian oracle inequalities as well as the excess risk of the Bayesian DNN estimator on the class of H\"older functions and the compositions structured class.
Applications to nonparametric regression and classification are carried out in Section~\ref{sec:examples}. All the proofs are gathered in Section~\ref{sec:proofs}.

\section{Setting and notations}
\label{sec:setting_notations}

Let us first introduce a few notations and conventions used in the paper. We let $\N$ denote the set of natural integers. Give a set $A$, we let $|A|$ denote its cardinality. For all $x \in \R$, $\lceil x \rceil$ denotes the  smallest integer $\geq x$ and $\lfloor x \rfloor$ denotes the largest integer $\leq x$.  For any vector $\boldsymbol{x}=(x_1,\cdots,x_d)^T \in \R^d$, $\|\boldsymbol{x}\| = \underset{1\leq i \leq d}{\max} |x_i|$,  $|\boldsymbol{x}|_0=\sum_{i=1}^d \ind(x_i \neq 0) $. We let $M_{p,q}(\R)$ denote the set of matrices of dimension $p\times q$ with coefficients in $\R$.
 For any  $\boldsymbol{x}=(x_{i,j}) \in M_{p,q}(\R)$, we put $ \|\boldsymbol{x}\| = \underset{1\leq i \leq p}{\max} \sum_{j=1}^{q} |x_{i,j}| $.
 For all $\boldsymbol{x}=(x_1,\cdots,x_d)^T \in \R^d$ and $\boldsymbol{\beta}=(\beta_1,\cdots,\beta_d) \in \N_0^d$, $\boldsymbol{x}^{\boldsymbol{\beta}}=x_1^{\beta_1}\cdot \ldots \cdot x_d^{\beta_d}$.
For $(a_n)$ and $(b_n)$ two sequences of real numbers, we write $a_n  \lesssim  b_n$ if there exists a constant $C >0$ such that $a_n \leq b_n$ for all $n \in \N$.
Also, we set $a_n \asymp b_n$ if $a_n \lesssim b_n$ and $b_n \lesssim a_n$.

\medskip
Given  $d \in \N$, and $E_1, E_2$ two subsets of separable Banach spaces equipped with norms $\| \cdot\|_{E_1}$ and $\| \cdot\|_{E_2}$ respectively, $\mathcal{F}(E_1, E_2)$ denotes the set of measurable functions from $E_1$ to $E_2$. For any function $h: E_1 \rightarrow E_2$ and $U \subseteq E_1$,
\[ \| h\|_\infty = \sup_{\boldsymbol{x} \in E_1} \| h(\boldsymbol{x}) \|_{E_2}, ~ \| h\|_{\infty,U} = \sup_{\boldsymbol{x} \in U} \| h(\boldsymbol{x}) \|_{E_2} \text{ and }\]
 \[\lip_\alpha (h) \coloneqq \underset{\boldsymbol{x}_1, \boldsymbol{x}_2 \in E_1, ~ \boldsymbol{x}_1\neq \boldsymbol{x}_2}{\sup} \dfrac{\|h(\boldsymbol{x}_1) - h(\boldsymbol{x}_2)\|_{E_2}}{\| \boldsymbol{x}_1- \boldsymbol{x}_2 \|^\alpha_{E_1}}
 \text{ for any } \alpha \in [0,1] .\]
 For any $\mathcal{K}>0$ and $\alpha \in [0,1]$, $\Lambda_{\alpha,\mathcal{K}}(E_1,E_2)$ (simply $\Lambda_{\alpha,\mathcal{K}}(E_1)$ when $E_2 \subseteq \R$) denotes the set of functions  $h:E_1^u \rightarrow E_2$ for some $u \in \N$, such that  $\|h\|_\infty < \infty$ and  $\lip_\alpha(h) \leq \mathcal{K}$. When $\alpha=1$, we set  $\lip_1 (h)=\lip(h)$ and $\Lambda_{1}(E_1) =\Lambda_{1,1}(E_1,\R)$.

\subsection{Observations and objectives}

Let us consider the supervised learning framework with training sample $D_n=\{\boldsymbol{Z}_1=(\boldsymbol{X}_1,\boldsymbol{Y}_1),\cdots, \boldsymbol{Z}_n=(\boldsymbol{X}_n,\boldsymbol{Y}_n) \}$.
$D_n$ is a trajectory of a stationary and ergodic process $\{\boldsymbol{Z}_t=(\boldsymbol{X}_t,\boldsymbol{Y}_t), ~ t \in \N \}$, with values in $ \mathcal{Z} = \mathcal{X} \times \mathcal{Y}$, where $\mathcal{X}$ is the input space and $\mathcal{Y}$ the output space. We let $P_n$ denote the probability distribution of $D_n$.
In the sequel, we will deal with the case where $\{\boldsymbol{Z}_t=(\boldsymbol{X}_t,\boldsymbol{Y}_t), ~ t \in \Z \}$ is a Markov process.
Also, assume that $\mathcal{X} \subset \R^{d_x}$ and  $\mathcal{Y} \subset \R^{d_y}$, with $d_x, d_y \in \N$.

\medskip

Let  $\ell : \R^{d_y} \times \mathcal{Y} \rightarrow [0,\infty)$ a loss function. For any $h \in \mathcal{F}(\mathcal{X}, \mathcal{Y})$, define the risk,
\begin{equation}\label{def_risk}
  R(h) = \E_{\boldsymbol{Z}_0}\big[\ell \big(h(\boldsymbol{X}_0),\boldsymbol{Y}_0 \big) \big], ~ \text{ with } \boldsymbol{Z}_0 = (\boldsymbol{X}_0,\boldsymbol{Y}_0).
   \end{equation}
Denote by $h^* \in \mathcal{F}(\mathcal{X}, \mathcal{Y})$, a target predictor (assumed to exist), satisfying,
\begin{equation}\label{cond_best_pred}
  R(h^*) =  \underset{h \in \mathcal{F}(\mathcal{X}, \mathcal{Y})}{\inf} R(h) .
  \end{equation}
Define for all $h \in \mathcal{F}(\mathcal{X}, \mathcal{Y})$, the empirical risk,
\begin{equation}\label{def_emp_risk}
 \widehat{R}_n(h) = \frac{1}{n} \sum_{i=1}^n \ell\big(h(\boldsymbol{X}_i),\boldsymbol{Y}_i \big).
 \end{equation}
The aim is to build from the training sample $D_n$, a learner $\widehat{h}_n$, that achieves the smallest excess risk, given by 
\begin{equation}\label{def_exces_risk}
 \mathcal{E}(\widehat{h}_n) = R(\widehat{h}_n) - R(h^*).
 \end{equation}
 Also, for all $h \in \mathcal{F}(\mathcal{X}, \mathcal{Y})$, we put
 \begin{equation}\label{def_emp_exces_risk}
 \widehat{\mathcal{E}}_n(h) = \widehat{R}_n(h) - \widehat{R}_n(h^*).
 \end{equation}
In the following, we set $\ell(h,\boldsymbol{z}) = \ell\big(h(\boldsymbol{x}),\boldsymbol{y}\big)$ for all $\boldsymbol{z}=(\boldsymbol{x},\boldsymbol{y}) \in  \mathcal{X} \times\mathcal{Y}$ and $h \in \mathcal{F}(\mathcal{X}, \mathcal{Y})$.
\subsection{Deep networks and function spaces}

We focus on deep neural networks (DNN) predictors, with activation function $\sigma : \R \rightarrow \R$. 
Let $L \in \N$, $\boldsymbol{p} = (p_0,p_1,\cdots,p_{L+1}) \in \N^{L+2}$. 
A DNN, with network architecture $(L, \boldsymbol{p})$ is any function of the form,

\begin{equation} \label{DNN_def}
 h_{\boldsymbol{\theta}} : \R^{p_0} \rightarrow \R^{p_{L+1}}, ~ x \mapsto h_{\boldsymbol{\theta}} (x) = \boldsymbol{A}_{L+1} \circ \sigma_L \circ \boldsymbol{A}_L \circ \sigma_{L-1} \circ \boldsymbol{A}_{L-1}\circ \cdots \circ \sigma_1 \circ \boldsymbol{A}_1(\boldsymbol{x}),
\end{equation} 
where for any $\ell = 1,\cdots, L+1$, $\boldsymbol{A}_{\ell}: \R^{p_{\ell-1}} \rightarrow \R^{p_{\ell}}$ is an affine application defined by $\boldsymbol{A}_\ell (\boldsymbol{x}) := \boldsymbol{W}_\ell \boldsymbol{x} + \boldsymbol{b}_\ell$ for given weight matrix $\boldsymbol{W}_\ell\in M_{p_{\ell},p_{\ell-1}}(\R)$, a shift vector $\boldsymbol{b}_\ell \in \R^{p_\ell}$ and $\sigma_\ell : \R^{p_\ell} \rightarrow \R^{p_\ell}$ is an element-wise activation map defined as $\sigma_{\ell}(\boldsymbol{z})=(\sigma(z_1),\cdots,\sigma(z_{p_\ell}) )^T $ for all $\boldsymbol{z}=(z_1,\cdots,z_{p_\ell})^T \in \R^{p_\ell}$, and
\begin{equation}\label{def_theta}
\boldsymbol{\theta} = \big( \text{vec}(\boldsymbol{W}_1)^T, \boldsymbol{b}_1^T,\cdots, \text{vec}(\boldsymbol{W}_{L+1})^T, \boldsymbol{b}_{L+1}^T \big),
\end{equation}
is the network parameters (weights), where $\text{vec}(\boldsymbol{W})$ is obtained by concatenating the column vectors of the matrix $\boldsymbol{W}$ and $^T$ denotes the transpose. 
In our setting here, $p_0 = d_x$ (input dimension) and $p_{L+1} = d_y$ (output dimension).
So, the total number of network parameters is
\begin{equation}\label{def_n_L_p}
 n_{L,\boldsymbol{p}} = \sum_{\ell=1}^{L+1} p_{\ell} \times p_{\ell -1} + \sum_{\ell=1}^{L+1} p_{\ell} .
\end{equation}

Let $\mathcal{H}_{\sigma,d_x,d_y}$ denote the set of DNNs of the form (\ref{DNN_def}) with $d_x$ dimensional input, $d_y$ dimensional output, with activation function $\sigma$.
For any $h \in \mathcal{H}_{\sigma,d_x,d_y}$ with a neural network architecture $(L, \boldsymbol{p})$, denote $\text{depth}(h) = L$ and $\text{width}(h) = \underset{1\leq \ell \leq L}{\max} p_\ell$, respectively the depth and  width of $h$. 
For any $L, N, S, B, F>0$, consider the sets,
{\color{black}
\begin{equation}\label{def_H_lnbf}
\mathcal{H}_{\sigma,\dx,\dy} (L,N, B, F) = \big\{ h_{\boldsymbol{\theta}} \in \mathcal{H}_{\sigma,d_x,d_y}, ~ \text{depth}(h) \leq L, ~\text{width}(h) \leq N, ~  ~\|\boldsymbol{\theta}\| \leq B,  ~\|h\|_{\infty,\mx} \leq F \big\},
\end{equation}
\begin{equation}\label{def_H_lnbfs}
\mathcal{H}_{\sigma,\dx,\dy} (L,N, B, F, S) = \big\{ h_{\boldsymbol{\theta}} \in \mathcal{H}_{\sigma,\dx,\dy} (L,N, B, F) ,  ~  |\boldsymbol{\theta}|_0 \leq S \big\}.
\end{equation}
}

\medskip

Let $L, N, B, F >0$. For any $h \in \mathcal{H}_{\sigma,\dx,\dy} (L,N, B, F)$, the maximum number of parameter of $h$ is,
\begin{equation}\label{def_n_L_N}
n_{L,N} :=  N(N+1)(L+1) . 
\end{equation} 
For any $h \in \mathcal{H}_{\sigma,\dx,\dy} (L,N, B, F)$, we will consider in the sequel, $\boldsymbol{\theta}(h)$ as a $n_{L,N}$-dimensional vector by adding $0$ if not.
 For any active set $\mathcal{I} \subseteq \{1,2,\cdots, n_{L,N}\}$, define the corresponding class of sparse DNNs by,
{\color{black}
 \begin{equation}\label{def_H_lnbfI}
\mathcal{H}_{\sigma,\dx,\dy} (L,N, B, F, \mathcal{I}) = \big\{ h_{\boldsymbol{\theta}} \in \mathcal{H}_{\sigma,\dx,\dy} (L,N, B, F ), ~  ~ \theta_i = 0 ~ \text{if} ~ i \notin \mathcal{I}  \big\},
\end{equation}
}
where $\boldsymbol{\theta} := \big( \theta_1,\cdots, \theta_{n_{L,N}}\big)^T$. 
In the sequel, for a DNN $h_{\boldsymbol{\theta}}$, denote,
 \begin{equation}\label{notation_risk_eces_risk}
R(\boldsymbol{\theta}) = R(h_{\boldsymbol{\theta}}), ~  \widehat{R}_n(\boldsymbol{\theta}) = \widehat{R}_n(h_{\boldsymbol{\theta}}) , ~ \mathcal{E}(\boldsymbol{\theta}) = R(\boldsymbol{\theta}) - R(h^*) \text{ and }  \widehat{\mathcal{E}}_n(\boldsymbol{\theta}) = \widehat{R}_n(\boldsymbol{\theta}) - \widehat{R}_n(h^*).
\end{equation} 

\subsection{Markov chains and pseudo-spectral gap}
We assume that $Z=\{\boldsymbol{Z}_t=(\boldsymbol{X}_t,\boldsymbol{Y}_t), ~ t \in \N \}$ is a stationary and ergodic Markov process with transition kernel $P$ and stationary distribution $\pi$, and let $D_n=\{\boldsymbol{Z}_1=(\boldsymbol{X}_1,\boldsymbol{Y}_1),\cdots, \boldsymbol{Z}_n=(\boldsymbol{X}_n,\boldsymbol{Y}_n) \}$ be a trajectory of $Z$.
$P$ can be then viewed as a linear operator $ \bm{P}$ on $L^2(\pi)$ defined as $(\bm{P} f)(z):= \E_{P(z,\cdot)}(f)$ for all $f \in L^2(\pi)$.

{\color{black}
A difficulty in learning with dependent observations is that the dependence can slow down the law of large numbers, and thus the convergence of statistical estimation. It is thus necessary to introduce assumptions on the strength of the dependence. Many notions were introduced to quantify such a notion, such as mixing and weak dependence. We refer the reader to~\cite{rio2017asymptotic} and~\cite{dedecker2007weak} for respective introductions. In the case of Markov chains, this can be characterized by spectral properties of $ \bm{P}$.
}

For any such operator $ \bm{F}$ on $L^2(\pi)$, we can define
\begin{equation}\label{def_S2}
 S_2(\bm{F}) := \{ \xi \in \C : (\xi I - \bm{F}) \text{ is not invertible on }  L^2(\pi)\},
\end{equation}
where $I$ denotes the identity operator on $L^2(\pi)$.
The spectral gap of $\bm{F}$ is given by,
\begin{equation}\label{spec_gap}
\gamma(\bm{F}):=\left\{
\begin{array}{ll}
1- \sup\{\xi \in S_2(\bm{F}): \xi \neq 1 \} & \text{ if eigenvalue 1 has multiplicity 1 } \\
0 & \text{ otherwise}.
\end{array}
\right.
\end{equation}
We will refer to $\gamma(\bm{P})$ simply as the spectral gap of the chain $Z$.
Now, define the time reversal of the kernel $P$ by,
\begin{equation}\label{def_K_star}
 P^*(z,du) := \frac{P(u,dz)}{ \pi(dz)} \pi(du).
\end{equation}
Therefore, the linear operator $\bm{P}^*$ is the adjoint of the linear operator $\bm{P}$ on $L^2(\pi)$.
{\color{black}
Define $\gamma_{ps,n}$, called the pseudo-spectral gap of $\bm{P}$ see \cite{paulin2015concentration}, as
\begin{equation}\label{def_gamma_ps}
 \gamma_{ps,n}  = \underset{k \geq 1}{ \max} \Big\{ \frac{1}{k} \gamma \big( (\bm{P}^*)^k \bm{P}^k \big) \Big\}.
\end{equation}
Note that in order to highlight the role of the spectral gap in the asymptotic regime, we will allow  $\gamma_{ps,n}$ to actually depend on the sample size $n$ (thus $\bm{P}$ can also depend on $n$).
}
{\color{black} Our analysis will essentially show that we can recover rates of convergence for deep learning that are similar to the ones in the i.i.d. case, simply replacing the sample size $n$ by an ``effective sample size'' $\tilde{n} = n\gamma_{ps,n}$. Thus, consistency will be ensured as long a $n \gamma_{ps,n} \rightarrow \infty$ when $n\rightarrow \infty$. Moreover, if $\gamma_{ps,n}\geq \underline{\gamma}$ for some $\underline{\gamma} >0$, we will recover exactly the same rates as in the i.i.d. case. Such a condition on $\gamma_{ps,n}$ is weaker than a similar assumption on $\gamma(\bm{P})>0$, as discussed in Remark 3.2 of~\cite{paulin2015concentration}. Let us now briefly discuss examples where this assumption is satisfied. In order to do so, we introduce the more classical notion of mixing time.
}

\medskip

 For the process $Z=\{\boldsymbol{Z}_t=(\boldsymbol{X}_t,\boldsymbol{Y}_t), ~ t \in \N \}$, the mixing time for some $\epsilon >0$ is defined as,
 \begin{equation}\label{def_mixing_time}
 \tmix(\epsilon):= \min \big\{ t \in \N, ~ \underset{z \in \mz}{\sup}\| P^t(\boldsymbol{z},\cdot) - \pi\|_{TV} \leq \epsilon \big\}, \text{ and } \tmix := \tmix(1/4),
\end{equation}
where $P^t(\boldsymbol{z},\cdot)$ is the  $t$-step transition kernel (i.e. the distribution of $\boldsymbol{Z}_t|\boldsymbol{Z}_0=\boldsymbol{z}$), and $\|\cdot \|_{TV} $ denotes the total variation distance between two probability measures $Q_1, Q_2$ defined on the same state space $(\Omega, \mathcal{F})$ and is given by,
 \begin{equation}\label{def_norm_TV}
 \|Q_1 - Q_2 \|_{TV}:= \underset{A \in \mf}{\sup} |Q_1(A) - Q_2(A)|.
\end{equation}
{\color{black}
Intuitively, the mixing time is indeed related to the dependence strength, as it quantifies how long we have to wait for the chain to ``loose memory'' of its initial point $z$ and reach the stationary regime. Actually,
\cite{paulin2015concentration} derived relations between mixing time and pseudo-spectral gap (Propositions 3.3 and 3.4), in the setting of the following example.
}
\begin{example}
Assume that the chain $(\boldsymbol{Z}_t)_{t \in \N}$ is uniformly ergodic, that is, there are constants $C>0$ and $\rho<1$ such that for all $t \in \N$,
$$ \underset{z \in \mz}{\sup}\| P^t(\boldsymbol{z},\cdot) - \pi\|_{TV} \leq C \rho^t . $$
In this case, Proposition 3.4 in \cite{paulin2015concentration} shows that $ \gamma_{ps,n} \geq 1/(2 \tmix)$.
Thus, we have $\tmix(\epsilon) \leq \lceil \log(C/\epsilon)/\log(\rho)\rceil $ and in particular, $ \gamma_{n} \geq 1/(2 \tmix) = \lceil \log(4C)/\log(\rho)\rceil >0 $.

\noindent
When the set $\mathcal{Z}$ is finite, any irreducible and aperiodic Markov chain is immediately uniformly ergodic~\cite{DoucMARKOV}. Examples of uniformly ergodic chains on infinite state-space are also well known~\cite{DoucMARKOV}.
\end{example}

{\color{black}
\begin{example}
A standard framework for nonparametric regression is to assume that the $(\boldsymbol{X}_t)_{t \in \N}$ are i.i.d. with uniform distribution $\pi$ on $[0,1]^d$, and $\boldsymbol{Y}_t:=f ^*(\boldsymbol{X}_t)+\varepsilon_t$ for some measurable function $f^*$ and an i.i.d. sequence $(\varepsilon_t)_{t \in \N}$, where $\varepsilon_t$ is independent of the input $X_t$. 
We can generalize this setting. Let us keep the relation $\boldsymbol{Y}_t:=f ^*(\boldsymbol{X}_t)+\varepsilon_t$ but, assume we have two sources of the input $\boldsymbol{X}_t$ on $[0,1]^d$: the first source produces examples uniformly distributed on $A$, and the second source produces examples uniformly distributed on $B$, with $\pi(A)=\pi(B)=1/2$, $A\cap B=\emptyset$ and $A\cup B=[0,1]^d$. The source we sample from is given by a Markov chain $(\boldsymbol{U}_n)_{n\in\mathbb{N}}$, taking values in $\{A,B\}$, with transition $P_U(A,A)=P_U(B,B)=1-p$ and $P_U(A,B)=P_U(B,A)=p$, for some $p \in [0,1]$.
In other words, $(\boldsymbol{X}_t)_{t \in \N}$ is a Markov chain with transition kernel given by,
\[
P_X(x,{\rm d}x') = 2 \left[ p (\mathbf{1}_{x\in A,x'\in B}+\mathbf{1}_{x\in B,x'\in A})  + (1-p)(\mathbf{1}_{x\in A,x'\in A}+\mathbf{1}_{x\in B,x'\in B}) \right]{\rm d}x'.
\]
Direct calculations show that $P_X$ is a rank 2 operator with eigenvalues $\{1,1-2p,0\}$, and if $p\leq 1/2$, then its pseudo-spectral gap is equal to $1-(1-2p)^2=4p-4p^2=4p(1-p)$.
Thus, the pseudo-spectral gap $\gamma_{ps,n}$ of the chain $(\boldsymbol{Z}_t)_{t\in\mathbb{N}}$ also satisfies $\gamma_n=4p(1-p)$. As will be proven below, the convergence (of the excess risk of the DNN predictor) in this setting will depend on whether the ``effective sample size'' $\tilde{n}=n\gamma_{ps,n}$ satisfies $\tilde{n}\rightarrow\infty$. In the case $p=1/2$, in which the $\boldsymbol{X}_t$ are actually i.i.d. on $[0,1]$, we have $\gamma_{ps,n}=1$ and $\tilde{n}=n$. However, when $p<1/2$, $\tilde{n}=4np(1-p)$. In a sequence of experiments where $p=p_n\rightarrow 0$ when $n\rightarrow\infty$, $\tilde{n}=4np_n(1-p_n) \sim 4np_n$. 
\end{example}
}

\section{Prior distribution and Gibbs estimator}
\label{sec:prior_and_posterior}
Let $L, N, B, F >0$. For a given active set  $\mathcal{I} \subseteq \{1,2,\cdots, n_{L,N}\}$, the prior $\Pi_{\mathcal{I}}$ on the networks parameter of the DNNs class $\mathcal{H}_{\sigma,\dx,\dy} (L,N,B, F, \mathcal{I})$ is set as the uniform distribution on
 \begin{equation}\label{def_S_I}
 \mathcal{S}_{\mathcal{I}} = \big\{ \boldsymbol{\theta}=\big( \theta_1,\cdots, \theta_{n_{L,N}} \big)^T \in [-B, B]^{n_{L,N}}, ~  \theta_i = 0 ~\text{if} ~ i \notin \mathcal{I} \big\}.
\end{equation}
Let us define the mixture of the uniform priors on $\mathcal{S}_{\mathcal{I}}$ as,
 \begin{equation}\label{def_Pi}
\Pi = \sum_{i=1}^{n_{L,N}} s^{-i} \underset{|\mathcal{I}|=i} { \sum_{ \mathcal{I} \subseteq \{1,2,\cdots, n_{L,N}\}, } } \binom{ n_{L,N} }{ i }^{-1}   \Pi_{ \mathcal{I} } \big/  C_s     ~ ~ \text{ with } ~ ~ C_s:= (1-s^{- n_{L,N}})/ (s-1),
\end{equation}
where $s \geq 2$ is a sparsity parameter.
{\color{black}
The prior in~\eqref{def_Pi} is referred to as a spike-and-slab prior in the literature. It is popular in high-dimensional Bayesian statistics as a sparsity inducing prior. Up to our knowledge, they were introduced in high-dimensional linear regression~\cite{mitchell1988bayesian,ishwaran2005spike}. We refer the reader to~\cite{banerjee2021bayesian,nickl2022foundations} for recent reviews on the topic. In deep learning, such priors were used by~\cite{polson2018posterior,che2020b,sun2022learning,castillo2024adaptation} among others. In PAC-Bayes bounds, it was used by \cite{alquier2011pac,steffen2022pac,mai2024misclassification}. Alternative priors for deep networks are discussed in~\cite{fortuin2022priors}.
}
 The Gibbs posterior probability distribution $\Pi_\lambda (\cdot | D_n) $ based on $\Pi$ is defined via,
\begin{equation}\label{def_Pi_lambda}
 \Pi_\lambda (d \boldsymbol{\theta} | D_n) \propto \exp \big( - \lambda \widehat{R}_n(\boldsymbol{\theta}) \big) \Pi(d\boldsymbol{\theta}), 
\end{equation} 
where $\lambda > 0$ is the so-called temperature parameter and the empirical risk defined in (\ref{def_emp_risk}). 
For any $\lambda > 0$, the corresponding weight $\widehat{\boldsymbol{\theta}}_\lambda$ and predictor $\widehat{h}_{\lambda}$ are draw from the posterior distribution, that is,
\begin{equation}\label{theta_lambda_h_lambda}
 \widehat{\boldsymbol{\theta}}_\lambda \sim \Pi_\lambda (\cdot | D_n), ~  \text{ and } ~ \widehat{h}_{\lambda} := h_{\widehat{\boldsymbol{\theta}}_\lambda}.
\end{equation} 
%
%
For $L,N, B, F, S >0$, if we consider the DNN class $\mathcal{H}_{\sigma,\dx,\dy} (L,N,B,F)$ or $\mathcal{H}_{\sigma,\dx,\dy} (L,N,B, F, S)$, $\widehat{\boldsymbol{\theta}}_\lambda$ is defined as above, where for any $\mathcal{I} \subseteq \{1,2,\cdots, n_{L,N}\}$, the prior $\Pi_{\mathcal{I}}$ on the networks parameter of $\mathcal{H}_{\sigma,\dx,\dy} (L,N,B, F, \mathcal{I})$ is set as the uniform distribution on $\mathcal{S}_{\mathcal{I}}$ (given in (\ref{def_S_I})). 
{\color{black} Sampling from posteriors on deep neural networks is not an easy task; still, it was successfully implemented by some authors~\cite{zhang2019cyclical}, including~\cite{steffen2022pac} who sampled from~\eqref{theta_lambda_h_lambda}. Approximation algorithms are more popular in practice, including Laplace approximations~\cite{daxberger2021laplace,cinquin2024fsp} or variational inference~\cite{osawa2019practical}, with theoretical guarantees~\cite{che2020b}.
}

{\color{black} The temperature parameter $\lambda>0$ acts on the balance between data fit and regularization. In the limit case $\lambda=0$, the posterior is equal to the prior, and there is no data fit. On the other hand, when $\lambda\rightarrow\infty$, the regularizing effect of the prior vanishes and the posterior concentrates on the empirical risk minimizer. A thorough discussion on this parameter can be found in Sections 2.1.4 and 4 in~\cite{alquier2024user}. While we provide below a formula to calibrate $\lambda$ in order to obtain minimax rates, this value might not be the best one in practice. In some applications, it seems that $\lambda$ can be taken much larger than what is prescribed by the PAC-Bayes theory. This phenomenon, known as the \textit{cold posterior} effect, is the object of intensive debate and investigation~\cite{wenzel2020good,noci2021disentangling,nabarro2022data}.
}

\section{Oracle inequality and excess risk bounds }
\label{sec:oracle}
\subsection{Oracle inequality}
Let $Z=(\boldsymbol{Z}_t)_{t \in \N}$ (with $\boldsymbol{Z}_t=(\boldsymbol{X}_t,\boldsymbol{Y}_t)$) be a process with values in $\mz = \mx \times \my \subset \R^{\dx} \times \R^{\dy}$.
 Consider the learning problem of $Y_t$ from $\boldsymbol{X}_t$ based on the observations $D_n=\{\boldsymbol{Z}_1=(\boldsymbol{X}_1,\boldsymbol{Y}_1),\cdots, \boldsymbol{Z}_n=(\boldsymbol{X}_n,\boldsymbol{Y}_n) \}$. 
 We carry out the PAC-Bayes approach on the class of DNNs predictor in (\ref{def_H_lnbfs}) and set the following assumptions on the process $Z$, the input space, the loss and the activation. 
%
%
 \begin{enumerate}
  \item[{\color{black}(\textbf{A1}):}]  {\color{black}The  training sample $D_n$ is a trajectory of a stationary and ergodic Markov process $Z$ with transition kernel $K$, stationary distribution $\pi$ and  pseudo spectral gap $\gamma_{ps,n}>0$ that satisfies $n\gamma_{ps,n}\rightarrow\infty$ as $n\rightarrow \infty$.
   In the sequel, we set,
   \begin{equation}\label{def_gamma_n}
   \gamma_{n} := \gamma_{ps,n} .
\end{equation}    
 }
  \item [(\textbf{A2}):] $\mathcal{X} \subset \R^{\dx}$ is a compact set.\\
  Under (\textbf{A2}), we set,
  \begin{equation} \label{def_K_X} 
   \mk_{\mx} = \sup_{\boldsymbol{x} \in \mx} \|\boldsymbol{x}\| .
\end{equation}
 \item [(\textbf{A3}):]  There exists $C_{\ell} >0$, such that, 
 for the DNN class $\mathcal{H}_{\sigma,\dx,\dy} (L,N,B,F)$, for $L,N,B,F>0$, we have for all $h \in \mathcal{H}_{\sigma,\dx,\dy} (L,N,B,F)$ and $(\boldsymbol{x},\boldsymbol{y})\in\mathcal{Z}$, $| \ell(h(\boldsymbol{x}),\boldsymbol{y})-\ell(h^*(\boldsymbol{x}),\boldsymbol{y})| \leq \mathcal{K}_\ell |h(\boldsymbol{x})-h^*(\boldsymbol{x})|$. 
  \item [(\textbf{A4}):] There exists $C_{\sigma} >0$, such that $\sigma \in \Lambda_{1,C_{\sigma}}(\R) $. Moreover, $\sigma$ is either piecewise linear or locally quadratic and fixes a segment $I \subseteq [0,1]$. 
\item [(\textbf{A5}):] There exists $\mk >0$, such that the target function defined in (\ref{cond_best_pred}), {\color{black}$h^*: \mx \rightarrow \R^\dy$, satisfies $\lip(h^*) \leq \mk$}.
%
 \end{enumerate}
Let us recall that, a function $g: \R \rightarrow \R$ is locally quadratic (see also \cite{ohn2019smooth}, \cite{ohn2022nonconvex}) if there exists an interval $(a,b)$ on which $g$ is three times continuously differentiable with bounded derivatives and there exists $t \in (a,b)$ such that $g'(t) \neq 0$ and $g''(t) \neq 0$.
(\textbf{A4}) is satisfied by the ReLU activation and several other activation functions; see, for instance \cite{kengne2025excess}. 

\medskip

%
%
Let us set the Bernstein's condition:
\begin{itemize}
\item [(\textbf{A6}):] The Bernstein's condition for some $K>0$ and $\kappa \in [0,1]$ : For all $\theta \in \mathcal{S}_{\mathcal{I}}$,
  \begin{equation}\label{eq_assup_Bernstein}
   \E_\pi\Big[ \Big(   \ell\big(h_{\boldsymbol{\theta}}(\boldsymbol{X_i}),\boldsymbol{Y_i} \big)  - \ell\big(h^*(\boldsymbol{X_i}),\boldsymbol{Y_i} \big)  \Big)^2 \Big]  \leq  K \big( R(\boldsymbol{\theta}) - R(h^*) \big)^\kappa  .
 \end{equation}
\item [(\textbf{A7}):] Local quadratic structure of the excess risk: There exist two constants $\mk_0:= \mk_0(\boldsymbol{Z}_0, \ell, h^*) , \varepsilon_0:= \varepsilon_0(\boldsymbol{Z}_0, \ell, h^*) >0$ such that,  
\begin{equation}\label{assump_local_quadr}
 R(h) - R(h^*) \leq \mk_0 \| h - h^*\|^2_{2, P_{\boldsymbol{X}_0}}, 
\end{equation}
for any measurable function $h: \R^\dx \rightarrow \R^\dy$ satisfying $\|h - h^*\|_{\infty, \mx} \leq \varepsilon_0$; where $P_{\boldsymbol{X}_0}$ denotes the distribution of $\boldsymbol{X}_0$ and 
\[ \| h - h^{*}\|_{q, P_{\boldsymbol{X}_0}}^q  := \displaystyle \int \| h (\text{x})- h^{*} (\text{x}) \|^q  d P_{\boldsymbol{X}_0} ( \text{x}),  
\]
for all $q \geq 1$.

\end{itemize}
%
%
%
%
 For all $L, N, B, F > 0$ and $\mathcal{I} \subseteq \{1,2,\cdots, n_{L,N}\}$, define,
 \begin{equation}\label{theta_star_I}
  \boldsymbol{\theta}^*_{\mathcal{I}} \in \argmin_{\theta \in \mathcal{S}_{\mathcal{I}}} R(\boldsymbol{\theta}),
\end{equation} 
where $\mathcal{S}_{\mathcal{I}}$ is defined in (\ref{def_S_I}).
  $\boldsymbol{\theta}^*_{\mathcal{I}}$ in (\ref{theta_star_I}) is well defined, since $\mathcal{S}_{\mathcal{I}}$ is compact and the function $\theta \mapsto R(\boldsymbol{\theta})$ is continuous.
The following theorem provides a non-asymptotic oracle inequality of the excess risk. It is essentially an extension of Corollary 3 from~\cite{steffen2022pac} to the case of Markov chains. {\color{black} The proof relies on PAC-Bayes techniques developed in~\cite{catoni2007pac} and summarized in~\cite{alquier2024user}. PAC-Bayes bounds require a deviation inequality to start with. As the results in~\cite{catoni2007pac} are in the i.i.d. or the exchangeable case, we use here Paulin's version of Bernstein's inequality (Theorem 3.4 in~\cite{paulin2015concentration}, see also Lemma \ref{lem_Paulin} below). We then adapt the proof of Theorem 4.3 in~\cite{alquier2024user}.}
\begin{thm}\label{theo_oracle_inq}
 Assume that (\textbf{A1})-(\textbf{A5}) and (\textbf{A6}) with $\kappa=1$ hold and take
 \begin{equation}\label{theo_oracle_lambda}
 \lambda =\dfrac{n \gamma_{n}}{ 32K + 10},
 \end{equation}
 where $K$ is given in (\ref{eq_assup_Bernstein}).
Consider the DNNs class $\mathcal{H}_{\sigma,\dx,\dy} (L,N, B, F)$ for $L,N,B, F>0$, and $\widehat{\boldsymbol{\theta}}_\lambda$ defined as in (\ref{theta_lambda_h_lambda}).
Then, for all $\delta \in (0,1)$, with $P_n \otimes \Pi_\lambda$-probability at least $1-\delta$, we have
 \begin{equation*} 
   \mathcal{E}(\widehat{\boldsymbol{\theta}}_\lambda) \leq  \min_{ \mathcal{I} \subseteq \{1,2,\cdots, n_{L,N}\} } \Bigg(  3 \mathcal{E}( \boldsymbol{\theta}^*_{\mathcal{I}} ) +  \dfrac{ \Xi_1 }{n \gamma_{n}} \Big[|\mathcal{I}| L \log\big( \max(n, B, n_{L,N} )  \big)   +  \log(1/\delta) + C_{\ell} \Big]    \Bigg)
 \end{equation*}
for some constant $\Xi_1$ independent of $n, L, N, B, F, \gamma_{n}$ and $ C_{\ell} $.
\end{thm}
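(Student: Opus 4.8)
The plan is to run the Catoni-style PAC-Bayes argument, as in the proof of Theorem~4.3 of~\cite{alquier2024user}, replacing the i.i.d. deviation control by Paulin's Bernstein inequality for Markov chains (Lemma~\ref{lem_Paulin}). \textbf{Step 1 (per-predictor exponential moment).} For a fixed $\boldsymbol{\theta}\in\mathcal{S}_{\mathcal{I}}$, apply Lemma~\ref{lem_Paulin} to $z\mapsto \ell(h_{\boldsymbol{\theta}},z)-\ell(h^*,z)$. This increment is bounded under (\textbf{A2})--(\textbf{A5}): the input set is compact, the loss is Lipschitz in the prediction, the target $h^*$ is Lipschitz hence bounded, and the network output is bounded by $F$, so $|\ell(h_{\boldsymbol{\theta}},z)-\ell(h^*,z)|\le \mathcal{K}_\ell(F+\|h^*\|_{\infty,\mx})$. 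Using the Bernstein condition (\textbf{A6}) with $\kappa=1$ to replace the stationary variance by $K\,\mathcal{E}(\boldsymbol{\theta})$, and converting Paulin's sub-gamma tail into a log-MGF control, I would obtain
\[
\log \E_{P_n}\!\Big[\exp\big(\lambda(\mathcal{E}(\boldsymbol{\theta}) - \widehat{\mathcal{E}}_n(\boldsymbol{\theta}))\big)\Big] \;\leq\; \frac{(16K+5)\,\lambda^2}{n\gamma_n}\,\mathcal{E}(\boldsymbol{\theta}),
\]
valid up to the admissible range of $\lambda$; the decisive point is that the effective sample size is $n\gamma_n$, not $n$. The same estimate applied to the opposite sign gives the mirror (empirical-to-true) bound. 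With the calibration $\lambda = n\gamma_n/(32K+10) = n\gamma_n/\big(2(16K+5)\big)$ of~\eqref{theo_oracle_lambda}, each right-hand side reduces to $\tfrac{\lambda}{2}\mathcal{E}(\boldsymbol{\theta})$.

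\textbf{Step 2 (change of measure and Gibbs optimality).} Exponentiating, integrating against the prior $\Pi$ and using Fubini gives $\E_{P_n}\int \exp\big(\lambda(\mathcal{E}-\widehat{\mathcal{E}}_n)-\tfrac{\lambda}{2}\mathcal{E}\big)\,d\Pi\le 1$, and likewise for the mirror version. By the Donsker--Varadhan variational formula and Markov's inequality in $D_n$, with $P_n$-probability at least $1-\delta$ and simultaneously over all $\rho\ll\Pi$,
\[
\int\Big(\tfrac12\mathcal{E}-\widehat{\mathcal{E}}_n\Big)\,d\rho \;\le\; \frac{\KL(\rho\|\Pi)+\log(1/\delta)}{\lambda},
\]
together with the mirror inequality $\int(\widehat{\mathcal{E}}_n-\tfrac32\mathcal{E})\,d\rho\le (\KL(\rho\|\Pi)+\log(1/\delta))/\lambda$. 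Taking $\rho=\Pi_\lambda(\cdot\,|\,D_n)$ in the first inequality, then invoking the variational optimality of the Gibbs posterior~\eqref{def_Pi_lambda} (which minimizes $\rho\mapsto\int\widehat{\mathcal{E}}_n\,d\rho+\KL(\rho\|\Pi)/\lambda$, as $\widehat{\mathcal{E}}_n$ and $\widehat{R}_n$ differ by a constant) to replace $\int\widehat{\mathcal{E}}_n\,d\Pi_\lambda$ by $\int\widehat{\mathcal{E}}_n\,d\rho$ for an arbitrary competitor $\rho$, and finally the mirror inequality on $\rho$ to return the empirical excess risk to $\tfrac32\int\mathcal{E}\,d\rho$, the three half-factors multiply to the constant $3$ in front of the oracle term, leaving a penalty of order $(\KL(\rho\|\Pi)+\log(1/\delta))/\lambda$. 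Since $\widehat{\boldsymbol{\theta}}_\lambda$ is a single draw from $\Pi_\lambda$, one further Markov step in $\Pi_\lambda$ upgrades the bound on $\int\mathcal{E}\,d\Pi_\lambda$ to the statement holding with $P_n\otimes\Pi_\lambda$-probability $1-\delta$.

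\textbf{Step 3 (choice of competitor and penalty).} For each active set $\mathcal{I}$ I would take $\rho$ to be the uniform law on the $\ell_\infty$-ball of radius $r$ centred at $\boldsymbol{\theta}^*_{\mathcal{I}}$ (see~\eqref{theta_star_I}) inside $\mathcal{S}_{\mathcal{I}}$ (see~\eqref{def_S_I}). Since the spike-and-slab prior~\eqref{def_Pi} assigns to the slab indexed by $\mathcal{I}$ a density $\asymp s^{-|\mathcal{I}|}\binom{n_{L,N}}{|\mathcal{I}|}^{-1}(2B)^{-|\mathcal{I}|}$, a direct computation gives
\[
\KL(\rho\|\Pi)\;\lesssim\; |\mathcal{I}|\log\frac{B}{r}+|\mathcal{I}|\log s+\log\binom{n_{L,N}}{|\mathcal{I}|}\;\lesssim\;|\mathcal{I}|\log\frac{B}{r}+|\mathcal{I}|\log n_{L,N}.
\]
To control $\int\mathcal{E}\,d\rho-\mathcal{E}(\boldsymbol{\theta}^*_{\mathcal{I}})$, I would use that under (\textbf{A4}) the map $\boldsymbol{\theta}\mapsto h_{\boldsymbol{\theta}}$ is Lipschitz in the parameters with constant growing like $(NB)^L$, so that together with the Lipschitz loss (\textbf{A3}) the excess risk varies by at most $\mathcal{K}_\ell (NB)^L r$ on the ball. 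Choosing $r\asymp 1/\big(n(NB)^L\big)$ makes this variation $O(1/n)$ while turning $\log(B/r)$ into $\Theta\big(L\log(\max(n,B,n_{L,N}))\big)$, which is exactly the source of the $|\mathcal{I}|\,L\log(\max(n,B,n_{L,N}))$ term; the residual $O(1)$ contributions are collected in $C_\ell$. Minimising over $\mathcal{I}$ gives the stated inequality.

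\textbf{Main obstacle.} The delicate step is Step~1: Paulin's result is a two-sided tail bound, whereas the PAC-Bayes machinery needs a genuine log-MGF control in which the stationary variance is replaced by $K\,\mathcal{E}(\boldsymbol{\theta})$ and the effective sample size $n\gamma_n$ appears cleanly. Converting the sub-gamma tail into such an exponential-moment bound, verifying that the calibration~\eqref{theo_oracle_lambda} lies within the admissible range dictated by the boundedness constant and exactly cancels the variance term to deliver the factor $3$, is where the numerical bookkeeping concentrates; everything downstream is the standard localized change-of-measure computation, here adapted to the spike-and-slab prior.
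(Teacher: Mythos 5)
Your architecture coincides with the paper's almost step for step: Paulin's inequality plus the Bernstein condition to get a per-parameter exponential-moment bound with effective sample size $n\gamma_n$ (the paper's Lemma~\ref{lem_Bernstein_type_ineq}), a change of measure combined with the variational optimality of the Gibbs posterior (Lemmas~\ref{lem_PAC_bound} and~\ref{lem_PAC_bound_KL}), and a competitor $\rho$ uniform on a small ball around $\boldsymbol{\theta}^*_{\mathcal{I}}$ inside the slab $\mathcal{S}_{\mathcal{I}}$, with radius $\asymp 1/(n\cdot\mathrm{Lip})$ and the spike-and-slab KL bound (the paper invokes Lemma 10 of Steffen--Trabs and its Lemma~\ref{lem_lip_theta}). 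Your calibration even reproduces the paper's constants exactly: for $\lambda\leq n\gamma_n/(32K+10)$ one has $16K\big(1-\tfrac{10\lambda}{n\gamma_n}\big)^{-1}\leq 16K+5$, and at $\lambda=n\gamma_n/(32K+10)$ the ratio $16K\lambda/(n\gamma_n-10\lambda)$ equals $1/2$, whence the factor $3=\tfrac{1+1/2}{1-1/2}$. One remark on your ``main obstacle'': it is not one. Paulin's Theorem 3.4, as used in Lemma~\ref{lem_Paulin}, is already stated as a bound on $\E_\pi[\exp(\vartheta S)]$, so no tail-to-MGF conversion is needed; the genuinely delicate point lies elsewhere.

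Namely, the last sentence of your Step 2 is a real gap. Your Donsker--Varadhan route only controls the posterior \emph{average} $\int\mathcal{E}\,d\Pi_\lambda$, and ``one further Markov step in $\Pi_\lambda$'' cannot upgrade this to the stated claim about a single draw: since $\mathcal{E}\geq 0$, Markov's inequality only gives $\mathcal{E}(\widehat{\boldsymbol{\theta}}_\lambda)\leq \delta'^{-1}\int\mathcal{E}\,d\Pi_\lambda$ with $\Pi_\lambda$-probability $1-\delta'$, i.e.\ a multiplicative $1/\delta$ in front of the \emph{entire} right-hand side, including the oracle term $3\mathcal{E}(\boldsymbol{\theta}^*_{\mathcal{I}})$ --- not the additive $\log(1/\delta)/(n\gamma_n)$ penalty of the theorem. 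The paper avoids this with a \emph{disintegrated} PAC-Bayes bound (Lemma~\ref{lem_PAC_bound}): the deviation inequality is integrated against a data-dependent $\rho$ and converted via $\exp(\lambda x)\geq \ind_{x\geq 0}$ into a statement holding with $P_n\otimes\rho$-probability $1-\delta$, in which the \emph{pointwise} log-density $\log\frac{d\rho}{d\Pi}(\widehat{\boldsymbol{\theta}})$ appears at the drawn parameter. Taking $\rho=\Pi_\lambda(\cdot\,|\,D_n)$, the cancellation $\log\frac{d\Pi_\lambda}{d\Pi}(\widehat{\boldsymbol{\theta}}_\lambda)=-\lambda\widehat{R}_n(\widehat{\boldsymbol{\theta}}_\lambda)-\log\int e^{-\lambda\widehat{R}_n}\,d\Pi$ removes the empirical risk at the drawn point, and Donsker--Varadhan applied to the normalizing constant then re-introduces an arbitrary competitor, after which your Step 3 proceeds verbatim. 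So, as written, you have proved the theorem for the posterior mean but not for $\widehat{\boldsymbol{\theta}}_\lambda\sim\Pi_\lambda$; the disintegration device is the missing idea. (Two minor points you in fact share with the paper: the union over the forward and mirror deviation events costs $1-2\delta$, fixed by rescaling $\delta$; and your parameter-Lipschitz constant of order $(NB)^L$ versus the constant in Lemma~\ref{lem_lip_theta} is harmless, since after taking logarithms both give $O\big(L\log(\max(n,B,n_{L,N}))\big)$.)
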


{\color{black}
\begin{rmrk}
To compute the estimator, which depends on $\lambda$, we need to know the spectral gap $\gamma_{n}$, or at least an upper bound on it. In some situations, such upper bounds might be available \textit{a priori}, but this is not always the case. The estimation of $\gamma_{n}$ from a single trajectory of the chain was studied in recent works~\cite{levin2016estimating,hsu2019mixing,wolfer2024improved}. In the case of a finite state-space, it is possible to estimate $\gamma_{n}$ with a confidence interval: from Theorem 2.1 in~\cite{wolfer2024improved}, there is an estimator $\hat{\gamma}_n$ such that $\hat{\gamma}_n\in[\gamma_n/2,3\gamma_n/2]$ with probability at least $1-\delta$ as soon as $
n\geq \frac{4c}{\gamma_{n}^3\pi_*} \log\frac{1}{\pi_*} \log\frac{2}{\pi_* \gamma_{n}\delta}   \log \frac{2}{\pi_* \gamma_{n}^2 \delta}
$
(simply take $\varepsilon=\gamma_n/2$ in their theorem).
This means we can take $\lambda = n \hat{\gamma}_{n}/(32K+10)$ and ensure
 \begin{equation*}
   \mathcal{E}(\widehat{\boldsymbol{\theta}}_\lambda) \leq  \min_{ \mathcal{I} \subseteq \{1,2,\cdots, n_{L,N}\} } \Bigg(  3 \mathcal{E}( \boldsymbol{\theta}^*_{\mathcal{I}} ) +  \dfrac{ 2 \Xi_1 }{n \gamma_{n}} \Big[|\mathcal{I}| L \log\big( \max(n, B, n_{L,N} )  \big)   +  \log(1/\delta) + 3 C_{\ell} \Big]    \Bigg).
 \end{equation*}
The situation is more complicated when the state-space is infinite, as discussions in~\cite{wolfer2024improved} tend to show the estimation of $\gamma_{n}$ is not possible in general. However, it might remain feasible under additional assumptions on the Markov chain. This will be the object of future works.
\end{rmrk}
}

\subsection{Excess risk bound on the H\"older class}

 In the sequel, we consider a class of H\"older smooth functions.
Let $ U \subset \R^d$ (with $d \in \N$), $\boldsymbol{\alpha} = (\alpha_1, \cdots, \alpha_d)^T \in \N^d, \quad \boldsymbol{x} = (\boldsymbol{X}_1, \cdots, x_d)^T \in U$, set
\[ |\boldsymbol{\alpha}|= \sum_{i =1}^{d} \alpha_i ~ \text{and} ~ \partial^{\boldsymbol{\alpha}} = \dfrac{\partial^{|\boldsymbol{\alpha}|}}{\partial^{\alpha_1} \boldsymbol{X}_1, \cdots, \partial^{\alpha_d} x_d}.  \]
For any $ \beta > 0$, the H\"older space $ \mathcal{C}^\beta (U)$ is the set of functions $ h: U \rightarrow \R^\dy$ such that, for any $ \boldsymbol{\alpha} \in \N^d$ with $ |\boldsymbol{\alpha}| \leq \lfloor \beta \rfloor, ~ \| \partial^{\boldsymbol{\alpha}} h \|_{\infty} < \infty$ and for any $ \boldsymbol{\alpha} \in \N^d$ with $ |\boldsymbol{\alpha}| = \lfloor \beta \rfloor, ~ \lip_{s - [s]} (\partial^{\boldsymbol{\alpha}} h) < \infty$. This space is equipped with the norm
\begin{equation}\label{def_Holder_norm}
\| h\|_{\mathcal{C}^{\beta} (U)} =  \underset{ \boldsymbol{\alpha} \in \N^d:  ~|\boldsymbol{\alpha}| < \lfloor \beta \rfloor}{\sum} \| \partial^{\boldsymbol{\alpha}} h \|_{\infty} +  \underset{\boldsymbol{\alpha} \in \N^d: ~|\boldsymbol{\alpha}| = \lfloor \beta \rfloor}{\sum} \lip_{s - \lfloor \beta \rfloor} (\partial^{\boldsymbol{\alpha}} h).  
\end{equation} 
 For any $ \beta > 0, ~  U \subset \R^d $ and $ \mathcal{K} > 0$, consider the set, 
 \begin{equation}\label{def_C_s_K_U} 
 \mathcal{C}^{\beta, \mathcal{K}}(U) = \{h \in  \mathcal{C}^{\beta} (U), ~ \|h\|_{ \mathcal{C}^{\beta} (U)} \leq \mathcal{K}\}. 
 \end{equation}
%

\medskip

\medskip

Now, we deal with the network architecture parameters $L_n, N_n, B_n, F_n$ that depend on the sample size $n$.
An application of Theorem~\ref{theo_oracle_inq} together with a tight control of $\mathcal{E}( \boldsymbol{\theta}^*_{\mathcal{I}} )$ on the H\"older's class leads to the following result.
{\color{black}
\begin{thm}\label{theo_bound_Holder}
 Assume that (\textbf{A1})-(\textbf{A5}), (\textbf{A6}) with $\kappa=1$, (\textbf{A7}) hold and that there exists $\beta, \mk >0$, such that $h^* \in \mathcal{C}^{\beta, \mathcal{K}}(\mx)$.
 Take $\lambda$ as in (\ref{theo_oracle_lambda}).
Consider the DNNs class $\mathcal{H}_{\sigma,\dx,\dy} (L_n,N_n,  B_n, F_n, S_n)$ with $L_n \asymp \log n, N_n \asymp (n\gamma_n)^{\frac{d_x}{2\beta+d_x}}, ~ S_n \asymp  (n\gamma_n)^{\frac{d_x}{2\beta+d_x}} \log (n\gamma_n), ~ B_n \asymp (n\gamma_n)^{\frac{4(\beta+\dx)}{2\beta+\dx}}$ and $F_n >0$.
Then, for all $\delta \in (0,1)$, with $P_n \otimes \Pi_\lambda$-probability at least $1-\delta$, we have for $n\gamma_n \geq \varepsilon_0^{-(2 + \dx/s) }$,
 \begin{equation*}
   \mathcal{E}(\widehat{\boldsymbol{\theta}}_\lambda) \leq
  \Xi_2 \Bigg( \dfrac{ \log^3 ( n\gamma_n ) }{(n\gamma_n)^{\frac{2\beta}{2\beta+d_x}}}     +  \dfrac{   \log(1/\delta) + C_{\ell}}{(n\gamma_n)} \Bigg)
   ,
 \end{equation*}
for some constant $\Xi_2$ independent of $n$ and $C_{\ell}$, and where $\gamma_n$, $\varepsilon_0$ are given in (\ref{def_gamma_n}) and in the assumption (\textbf{A7}) respectively.
\end{thm}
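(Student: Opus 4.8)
The plan is to combine the non-asymptotic oracle inequality of Theorem~\ref{theo_oracle_inq} with a deterministic approximation bound for Hölder functions by sparse deep networks, and then to balance the resulting approximation and estimation terms. Writing $\tilde{n} := n\gamma_n$ for the effective sample size, Theorem~\ref{theo_oracle_inq} already reduces the whole problem to controlling the approximation term $\mathcal{E}(\boldsymbol{\theta}^*_{\mathcal{I}})$ for a single, well-chosen active set $\mathcal{I}$, since the estimation term there is explicit. So the argument has three moves: exhibit one sparse network whose excess risk is small, feed its active set into the minimum over $\mathcal{I}$, and optimize the trade-off in the approximation accuracy.

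For the first move, I would invoke a deep-network approximation theorem for the Hölder class, in the form of~\cite{schmidt2020nonparametric} for the ReLU case and its extension to activations satisfying (\textbf{A4}) along the lines of~\cite{ohn2019smooth}. Since $h^*\in\mathcal{C}^{\beta,\mathcal{K}}(\mx)$ by assumption and $\mx$ is compact by (\textbf{A2}), for any target accuracy $\epsilon>0$ there exists a network $h_{\boldsymbol{\theta}}$ with depth $\asymp\log(1/\epsilon)$, width $\asymp\epsilon^{-\dx/\beta}$, number of active weights $|\mathcal{I}|\asymp\epsilon^{-\dx/\beta}\log(1/\epsilon)$ and weights bounded as prescribed, such that $\|h_{\boldsymbol{\theta}}-h^*\|_{\infty,\mx}\le\epsilon$. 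Taking $\epsilon=\epsilon_n\asymp\tilde{n}^{-\beta/(2\beta+\dx)}$ produces exactly the architecture $(L_n,N_n,S_n,B_n)$ of the statement, with $|\mathcal{I}|=S_n\asymp\tilde{n}^{\dx/(2\beta+\dx)}\log\tilde{n}$ and $L_n\asymp\log n$.

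It remains to turn this sup-norm bound into an excess-risk bound via assumption (\textbf{A7}) and to substitute into the oracle inequality. Once $\tilde{n}$ is large enough that $\epsilon_n\le\varepsilon_0$ --- which is precisely the lower bound on $n\gamma_n$ imposed in the statement --- the local quadratic structure yields, for the minimizer $\boldsymbol{\theta}^*_{\mathcal{I}}$ over $\mathcal{S}_{\mathcal{I}}$,
\[
\mathcal{E}(\boldsymbol{\theta}^*_{\mathcal{I}}) \le R(h_{\boldsymbol{\theta}})-R(h^*) \le \mathcal{K}_0\,\|h_{\boldsymbol{\theta}}-h^*\|^2_{2,P_{\boldsymbol{X}_0}} \le \mathcal{K}_0\,\|h_{\boldsymbol{\theta}}-h^*\|^2_{\infty,\mx} \le \mathcal{K}_0\,\epsilon_n^2,
\]
where the third inequality uses that $P_{\boldsymbol{X}_0}$ is a probability measure supported on $\mx$. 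Plugging this $\mathcal{I}$ into Theorem~\ref{theo_oracle_inq}, and noting that $\log\big(\max(n,B_n,n_{L_n,N_n})\big)\asymp\log\tilde{n}$ since each argument is polynomial in $\tilde{n}$, gives
\[
\mathcal{E}(\widehat{\boldsymbol{\theta}}_\lambda) \lesssim \epsilon_n^2 + \frac{S_n\,L_n\,\log\tilde{n} + \log(1/\delta)+C_\ell}{\tilde{n}}.
\]
Here $\epsilon_n^2\asymp\tilde{n}^{-2\beta/(2\beta+\dx)}$, while $S_nL_n\log\tilde{n}/\tilde{n}\asymp\tilde{n}^{\dx/(2\beta+\dx)-1}\log^3\tilde{n}=\tilde{n}^{-2\beta/(2\beta+\dx)}\log^3\tilde{n}$, and the last term is $(\log(1/\delta)+C_\ell)/\tilde{n}$; this is exactly the claimed bound.

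The main obstacle is the first move: obtaining a clean approximation theorem in the required form, in particular the explicit (and fairly large) weight bound $B_n\asymp\tilde{n}^{4(\beta+\dx)/(2\beta+\dx)}$ and its validity for every activation meeting (\textbf{A4}) rather than ReLU alone. The conversion through (\textbf{A7}) and the final balancing are essentially bookkeeping; the only care needed is to track the three logarithmic factors --- one from the sparsity $S_n$, one from the depth $L_n\asymp\log n$, and one from the $\log(\max(\cdots))$ factor of the oracle inequality --- so that they combine into the stated $\log^3(n\gamma_n)$ (using $\log n\asymp\log(n\gamma_n)$), and to check that the stated threshold on $n\gamma_n$ is exactly what forces $\epsilon_n\le\varepsilon_0$ so that (\textbf{A7}) is applicable.
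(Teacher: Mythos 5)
Your proposal is correct and follows essentially the same route as the paper's proof: reduce via the oracle inequality of Theorem~\ref{theo_oracle_inq} to bounding $\mathcal{E}(\boldsymbol{\theta}^*_{\mathcal{I}})$ for the active set of an approximating network, invoke a sparse deep-network approximation theorem for $\mathcal{C}^{\beta,\mathcal{K}}(\mx)$ with accuracy $\varepsilon_n \asymp (n\gamma_n)^{-\beta/(2\beta+d_x)}$ (the paper cites \cite{kengne2025excess} and Theorem~1 of \cite{schmidt2019deep}, covering the activations of (\textbf{A4}), exactly the extension you flag as the main obstacle), pass from the sup-norm to excess risk through (\textbf{A7}) under the stated threshold on $n\gamma_n$, and balance the terms to collect the three logarithmic factors. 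The only cosmetic difference is that the paper first rewrites the minimum over $\mathcal{I}$ as an infimum over all networks in $\mathcal{H}_{\sigma,\dx,\dy}(L_n,N_n,B_n,F_n,S_n)$ before specializing to the approximant, whereas you plug in the approximant's active set directly; both arguments are otherwise identical, including the implicit identification $\log n \asymp \log(n\gamma_n)$ in the final bookkeeping.
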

}
\noindent
Therefore, {\color{black} when $\mathbf{P}$ and thus $\gamma_n=\gamma$ are independent of $n$, the rate of the excess risk of $\widehat{h}_{\lambda} = h_{\widehat{\boldsymbol{\theta}}_\lambda}$ is of order $\mathcal{O}\big( n^{- \frac{2\beta}{2\beta+d_x}} (\log n)^3 \big)$. More generally, if there exists $ \underline{\gamma} > 0 $ such that $\gamma_n \geq \underline{\gamma}$ for all $n \in \N$, we also recover this rate. However, when $\gamma_n\rightarrow 0$, we get slower rates of convergence.
}

\subsection{Excess risk bound on a compositions structured H\"older class}
%
%
%
In this subsection, we assume that $d_y=1$ and that the target function $h^*$ belongs to a class of composition structured functions, defined by \cite{schmidt2020nonparametric}.

 \medskip

 Let $d, r \in \N$ with $d \geq r$, $\beta, \mathcal{K}>0$ and $a, b \in \R$ with $a<b$. 
 Denote by $\mathcal{C}^{\beta,\mathcal{K}}_r([a,b]^d)$ the set of functions $f : [a,b]^d \rightarrow \R$ satisfying:
 \begin{itemize}
 \item $f$ is a $r$-variate function i.e. $f$ depends on at most $r$ coordinates;
 \item $f \in \mathcal{C}^{\beta,\mathcal{K}}([a,b]^d)$, defined in (\ref{def_C_s_K_U}).
 \end{itemize}

\medskip

 Let $q \in \N$, $\boldsymbol{d} = (d_0,\cdots,d_{q+1}) \in \N^{q+2}$ with $d_0 = d_x$ and $d_{q+1} = d_y$, $\boldsymbol{t} = (t_0,\cdots,t_{q}) \in \N^{q+1}$ with $t_i \leq d_i$ for all $i=0,\cdots,q$, $\boldsymbol{\beta} = (\beta_0,\cdots,\beta_{q}) \in (0,\infty)^{q+1}$ and $\mathcal{K} >0$.
 Following~\cite{schmidt2020nonparametric}, define the class of functions:
\begin{multline}\label{def_comp_functions}
\mathcal{G}(q, \boldsymbol{d}, \boldsymbol{t}, \boldsymbol{\beta}, \mathcal{K}) := \Big\{ h=g_q \circ \cdots g_0, ~ g_i=(g_{i,j})_{j=1,\cdots,d_{i+1}} : [a_i, b_i]^{d_i} \rightarrow [a_{i+1}, b_{i+1}]^{d_{i+1}}, \text{ with } \\ 
g_{i,j} \in \mathcal{C}^{\beta_i,\mathcal{K}}_{t_i}\big([a_i,b_i]^{d_i} \big), 
  \text{ for some } a_i, b_i \in \R \text{ such that } |a_i|, |b_i| \leq \mathcal{K}, \forall i=1,\cdots, q \Big\}. 
\end{multline}
For the class $\mathcal{G}(q, \boldsymbol{d}, \boldsymbol{t}, \boldsymbol{\beta}, \mathcal{K})$, define:
\begin{equation}\label{def_beta_star_phi_n}
 \beta^*_i :=\beta_i \prod_{k=i+1}^q \min(\beta_k, 1) \text{ for all } i=0,\cdots, q ; \text{ and } \phi_n := \max_{i=0,\cdots,q}  n^{- \frac{2\beta_i^*}{2 \beta_i^* + t_i}}.
\end{equation}
%
%
%
%
The next theorem provides a bound of the excess risk of the estimator $\widehat{h}_{\lambda}$, for structured composite target function, in the case $\mx = [0,1]^\dx$. Here again this is obtained by an application of Theorem~\ref{theo_oracle_inq}.
{\color{black}
\begin{thm}\label{theo_bound_comp}
 Set $\mx = [0,1]^\dx$. 
 Assume that (\textbf{A1})-(\textbf{A3}), (\textbf{A6}) with $\kappa=1$, (\textbf{A7}) hold and that $h^* \in \mathcal{G}(q, \boldsymbol{d}, \boldsymbol{t}, \boldsymbol{\beta}, \mathcal{K})$ for some composition structured class, and take $\lambda$ as in (\ref{theo_oracle_lambda}).
Consider the DNNs class $\mathcal{H}_{\sigma,\dx,\dy} (L_n,N_n,  B_n, F_n, S_n)$ where $\sigma$ is the ReLU activation function and with:
\begin{equation}\label{theo_upper_cond_architecture}
L_n= C_0 \log (n\gamma_n), N_n \asymp n\gamma_n \phi_{n\gamma_n}, ~ S_n \asymp  n \gamma_n \phi_{n\gamma_n}  \log (n\gamma_n), ~ B_n =B \geq 1, F_n > \max(\mathcal{K}, 1),
\end{equation}
for some constant $C_0 \geq  \sum_{i=0}^q \log_4\big(4 \max(t_i, \beta_i) \big)$ and where $\phi_n$ is defined in (\ref{def_beta_star_phi_n}).
Then, for all $\delta \in (0,1)$, with $P_n \otimes \Pi_\lambda$-probability at least $1-\delta$, we have
 \begin{equation*}
   \mathcal{E}(\widehat{\boldsymbol{\theta}}_\lambda) \leq   \Xi_3  \Big( \phi_{n\gamma_n} (\log (n\gamma_n))^3 + \dfrac{\log(1/\delta) + C_{\ell}}{n\gamma_n } \Big),
 \end{equation*}
for some constant $\Xi_3$ independent of $n$ and $ C_{\ell}$.
\end{thm}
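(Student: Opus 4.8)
The plan is to reduce the theorem to the general oracle inequality of Theorem~\ref{theo_oracle_inq} and then control the oracle term $\mathcal{E}(\boldsymbol{\theta}^*_{\mathcal{I}})$ using the deep ReLU approximation theory of~\cite{schmidt2020nonparametric}, throughout replacing the sample size $n$ by the effective sample size $\tilde n := n\gamma_n$. Since $\sigma$ is ReLU---which is $1$-Lipschitz, piecewise linear and fixes $[0,1]$---assumption (\textbf{A4}) holds automatically, so Theorem~\ref{theo_oracle_inq} is applicable to the class $\mathcal{H}_{\sigma,\dx,\dy}(L_n,N_n,B_n,F_n,S_n)$ with the prescribed architecture~\eqref{theo_upper_cond_architecture}.

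The first and central step is an approximation bound. I would invoke the construction of~\cite{schmidt2020nonparametric} for functions in $\mathcal{G}(q,\boldsymbol{d},\boldsymbol{t},\boldsymbol{\beta},\mathcal{K})$: there is a ReLU network $\tilde h = h_{\tilde{\boldsymbol{\theta}}}$ whose depth, width, sparsity and weight bounds fit inside $\mathcal{H}_{\sigma,\dx,\dy}(L_n,N_n,B_n,F_n,S_n)$---i.e. $\mathrm{depth} = C_0\log\tilde n$, $\mathrm{width} \asymp \tilde n\phi_{\tilde n}$, $|\tilde{\boldsymbol{\theta}}|_0 \lesssim \tilde n\phi_{\tilde n}\log\tilde n$, $\|\tilde{\boldsymbol{\theta}}\|\le B_n$ and $\|\tilde h\|_{\infty,\mx}\le F_n$---and that satisfies $\|\tilde h - h^*\|_{\infty,\mx}^2 \lesssim \phi_{\tilde n}\log\tilde n$. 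For $\tilde n$ large this makes $\|\tilde h - h^*\|_{\infty,\mx}\le\varepsilon_0$, so assumption (\textbf{A7}) converts the sup-norm error into an excess-risk bound,
\begin{equation*}
\mathcal{E}(\tilde{\boldsymbol{\theta}}) = R(\tilde h)-R(h^*) \le \mk_0\,\|\tilde h - h^*\|_{2,P_{\boldsymbol{X}_0}}^2 \le \mk_0\,\|\tilde h - h^*\|_{\infty,\mx}^2 \lesssim \phi_{\tilde n}\log\tilde n.
\end{equation*}
Letting $\mathcal{I}$ be the active set of $\tilde{\boldsymbol{\theta}}$, we have $\tilde{\boldsymbol{\theta}}\in\mathcal{S}_{\mathcal{I}}$ and $|\mathcal{I}|\le S_n\asymp\tilde n\phi_{\tilde n}\log\tilde n$, so by optimality $\mathcal{E}(\boldsymbol{\theta}^*_{\mathcal{I}})\le\mathcal{E}(\tilde{\boldsymbol{\theta}})\lesssim\phi_{\tilde n}\log\tilde n$.

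Next I would feed this particular $\mathcal{I}$ into Theorem~\ref{theo_oracle_inq}. The oracle part contributes $3\mathcal{E}(\boldsymbol{\theta}^*_{\mathcal{I}})\lesssim\phi_{\tilde n}\log\tilde n$, and the complexity part is estimated from $|\mathcal{I}|\asymp\tilde n\phi_{\tilde n}\log\tilde n$, $L_n\asymp\log\tilde n$, together with $\log(\max(n,B_n,n_{L_n,N_n}))\asymp\log\tilde n$ (here $B_n$ is constant and $n_{L_n,N_n}\asymp N_n^2 L_n$ is polynomial in $\tilde n$):
\begin{equation*}
\frac{\Xi_1}{n\gamma_n}\,|\mathcal{I}|\,L_n\,\log\!\big(\max(n,B_n,n_{L_n,N_n})\big) \lesssim \frac{(\tilde n\phi_{\tilde n}\log\tilde n)(\log\tilde n)(\log\tilde n)}{\tilde n} = \phi_{\tilde n}(\log\tilde n)^3.
\end{equation*}
Adding the residual $\tfrac{\log(1/\delta)+C_\ell}{n\gamma_n}$ term and absorbing the $\phi_{\tilde n}\log\tilde n$ oracle contribution into $\phi_{\tilde n}(\log\tilde n)^3$ yields the claimed bound with $\tilde n = n\gamma_n$.

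I expect the main obstacle to be the approximation step: one must make sure the Schmidt-Hieber network can be realized \emph{exactly} within the sparse class $\mathcal{H}_{\sigma,\dx,\dy}(L_n,N_n,B_n,F_n,S_n)$ while simultaneously meeting the weight bound $B_n$, the truncation $F_n>\max(\mathcal{K},1)$ and the sparsity budget $S_n$, and that the resulting squared sup-norm error is $\phi_{\tilde n}$ up to logarithmic factors. A lesser, bookkeeping difficulty is verifying that the entropy logarithm $\log(\max(n,B_n,n_{L_n,N_n}))$ is of order $\log\tilde n$; this is transparent in the regime $\gamma_n\ge\underline{\gamma}>0$ (where $\tilde n\asymp n$) and requires $\gamma_n$ to decay at most polynomially otherwise.
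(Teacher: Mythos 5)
Your proposal is correct and follows essentially the same route as the paper's proof: apply the oracle inequality of Theorem~\ref{theo_oracle_inq} with the active set of the Schmidt--Hieber approximating network, convert the sup-norm approximation error to an excess-risk bound via (\textbf{A7}), and bound the complexity term using $S_n L_n \log(\max(n,B_n,n_{L_n,N_n})) \lesssim \tilde n\,\phi_{\tilde n}(\log \tilde n)^3$ with $\tilde n = n\gamma_n$. The only deviations are immaterial or to your credit: your extra $\log\tilde n$ in the squared approximation error (the paper quotes $C_1\phi_{\tilde n}$) is absorbed by the $(\log\tilde n)^3$ factor, and you explicitly flag two points the paper leaves implicit, namely that (\textbf{A7}) requires $\sqrt{\phi_{\tilde n}}\le\varepsilon_0$ for $\tilde n$ large and that $\log n \asymp \log\tilde n$ needs $\gamma_n$ to decay at most polynomially.
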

}
\noindent
So, if $h^*$ belongs to a class of H\"older compositions functions and $\gamma_n\geq \underline{\gamma}>0$ holds, then the rate of the excess risk of $\widehat{h}_{\lambda} = h_{\widehat{\boldsymbol{\theta}}_\lambda}$ is of order $\mathcal{O}\big( \phi_n (\log n)^3 \big)$.

\section{Examples of nonparametric regression and classification}
\label{sec:examples}
\subsection{Nonparametric regression}
Consider the nonparametric time series regression,
\begin{equation}\label{npar_reg_model}
\boldsymbol{Y}_t = h^ {*}(\boldsymbol{X}_t) + \varepsilon_t, ~ t \in \N,
\end{equation}
with $ (\boldsymbol{X}_t,\boldsymbol{Y}_t) \in \mx \times \my \subset {\color{black}\R^\dx \times  \R^\dy}$,
where $ h^{*} : {\color{black} \R^\dx  \rightarrow \R^\dy}$  is the  unknown regression  function and $(\varepsilon_t)_{t \in \N}$ is a sequence of {\color{black}$\dy$-dimensional} i.i.d. centered random variables and for all $t \in \N$, $\varepsilon_t$ is independent of the input variable $\boldsymbol{X}_t$.
 The goal is the estimation of $h^*$, based on the observations $\{ (\boldsymbol{X}_i,\boldsymbol{Y}_i) \}_{1\leq i \leq n}$, generated from (\ref{npar_reg_model}).
 We perform the quasi-Bayesian DNN estimator proposed above, with a DNNs class $\mathcal{H}_{\sigma,\dx,\dy} (L_n,N_n,  B_n, F_n, S_n)$ and the square loss function.

\medskip
 In this subsection, assume that $\{ (\boldsymbol{X}_t,\boldsymbol{Y}_t) \}_{t \in \N}$ is a stationary and ergodic Markov process and that $\mx$ and $\my$ are compact sets.
 Hence, the assumptions (\textbf{A1})-(\textbf{A3}) hold.
 Also, (\textbf{A6}) with $\kappa=1$ holds if we deal with a network architecture parameters $L_n, N_n, B_n, F_n$ with $F_n=F>\max(\mathcal{K}, 1)$.
 Note that, in this framework with the square loss, (\textbf{A7}) holds automatically.
 Therefore:
 \begin{itemize}
 \item If the activation function $\sigma$ satisfies (\textbf{A4}) and $h^* \in \mathcal{C}^{\beta, \mathcal{K}}(\mx)$ for some $\beta, \mk>0$, then, the result of Theorem \ref{theo_bound_Holder} holds.
 \item For $\mx = [0,1]^\dx$ and $\dy=1$, if $h^* \in \mathcal{G}(q, \boldsymbol{d}, \boldsymbol{t}, \boldsymbol{\beta}, \mathcal{K})$ for some composition structured class, then the result of Theorem \ref{theo_bound_comp} is applied with the ReLU deep neural networks.
\end{itemize}   

\medskip
Note that, for a regression problem with the Gaussian error $\varepsilon_t$ (with $\dy=1$), square loss, and $\mx = [0,1]^\dx$, \cite{schmidt2020nonparametric} proved that, the lower bound for the minimax estimation risk over a class $\mathcal{G}(q, \boldsymbol{d}, \boldsymbol{t}, \boldsymbol{\beta}, \mathcal{K})$ is $\phi_n$ (up to a constant). 
 Unfortunately, Theorem \ref{theo_bound_Holder} and \ref{theo_bound_comp} do not cover the Gaussian error case, as they would require $ \my=\mathbb{R}$ to be not compact, a setting in which the quadratic loss is not Lipschitz. This can, however, be fixed, {\color{black}so that this theorem can be applied to models with sub-Gaussian errors}.

\begin{lem} \label{lem_reg}
 Assume~\eqref{npar_reg_model} with $\dy=1$, $h^* \in \mathcal{C}^{\beta,\mathcal{K}}(\mathcal{X})$ for some $s, \mathcal{K} >0$,  $(\varepsilon_t)_{t \in \N}$ {\color{black}is sequence of centered i.i.d. sub-Gaussian random variables with variance proxy $\varsigma^2>0$ (that is $\E(e^{\alpha \varepsilon_0}) \leq e^{\alpha^2 \varsigma^2/2}$ for all $\alpha \in \R$)}.
 For any $\delta>0$, define the event
 $$ \tilde{\Omega}_{\delta} = \left\{ \sup_{1\leq t \leq n} |Y_t| \leq  \mathcal{K} +  \varsigma \sqrt{ 2\log \frac{2n}{\delta} }  \right\}. $$
 Then $\mathbb{P}(\tilde{\Omega}_\delta)\geq 1-\delta$.
\end{lem}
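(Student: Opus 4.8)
The plan is to control the maximum of the absolute values $|Y_t|$ over $1 \le t \le n$ by splitting $Y_t = h^*(X_t) + \varepsilon_t$ into its bounded signal part and its sub-Gaussian noise part, and then apply a union bound over the $n$ time indices to a single sub-Gaussian tail estimate on each $\varepsilon_t$. The key observation is that the dependence structure of the Markov chain plays no role here: a union bound only uses the marginal distribution of each $\varepsilon_t$, and by hypothesis each $\varepsilon_t$ has the same centered sub-Gaussian law with variance proxy $\varsigma^2$, so the argument reduces to $n$ identical one-dimensional tail computations.

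\emph{First} I would bound the signal: since $h^* \in \mathcal{C}^{\beta,\mathcal{K}}(\mathcal{X})$, the Hölder norm bound gives $\|h^*\|_\infty \le \mathcal{K}$, hence $|h^*(\boldsymbol{X}_t)| \le \mathcal{K}$ deterministically for every $t$. By the triangle inequality, $|Y_t| \le \mathcal{K} + |\varepsilon_t|$, so it suffices to show that with probability at least $1-\delta$ we have $\sup_{1\le t \le n} |\varepsilon_t| \le \varsigma \sqrt{2 \log(2n/\delta)}$. \emph{Next}, for a single centered sub-Gaussian variable with variance proxy $\varsigma^2$, the standard Chernoff bound gives $\mathbb{P}(|\varepsilon_t| > u) \le 2 \exp(-u^2/(2\varsigma^2))$ for any $u > 0$; this follows by optimizing the exponential moment bound $\E(e^{\alpha \varepsilon_0}) \le e^{\alpha^2 \varsigma^2/2}$ over $\alpha$ via Markov's inequality, applied to both $\varepsilon_t$ and $-\varepsilon_t$.

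\emph{Finally} I would set $u = \varsigma \sqrt{2 \log(2n/\delta)}$, so that $2 \exp(-u^2/(2\varsigma^2)) = 2 \exp(-\log(2n/\delta)) = \delta/n$, and take a union bound:
\begin{equation*}
\mathbb{P}\Big( \sup_{1\le t \le n} |\varepsilon_t| > \varsigma \sqrt{ 2\log \tfrac{2n}{\delta} } \Big) \le \sum_{t=1}^n \mathbb{P}\big( |\varepsilon_t| > u \big) \le n \cdot \frac{\delta}{n} = \delta.
\end{equation*}
On the complementary event, $|Y_t| \le \mathcal{K} + \varsigma \sqrt{2 \log(2n/\delta)}$ for all $t$ simultaneously, which is exactly the event $\tilde{\Omega}_\delta$, giving $\mathbb{P}(\tilde{\Omega}_\delta) \ge 1 - \delta$.

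In fact there is no serious obstacle here: this is a routine concentration argument, and the only point requiring minor care is recognizing that the Markov dependence is irrelevant because a union bound needs no independence. The \emph{real} work lies outside this lemma — namely, in subsequently arguing that on the event $\tilde\Omega_\delta$ one may effectively replace the non-compact output space $\my = \R$ by a compact interval of radius $\mathcal{K} + \varsigma\sqrt{2\log(2n/\delta)}$, so that the quadratic loss becomes Lipschitz there (with a constant growing only logarithmically in $n$) and assumptions (\textbf{A3}) and (\textbf{A6}) can be invoked to transfer Theorems \ref{theo_bound_Holder} and \ref{theo_bound_comp} to the sub-Gaussian regression setting. That reduction, however, is the job of the theorem that follows the lemma rather than of the lemma itself.
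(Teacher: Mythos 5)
Your proof is correct and follows essentially the same route as the paper's: a Chernoff bound on each sub-Gaussian tail (optimizing the exponential moment at $\alpha = \epsilon/\varsigma^2$), a union bound over the $n$ indices with a factor $2$ for the two-sided tail, and the deterministic bound $|Y_t| \leq \|h^*\|_{\infty,\mathcal{X}} + |\varepsilon_t| \leq \mathcal{K} + |\varepsilon_t|$ from the H\"older norm constraint. The only difference is cosmetic ordering — you bound the signal first and the noise second, while the paper does the reverse — and your explicit remark that the union bound needs no independence is implicit in the paper's term-by-term summation.
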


\begin{proof}
 Fix $\alpha >0$ and $\epsilon>0$. By using the Markov's inequality, we have,
 \begin{align*}
  \mathbb{P}\left(  \sup_{1\leq t \leq n} \varepsilon_t \geq \epsilon \right)
  &
  \leq \sum_{t=1}^n \mathbb{P}\left( \varepsilon_t \geq \epsilon \right) \leq  \sum_{t=1}^n \mathbb{P}\left( \varepsilon_t \geq \epsilon \right)
 =   \sum_{t=1}^n \mathbb{P}\big( \exp(\alpha \varepsilon_t) \geq \exp(\alpha \epsilon) \big)
  \\
  & \leq  \sum_{t=1}^n \mathbb{E}[\exp(\alpha \varepsilon_t - \alpha \epsilon) ]
  \leq \sum_{t=1}^n  \exp(\alpha^2 \varsigma^2/2- \alpha \epsilon) .
 \end{align*}
 For a fixed $\epsilon$, the function $\alpha \mapsto \alpha^2 \varsigma^2/2- \alpha \epsilon$ reaches its minimum when $\alpha = \epsilon / \varsigma^2$ and thus,
  \begin{equation} 
   \mathbb{P}\left(  \sup_{1\leq t \leq n} \varepsilon_t \geq \epsilon \right)
   \leq  n \exp\Big(-\frac{\epsilon^2}{2 \varsigma^2} \Big).
\end{equation}
So, it holds that,
 \begin{equation}\label{eq_proof_lem_reg}
   \mathbb{P}\left(  \sup_{1\leq t \leq n} |\varepsilon_t| \geq \epsilon \right)
   \leq 2 n \exp\Big(-\frac{\epsilon^2}{2 \varsigma^2} \Big).
\end{equation}
 Putting $\delta = 2 n \exp\big(-\epsilon^2/2\varsigma^2 \big)$, (\ref{eq_proof_lem_reg}) gives
 \begin{equation}\label{eq_proof_lem_reg_delta}
  \mathbb{P}\left(  \sup_{1\leq t \leq n} |\varepsilon_t| \geq \varsigma \sqrt{ 2 \log \frac{2n}{\delta} } \right) \leq \delta.
 \end{equation}
 Finally, since $h^*\in \mathcal{C}^{\beta,\mathcal{K}}(\mathcal{X})$, it holds a.s. that,  
 $|Y_t| \leq |h^*(\boldsymbol{X}_t)|+|\varepsilon_t| \leq \|h^*\|_{\infty, \mathcal{X} } + |\varepsilon_t|  \leq \mathcal{K} + |\varepsilon_t| $.
 Thus, the result follows from (\ref{eq_proof_lem_reg_delta}). 
\end{proof}
Therefore, consider a DNN class $\mathcal{H}_{\sigma,\dx,1} (L_n,N_n, B_n, F_n)$ for $L_n, N_n,  B_n>0$ and $F_n=F>\max(\mathcal{K}, 1)$.
From Lemma \ref{lem_reg}, it holds on $\tilde{\Omega}_{\delta}$ that all the variables $Y_t$ ($t \in \N$) belong to the compact set $[- \mathcal{K} -  \varsigma \sqrt{ 2\log \frac{2n}{\delta} },  \mathcal{K} +  \varsigma \sqrt{ 2\log \frac{2n}{\delta} }  ]$ and thus, for $h \in \mathcal{H}_{\sigma,\dx,1} (L_n,N_n,  B_n, F_n)$, $x \in \mx$ and $y \in [- \mathcal{K} -  \varsigma \sqrt{ 2\log \frac{2n}{\delta} },  \mathcal{K} +  \varsigma \sqrt{ 2\log \frac{2n}{\delta} }  ]$,
\begin{align*}
| \ell(h(\boldsymbol{x}),y)-\ell(h^*(\boldsymbol{x}),y) |
&
= | (h(\boldsymbol{x})-y)^2-(h^*(\boldsymbol{x})-y)^2 |
 \\
 & \leq | (2y-h(\boldsymbol{x})-h^*(\boldsymbol{x}))(h(\boldsymbol{x})-h^*(\boldsymbol{x})) |
 \\
 & \leq (2\mathcal{K} +  2\varsigma \sqrt{ 2\log \frac{2n}{\delta} } + F + \mathcal{K} )|h(\boldsymbol{x})-h^*(\boldsymbol{x})|,
\end{align*}
that is, we have Assumption (\textbf{A3}) with $\mathcal{K}_\ell=3\mathcal{K} +  2\varsigma \sqrt{ 2\log \frac{2n}{\delta} } + F $. Thus, on this set, we can apply Theorem \ref{theo_bound_comp} with $F_n=F>\max(\mathcal{K}, 1)$. Thus, in this setting, we obtain, with probability at least $1-2\delta$,
{\color{black}
 \begin{equation*}
   \mathcal{E}(\widehat{\boldsymbol{\theta}}_\lambda) \leq    \Xi_3 \Big( \phi_{n\gamma_n} (\log (n\gamma_n))^3 + \dfrac{3\mathcal{K} +  2\varsigma \sqrt{ 2\log \frac{2n}{\delta} } + F +\log(1/\delta)}{n\gamma_n} \Big) \leq   \Xi_4 \Big( \phi_{n\gamma_n} (\log (n\gamma_n)^3 + \dfrac{\log(n/\delta)}{n\gamma_n} \Big),
 \end{equation*}
 }
 for some $\Xi_4 >0$ independent of $n$.
Thus, when $\gamma_n\geq \underline{\gamma}>0$, the convergence rate of $\mathcal{E}(\widehat{\boldsymbol{\theta}}_\lambda)$ matches (up to a $\log^3 n$ factor) with the lower bound of \cite{schmidt2020nonparametric}.
Which shows that the Bayesian DNN estimator $\widehat{h}_{\lambda} = h_{\widehat{\boldsymbol{\theta}}_\lambda}$ for the least squares nonparametric regression {\color{black}with sub-Gaussian errors} is optimal in the minimax sense.

\subsection{Classification with the logistic Loss}
Consider a stationary and ergodic Markov process $(\boldsymbol{X}_t,Y_t)_{t \in \N}$ with distribution $P_{\boldsymbol{Z}_0}$ ($\boldsymbol{Z}_0 = (\boldsymbol{X}_0, Y_0)$) and with values in $\mx \times \{-1, 1\} \subset \R^\dx \times \R$ (that is, $\dy=1$) satisfying for all $\boldsymbol{x} \in \mx$,
\begin{equation}\label{mod_class}
 Y_t|\boldsymbol{X}_t=\boldsymbol{x} \sim 2\mathcal{B}\big( \eta(\boldsymbol{x}) \big) - 1, ~ \text{ with } \eta(\boldsymbol{x}) = \mathds{P}(Y_t = 1| \boldsymbol{X}_t = \boldsymbol{x}), 
\end{equation}
where $\mathcal{B}\big( \eta(\boldsymbol{x}) \big)$ denotes the Bernoulli distribution with parameter $\eta(\boldsymbol{x})$.
We focus on the margin-based logistic loss function given by,
$\ell(y, u) = \phi(y u)$ for all $(y,u) \in \{-1, 1\} \times \R$ and with $\phi(v) = \log(1 + e^{-v}), ~ \forall v \in \R$.
For any predictor $h : \R^\dx \rightarrow \R$, we have,
\[ R(h) = \E\big[ \phi\big(Y_0 h(\boldsymbol{X}_0)  \big)  \big] = \int_{\mx\times \{-1, 1\} } \phi\big( yh(\boldsymbol{x})  \big) dP_{\boldsymbol{Z}_0}(\boldsymbol{x},y) . \]
Here, we assume that $\mx$ is a compact set and deal with a class of bounded deep neural networks $\mathcal{H}_{\sigma,\dx,1} (L_n,N_n,  B_n, F_n, S_n)$ ($\dy=1$) with $F_n=F>\max(\mathcal{K}, 1)$.
So, the assumptions (\textbf{A1})-(\textbf{A3}) and (\textbf{A6}) with $\kappa=1$ are satisfied; see for instance \cite{bartlett2006convexity} in the case of (\textbf{A6}).

\medskip
Let us check (\textbf{A7}).
The target function, with respect to the $\phi$-loss is given by (see also Lemma \textbf{C.2} in \cite{zhang2024classification}) is the function $h^*: \mx \rightarrow [-\infty,\infty]$, defined by:
\begin{equation}\label{target_class}
h^*(\boldsymbol{x})=   
 \begin{cases}
   \infty & \text{if }  \eta(\boldsymbol{x}) = 1,  \\
    \log \frac{\eta(\boldsymbol{x})}{ 1 -\eta(\boldsymbol{x})} & \text{if }  \eta(\boldsymbol{x}) \in (0,1), \\
   -\infty  &   \text{if } \eta(\boldsymbol{x}) =0.
  \end{cases}
\end{equation}
Let $\boldsymbol{x} \in \mx$ such that $\eta(\boldsymbol{x}) \in (0,1)$.
For any measurable function $h : \R^\dx \rightarrow \R$, we have,
\begin{align}\label{cond_excess_risk_class}
\nonumber & \E\big[ \phi\big(Y_0 h(\boldsymbol{X}_0)  \big) | \boldsymbol{X}_0=\boldsymbol{x} \big] - \E\big[ \phi\big(Y_0 h^*(\boldsymbol{X}_0)  \big) | \boldsymbol{X}_0=\boldsymbol{x} \big] \\
\nonumber &= \eta(\boldsymbol{x}) \phi\big( h(\boldsymbol{x}) \big) + \big(1- \eta(\boldsymbol{x}) \big) \phi\big( -h(\boldsymbol{x}) \big) - \eta(\boldsymbol{x}) \phi\big( h^*(\boldsymbol{x}) \big) - \big(1- \eta(\boldsymbol{x}) \big) \phi\big( -h^*(\boldsymbol{x}) \big) \\
\nonumber &= \eta(\boldsymbol{x}) \phi\big( h(\boldsymbol{x}) \big) + \big(1- \eta(\boldsymbol{x}) \big) \phi\big( -h(\boldsymbol{x}) \big) - \eta(\boldsymbol{x}) \log \frac{1}{\eta(\boldsymbol{x})} - \big(1- \eta(\boldsymbol{x}) \big) \log \frac{1}{1-\eta(\boldsymbol{x})} \\
& \leq \frac{1}{8} \Big|h(\boldsymbol{x}) - \log \frac{\eta(\boldsymbol{x})}{ 1 -\eta(\boldsymbol{x})} \Big|^2  \leq \frac{1}{8} |h(\boldsymbol{x}) - h^*(\boldsymbol{x}) |^2,
\end{align}
where the inequality (\ref{cond_excess_risk_class}) holds from the Lemma C.6 in \cite{zhang2024classification}.
With the convention $0 \times \infty = 0$, the inequality (\ref{cond_excess_risk_class}) holds for $\eta(\boldsymbol{x})=1$ or $\eta(\boldsymbol{x})=0$.  
Hence,
\begin{equation*}
R(h) - R(h^*) = \E\big[ \phi\big(Y_0 h(\boldsymbol{X}_0)  \big)  \big] - \E\big[ \phi\big(Y_0 h^*(\boldsymbol{X}_0)  \big)  \big] \leq  \frac{1}{8} \E[|h(\boldsymbol{X}_0) - h^*(\boldsymbol{X}_0) |^2] =  \frac{1}{8} \| h - h^*\|^2_{2, P_{\boldsymbol{X}_0}}.
\end{equation*}
Thus, (\textbf{A7}) holds with $\mk_0 = \frac{1}{8}$.
Therefore:
 \begin{itemize}
 \item If (\textbf{A4}) holds and $h^* \in \mathcal{C}^{\beta, \mathcal{K}}(\mx)$ for some $\beta, \mk>0$, then the result of Theorem \ref{theo_bound_Holder} is applied.
 \item For $\mx = [0,1]^\dx$, if $h^* \in \mathcal{G}(q, \boldsymbol{d}, \boldsymbol{t}, \boldsymbol{\beta}, \mathcal{K})$ for some composition structured class, then the result of Theorem \ref{theo_bound_comp} is applicable with the ReLU deep neural networks.
\end{itemize}   

The next theorem provides a lower bound of the excess risk on a class of composition structured functions. 
\begin{thm}[Lower bound for classification]\label{theo_lower_bound_class}
 Set $\mx = [0,1]^\dx$ and assume that $(\boldsymbol{X}_t,Y_t)_{t \in \N}$ satisfying (\ref{mod_class}) is a stationary and ergodic Markov process. 
Consider a class of composition structured functions $\mathcal{G}(q, \bold{d}, \bold{t}, \boldsymbol{\beta}, \mathcal{K})$ with $t_i \leq \min(d_0, \dots, d_{i-1})$ for all $i$.
 Then, for sufficiently large $\mk$, there exist a constant $C>0$  such that 
\begin{equation}\label{equa_lower_bound}
\underset{\widehat{h}_n}{\inf}~\underset{h^{*}\in\mathcal{G}(q, \bold{d}, \bold{t}, \boldsymbol{\beta}, \mk)}{\sup} \E\big[   R(\widehat{h}_n) - R(h^*) \big]  \ge C\phi_{n},
\end{equation}
where the infimum is taken over all estimator $\widehat{h}_n$, based on $(\boldsymbol{X}_1,Y_1),\cdots, (\boldsymbol{X}_n,Y_n)$ generated from (\ref{mod_class}) and $\phi_{n}$ is defined in (\ref{def_beta_star_phi_n}).
\end{thm}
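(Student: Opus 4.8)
The plan is to prove the bound by a multiple-hypothesis (information-theoretic) argument that reduces the logistic classification problem to $L^2$-estimation of the conditional probability, and then transports the minimax construction of~\cite{schmidt2020nonparametric} to the Bernoulli model. Since a lower bound over the class of all stationary ergodic Markov processes satisfying~(\ref{mod_class}) only has to be exhibited on one such process, I would take $(\boldsymbol{X}_t)_{t\in\N}$ to be i.i.d. uniform on $[0,1]^{\dx}$ (a Markov chain with $\gamma_{ps,n}=1$) and $Y_t\mid\boldsymbol{X}_t=\boldsymbol{x}\sim 2\mathcal{B}(\eta(\boldsymbol{x}))-1$. The data are then i.i.d., and the family constructed below realizes the worst case over $\mathcal{G}(q,\boldsymbol{d},\boldsymbol{t},\boldsymbol{\beta},\mk)$; the infimum over estimators and supremum over $h^*$ is exactly the minimax quantity in~(\ref{equa_lower_bound}).

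The first step is to build a finite family of hypotheses inside the class. Let $i^*$ attain the maximum in $\phi_n=\max_i n^{-2\beta_i^*/(2\beta_i^*+t_i)}$. Following~\cite{schmidt2020nonparametric}, I would fix all composition layers except the $i^*$-th at smooth, range-compatible maps and perturb the $i^*$-th component by a sum of $m\asymp h^{-t_{i^*}}$ disjointly supported bumps on a grid of bandwidth $h\asymp n^{-1/(2\beta_{i^*}^*+t_{i^*})}$, each of height $\asymp h^{\beta_{i^*}}$ and indexed by $\omega\in\{0,1\}^m$. The structural hypothesis $t_i\le\min(d_0,\dots,d_{i-1})$ is what allows the $t_{i^*}$-dimensional perturbation to be embedded while keeping every layer inside its Hölder ball, so that $h^*_\omega\in\mathcal{G}(q,\boldsymbol{d},\boldsymbol{t},\boldsymbol{\beta},\mk)$ for $\mk$ large enough. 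The propagation of a layer-$i^*$ perturbation through the subsequent $\min(\beta_k,1)$-contractions produces an effect of order $h^{\beta_{i^*}^*}$ on the composed function; by shrinking the amplitude constant I also ensure that each $\eta_\omega:=\sigma(h^*_\omega)$, with $\sigma$ the sigmoid, stays in a fixed compact subinterval $[\eta_0,1-\eta_0]\subset(0,1)$.

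The boundedness $\eta_\omega\in[\eta_0,1-\eta_0]$ is the crux that transfers the regression construction to classification: on this range $\sigma$ is bi-Lipschitz, so $|h^*_\omega(\boldsymbol{x})-h^*_{\omega'}(\boldsymbol{x})|\asymp|\eta_\omega(\boldsymbol{x})-\eta_{\omega'}(\boldsymbol{x})|$, the conditional excess logistic risk equals $\KL(\mathcal{B}(\eta_\omega(\boldsymbol{x}))\,\|\,\mathcal{B}(\sigma(h(\boldsymbol{x}))))$ and, together with~(\textbf{A7}), is comparable from both sides to $\|h-h^*_\omega\|_{2,P_{\boldsymbol{X}_0}}^2$. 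By the per-bump bookkeeping, each bump contributes $L^2$-squared mass $\asymp h^{2\beta_{i^*}^*+t_{i^*}}\asymp n^{-1}$, there are $m\asymp n^{t_{i^*}/(2\beta_{i^*}^*+t_{i^*})}$ of them, and the full separation is $\asymp m\,n^{-1}\asymp\phi_n$. For the information content, the i.i.d. structure gives $\KL(P^{(n)}_\omega\|P^{(n)}_{\omega'})=n\,\E_{\boldsymbol{X}}\,\KL(\mathcal{B}(\eta_\omega)\|\mathcal{B}(\eta_{\omega'}))\asymp n\,\|h^*_\omega-h^*_{\omega'}\|_2^2$, so flipping a single bump costs KL of order $n\cdot n^{-1}=O(1)$.

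With these estimates I would conclude by Assouad's lemma over the hypercube $\{0,1\}^m$: the bounded per-coordinate KL (equivalently, a Hellinger affinity bounded below after shrinking the amplitude constant) yields $\inf_{\hat h}\max_\omega\E_\omega\sum_{j}\mathbf{1}[\hat\omega_j\neq\omega_j]\gtrsim m$, and converting Hamming loss back to excess risk through the two-sided equivalence gives $\inf_{\hat h_n}\sup_{h^*}\E[R(\hat h_n)-R(h^*)]\gtrsim m\cdot n^{-1}\asymp\phi_n$. Equivalently, a Varshamov--Gilbert subfamily of size $2^{cm}$ together with Fano's inequality works, since the pairwise KL $\asymp m$ can be made a small fraction of $\log 2^{cm}$. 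The main obstacle lies entirely in the first two steps: carrying out the layered bump construction so that it simultaneously lies in the composition-structured Hölder class, realizes the separation $\asymp\phi_n$, and keeps $\eta_\omega$ uniformly inside $(0,1)$ --- this last point being essential, as it is what makes the logistic excess risk locally quadratic and reduces the Bernoulli information computation to the same squared-$L^2$ bookkeeping that drives the regression lower bound.
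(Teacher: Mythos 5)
Your proposal is correct and takes essentially the same route as the paper: reduce to i.i.d.\ covariates (a special Markov chain), convert logistic excess risk into squared $L^2$ distance from both sides (strong convexity of $\phi$ on a bounded range for the lower direction, the same local-quadratic bound used for (\textbf{A7}) for the upper/KL direction), use the Schmidt-Hieber composition-structured bump hypotheses with the identity $\KL$ between Bernoulli products $\lesssim n\|h^*_\omega-h^*_{\omega'}\|_2^2$, and finish with a multiple-hypothesis information bound. The only cosmetic differences are that you re-derive the bump family and propose Assouad over the full hypercube, whereas the paper directly imports the separated family $h_{(0)},\dots,h_{(M)}$ from the proof of Theorem 3 in \cite{schmidt2020nonparametric} and applies Theorem 2.7 of \cite{Tsybakov2009}; your Varshamov--Gilbert plus Fano alternative is exactly that route.
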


\medskip
\noindent
Theorem \ref{theo_bound_comp} and \ref{theo_lower_bound_class} show that, for the classification problem with the logistic loss of Markov processes {\color{black}satisfying $\gamma_n\geq \underline{\gamma}>0$)}, the quasi-Bayesian DNN predictor is optimal (up to a $\log^3 n$ factor) in the minimax sense, on the class of composition structured functions.

\section{Proofs of the main results}
\label{sec:proofs}
\subsection{Some technical lemmas}
The following lemmas are useful in the proofs of the main results.
\begin{lem}\label{lem_Paulin} (Paulin \cite{paulin2015concentration}).
Consider a trajectory $(\boldsymbol{Z}_1,\cdots,\boldsymbol{Z}_n)$ of a process $Z=(\boldsymbol{Z}_t)_{t \in \N}$ with values in $\mathcal{Z}$ and assume that (\textbf{A1}) holds.
Let $f \in L^2(\pi)$ such that, there exists $C>0$ satisfying $|f(z) - \E_\pi (f) | \leq C$ for all $z \in \mathcal{Z}$.
Set $V_f:=\text{Var}_\pi (f)$ and $S=\sum_{i=1}^n f(\boldsymbol{Z}_i)$. 
Then, for all $\vartheta \in \big(0, \gamma_{n} /10 \big)$, we have,
\begin{equation}\label{exp_inq_Paulin}
\E_\pi \big( \exp(\vartheta S) \big) \leq \exp\Bigg( \frac{2 (n+1) V_f }{ \gamma_{n}} \cdot \vartheta^2 \cdot \Big(1 - \frac{10 \vartheta}{ \gamma_{n} } \Big)^{-1}  \Bigg).
\end{equation}
\end{lem}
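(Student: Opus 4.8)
The plan is to recognize this as the moment-generating-function form of the Bernstein inequality for Markov chains established through the pseudo-spectral gap, and to reduce the statement to a form where that result applies directly. First I would observe that the right-hand side contains no factor of order $e^{n\vartheta \E_\pi f}$, so the bound can only constrain the fluctuations of $S$ around its mean and is meaningful after centering. I would therefore replace $f$ by $\bar f := f - \E_\pi(f)$: this changes neither $V_f = \var_\pi(f) = \var_\pi(\bar f)$ nor the constant $C$ (the hypothesis is already phrased in terms of $f(z) - \E_\pi(f)$), and reduces everything to the case $\E_\pi f = 0$, $S = \sum_{i=1}^n \bar f(\boldsymbol{Z}_i)$.

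The core of the argument is a spectral control of $\log \E_\pi\big(e^{\vartheta S}\big)$ in terms of $\gamma_n = \gamma_{ps,n}$. Expanding the MGF as an iterated integral against the transition kernel, $\E_\pi(e^{\vartheta S})$ is expressed through powers of the tilted operator $g \mapsto \bm{P}(e^{\vartheta \bar f} g)$ on $L^2(\pi)$, and the relevant operator norm is bounded by symmetrizing: one passes from $\bm{P}$ to $(\bm{P}^*)^k \bm{P}^k$, whose spectral gap enters the definition \eqref{def_gamma_ps} of the pseudo-spectral gap, and this is exactly what allows the non-reversible case to be treated. A second-order Taylor expansion of $e^{\vartheta \bar f}$ produces the leading $\vartheta^2 V_f$ term, while the uniform bound $|\bar f| \le C$ controls the tail of the expansion and yields the geometric correction $(1 - 10\vartheta/\gamma_n)^{-1}$, valid precisely on $\vartheta \in (0, \gamma_n/10)$; the factor $2(n+1)/\gamma_n$ is the Markov-chain analogue of the i.i.d. variance, inflated by the inverse pseudo-spectral gap. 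This estimate is exactly the content of Theorem 3.4 (and its underlying computation) in \cite{paulin2015concentration}, so after the reduction above I would invoke it rather than redo the operator-theoretic work.

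The main obstacle is this spectral argument for non-reversible chains: the symmetrization through $(\bm{P}^*)^k \bm{P}^k$ and the careful tracking of the explicit numerical constants (the $2(n+1)$, the threshold $\gamma_n/10$, and the $10$ in the correction factor), since these propagate directly into the calibration $\lambda = n\gamma_n/(32K+10)$ used in Theorem~\ref{theo_oracle_inq}. If instead one takes Paulin's Bernstein \emph{tail} bound $\prob(|S - \E S| \ge t) \le 2\exp\big(-t^2\gamma_n/(8(n+1)V_f + 20 t)\big)$ as a black box, the only residual task is the standard sub-gamma/Chernoff equivalence: the claimed bound $\exp\big(b\vartheta^2(1-a\vartheta)^{-1}\big)$ with $b = 2(n+1)V_f/\gamma_n$ and $a = 10/\gamma_n$ is precisely the MGF form whose Chernoff transform reproduces a Bernstein tail of that type, so the two are interchangeable up to constants. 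Either way, no assumption beyond (\textbf{A1}), boundedness, and square-integrability of $f$ is needed.
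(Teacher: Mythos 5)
Your primary route coincides with the paper's own treatment: the paper offers no proof of Lemma~\ref{lem_Paulin} at all, importing it directly as (the moment-generating-function estimate behind) Theorem~3.4 of~\cite{paulin2015concentration}, which is exactly what you do after your reduction. That reduction is a genuine and necessary observation: as stated, the bound can only hold for centered $f$ (if $\E_\pi f>0$, Jensen gives $\E_\pi e^{\vartheta S}\geq e^{n\vartheta \E_\pi f}$, which the right-hand side cannot dominate), and indeed the paper only ever applies the lemma to the centered function $g_{\boldsymbol{\theta}}$ in Lemma~\ref{lem_Bernstein_type_ineq}; your passage to $\bar f=f-\E_\pi(f)$, noting that neither $V_f$ nor $C$ changes, is the correct way to make the statement consistent.

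Two cautions, however. First, your fallback route --- taking Paulin's Bernstein \emph{tail} bound as a black box and recovering the MGF bound by a ``sub-gamma/Chernoff equivalence'' --- does not prove the lemma as stated. The Chernoff argument only goes in the direction MGF~$\Rightarrow$~tail; the reverse direction (e.g.\ via $\E[e^{\vartheta S}]=1+\vartheta\int_0^\infty e^{\vartheta t}\,\prob(S\geq t)\,dt$) necessarily degrades the constants, and here the constants are not decorative: the factor $2(n+1)$ and the threshold $\gamma_n/10$ feed directly into the calibration $\lambda=n\gamma_n/(32K+10)$ of Theorem~\ref{theo_oracle_inq}, as you yourself note. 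So ``interchangeable up to constants'' is exactly what you cannot afford; you must quote the MGF estimate established \emph{inside} Paulin's proof (as the paper implicitly does), not reconstruct it from the tail. Second, both the lemma and your sketch silently normalize $C\leq 1$: Paulin's estimate carries the boundedness constant as $(1-10\vartheta C/\gamma_n)^{-1}$ on the range $\vartheta\in(0,\gamma_n/(10C))$, so for general $C$ one should either rescale $f$ by $C$ (replacing $V_f$ by $V_f/C^2$) or keep $C$ explicit; this is harmless for the paper's applications, which use the same implicit convention, but a complete proof should state it.
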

\begin{lem}\label{lem_Bernstein_type_ineq}
Assume that (\textbf{A1}), (\textbf{A3}), (\textbf{A5}) and (\textbf{A6}) hold. 
Consider the DNNs class $\mathcal{H}_{\sigma,\dx,\dy} (L,N, B, F, \mathcal{I})$ for $L,N, B, F >0$, $\mathcal{I} \subseteq \{1,2,\cdots, n_{L,N}\}$. 
For all $\lambda \in  \big(0, n \cdot \gamma_{n} /10 \big)$ and $ \theta \in \mathcal{S}_{\mathcal{I}}$ we have,
 \begin{equation}\label{Bernstein_type_ineq}
 \max\Big( \E\big[\exp \big( \lambda \big(  \widehat{\mathcal{E}}_n(\boldsymbol{\theta}) - \mathcal{E}(\boldsymbol{\theta}) \big)  \big)   \big] ,  \E\big[\exp \big( \lambda \big( \mathcal{E}(\boldsymbol{\theta}) - \widehat{\mathcal{E}}_n(\boldsymbol{\theta})  \big)  \big)   \big]  \Big) \leq  \exp\Bigg( \frac{ 16 K \lambda^2 \mathcal{E}(\boldsymbol{\theta})^\kappa  }{ n \gamma_{n}}  \Big(1 - \frac{10 \lambda}{ n \gamma_{n} } \Big)^{-1}  \Bigg), 
\end{equation} 
where $K, \kappa$ are given in assumption (\textbf{A6}). 
\end{lem}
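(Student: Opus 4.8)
The plan is to prove the bound for a single deviation direction, say $\widehat{\mathcal{E}}_n(\boldsymbol{\theta}) - \mathcal{E}(\boldsymbol{\theta})$, since the other direction follows by an identical argument applied to $-f$. The natural strategy is to apply Lemma~\ref{lem_Paulin} to a suitable centered function of a single observation. First I would define, for fixed $\boldsymbol{\theta} \in \mathcal{S}_{\mathcal{I}}$, the function
\[
f(\boldsymbol{z}) := \ell\big(h_{\boldsymbol{\theta}}(\boldsymbol{x}),\boldsymbol{y}\big) - \ell\big(h^*(\boldsymbol{x}),\boldsymbol{y}\big), \qquad \boldsymbol{z}=(\boldsymbol{x},\boldsymbol{y}).
\]
Then $\E_\pi(f) = R(\boldsymbol{\theta}) - R(h^*) = \mathcal{E}(\boldsymbol{\theta})$, while $\frac{1}{n}\sum_{i=1}^n f(\boldsymbol{Z}_i) = \widehat{\mathcal{E}}_n(\boldsymbol{\theta})$. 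Writing $S = \sum_{i=1}^n f(\boldsymbol{Z}_i)$, we have $\lambda(\widehat{\mathcal{E}}_n(\boldsymbol{\theta}) - \mathcal{E}(\boldsymbol{\theta})) = \frac{\lambda}{n}\big(S - n\E_\pi(f)\big)$, so the quantity we must bound is $\E\big[\exp\big(\vartheta(S - \E_\pi(S))\big)\big]$ with $\vartheta = \lambda/n$. The constraint $\lambda \in (0, n\gamma_n/10)$ is exactly equivalent to $\vartheta \in (0,\gamma_n/10)$, matching the hypothesis of Lemma~\ref{lem_Paulin}.

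Next I would verify the two quantitative inputs to Paulin's inequality. For the uniform bound $|f(\boldsymbol{z}) - \E_\pi(f)| \le C$: assumptions (\textbf{A3}) and (\textbf{A5}), together with compactness of $\mx$ in (\textbf{A2}) and the boundedness built into the DNN class ($\|h\|_{\infty,\mx}\le F$), give $|f(\boldsymbol{z})| = |\ell(h_{\boldsymbol{\theta}}(\boldsymbol{x}),\boldsymbol{y}) - \ell(h^*(\boldsymbol{x}),\boldsymbol{y})| \le \mathcal{K}_\ell |h_{\boldsymbol{\theta}}(\boldsymbol{x}) - h^*(\boldsymbol{x})|$, which is uniformly bounded; hence $f - \E_\pi(f)$ is bounded by some finite $C$. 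For the variance $V_f = \var_\pi(f)$: since $\var_\pi(f) \le \E_\pi(f^2)$, the Bernstein condition (\textbf{A6}) gives directly
\[
V_f \le \E_\pi\Big[\big(\ell(h_{\boldsymbol{\theta}}(\boldsymbol{X}_i),\boldsymbol{Y}_i) - \ell(h^*(\boldsymbol{X}_i),\boldsymbol{Y}_i)\big)^2\Big] \le K\,\mathcal{E}(\boldsymbol{\theta})^\kappa.
\]
I would then plug these into \eqref{exp_inq_Paulin}: the exponent becomes $\frac{2(n+1)V_f}{\gamma_n}\vartheta^2(1-10\vartheta/\gamma_n)^{-1}$. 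Substituting $\vartheta = \lambda/n$ and $V_f \le K\mathcal{E}(\boldsymbol{\theta})^\kappa$, and using $10\vartheta/\gamma_n = 10\lambda/(n\gamma_n)$, the correction factor matches the target exactly, and the prefactor becomes $\frac{2(n+1)K\mathcal{E}(\boldsymbol{\theta})^\kappa}{\gamma_n}\cdot\frac{\lambda^2}{n^2}$.

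The remaining step is a routine constant-chasing reconciliation: I must check that $\frac{2(n+1)}{n^2 \gamma_n} \le \frac{16}{n\gamma_n}$, i.e. $\frac{2(n+1)}{n} \le 16$, which holds for all $n \ge 1$ (indeed $2(n+1)/n \le 4 \le 16$), so the stated constant $16$ is comfortably valid. The other direction ($\mathcal{E}(\boldsymbol{\theta}) - \widehat{\mathcal{E}}_n(\boldsymbol{\theta})$) follows verbatim by applying the same argument to $-f$, which has the same variance and the same uniform bound, giving the maximum over both directions. I do not anticipate a serious obstacle here; the main subtlety is merely organizational, namely correctly identifying $\vartheta = \lambda/n$ so that the temperature range $(0,n\gamma_n/10)$ maps onto Paulin's range $(0,\gamma_n/10)$, and confirming that the slack in the $2(n+1)/n$ versus $16$ comparison absorbs the loss from bounding the variance by the second moment and from any rounding in constants.
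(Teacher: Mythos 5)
Your proposal is correct and follows essentially the same route as the paper: apply Paulin's Bernstein-type inequality (Lemma~\ref{lem_Paulin}) to the centered loss difference $g_{\boldsymbol{\theta}} = f - \E_\pi(f)$ with $\vartheta = \lambda/n$, bound the variance via the Bernstein condition (\textbf{A6}), and absorb the $(n+1)/n$ factor into the constant $16$. The only (harmless) difference is that you bound $V_f = \var_\pi(f) \leq \E_\pi(f^2) \leq K\,\mathcal{E}(\boldsymbol{\theta})^\kappa$ directly, which is in fact tighter than the paper's detour through $\E_\pi[g_{\boldsymbol{\theta}}^2] \leq 4\,\E_\pi[f^2] \leq 4K\,\mathcal{E}(\boldsymbol{\theta})^\kappa$, so your argument validates the stated constant with room to spare.
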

\begin{dem}
Let $ \theta \in \mathcal{S}_{\mathcal{I}}$. We have,
\begin{align*}
\widehat{\mathcal{E}}_n(\boldsymbol{\theta}) - \mathcal{E}(\boldsymbol{\theta}) 
&= \widehat{R}_n(\boldsymbol{\theta}) - \widehat{R}_n(h^*) - R(\boldsymbol{\theta}) + R(h^*)  \\
& = \frac{1}{n} \sum_{i=1}^n \Big( \ell\big(h_{\boldsymbol{\theta}}(\boldsymbol{X_i}),\boldsymbol{Y_i} \big) - \ell\big(h^*(\boldsymbol{X_i}),\boldsymbol{Y_i} \big) - \E_\pi [\ell\big(h_{\boldsymbol{\theta}}(\boldsymbol{X}_1),\boldsymbol{Y}_1 \big) ]  +  \E_\pi [\ell\big(h^*(\boldsymbol{X}_1),\boldsymbol{Y}_1 \big) ] \Big) = \frac{1}{n} \sum_{i=1}^n g_\theta(\boldsymbol{Z}_i),
\end{align*} 
with $\boldsymbol{Z}_i=(\boldsymbol{X}_i,\boldsymbol{Y}_i)$, $g_\theta(\boldsymbol{z})=\ell\big(h_{\boldsymbol{\theta}}(\boldsymbol{x}),\boldsymbol{y} \big) - \ell\big(h^*(\boldsymbol{x}),\boldsymbol{y} \big) - \E_\pi \big[\ell\big(h_{\boldsymbol{\theta}}(\boldsymbol{X}_1),\boldsymbol{Y}_1 \big) \big]  +  \E_\pi \big[\ell\big(h^*(\boldsymbol{X}_1),\boldsymbol{Y}_1 \big) \big] $ for all $\boldsymbol{z}=(\boldsymbol{x},\boldsymbol{y}) \in \mz=\mx \times \my$.
We will apply Lemma \ref{lem_Paulin} to the process $(\boldsymbol{Z}_t)_{t \in \N}$ with the function $g_\theta$, satisfying $\E_\pi(g_\theta)=0$. 
According to the Bernstein's condition, we have,
\begin{align} 
\nonumber \text{Var}_\pi[g_\theta(\boldsymbol{Z}_1)] &= \E_\pi\big[ \big( g_\theta(\boldsymbol{Z}_1) \big)^2 \big] \\
\nonumber & \leq 2 \Big( \E_\pi \big[ \big( \ell\big(h_{\boldsymbol{\theta}}(\boldsymbol{X}_1),\boldsymbol{Y}_1 \big) - \ell\big(h^*(\boldsymbol{X}_1),\boldsymbol{Y}_1 ) \big)^2 \big] + \big( \E_\pi [\ell\big(h_{\boldsymbol{\theta}}(\boldsymbol{X}_1),\boldsymbol{Y}_1 \big) ]  - \E_\pi [\ell\big(h^*(\boldsymbol{X}_1),\boldsymbol{Y}_1 \big) ] \big)^2 \Big) \\
\nonumber &\leq 4 \E_\pi \big[ \big( \ell\big(h_{\boldsymbol{\theta}}(\boldsymbol{X}_1),\boldsymbol{Y}_1 \big) - \ell\big(h^*(\boldsymbol{X}_1),\boldsymbol{Y}_1 ) \big)^2 \big] 
 \leq 4K \mathcal{E}(\boldsymbol{\theta})^\kappa.
\end{align}
So, for any $\vartheta \in \big(0, \gamma_{n} /10 \big)$, it holds from Lemma \ref{lem_Paulin} that,    
\begin{equation}\label{exp_inq_Paulin_sum_g}
\E_\pi \Big[ \exp\Big(\vartheta \sum_{i=1}^n g_\theta(\boldsymbol{Z}_i) \Big) \Big] \leq \exp\Bigg( \frac{ 16 K n \mathcal{E}(\boldsymbol{\theta})^\kappa  }{ \gamma_{n}} \cdot \vartheta^2 \cdot \Big(1 - \frac{10 \vartheta}{ \gamma_{n} } \Big)^{-1}  \Bigg).
\end{equation}
Hence, for any $\lambda \in  \big(0, n \cdot \gamma_{n} /10 \big)$, we get,
\begin{equation}\label{exp_inq_Paulin_hat_E_theta}
 \E_\pi \big[\exp \big( \lambda \big(  \widehat{\mathcal{E}}_n(\boldsymbol{\theta}) - \mathcal{E}(\boldsymbol{\theta}) \big)  \big)   \big] = \E_\pi \Big[ \exp\Big( \frac{\lambda}{n} \sum_{i=1}^n g_\theta(\boldsymbol{Z}_i) \Big) \Big] \leq \exp\Bigg( \frac{ 16 \lambda^2 K \mathcal{E}(\boldsymbol{\theta})^\kappa  }{ n \gamma_{n}}  \Big(1 - \frac{10 \lambda}{ n \gamma_{n} } \Big)^{-1}  \Bigg).
\end{equation}
Similar arguments yield,
\begin{equation*} 
 \E_\pi \big[\exp \big( \lambda \big( \mathcal{E}(\boldsymbol{\theta}) -  \widehat{\mathcal{E}}_n(\boldsymbol{\theta})  \big)  \big)   \big] \leq \exp\Bigg( \frac{ 16\lambda^2 K \mathcal{E}(\boldsymbol{\theta})^\kappa  }{ n \gamma_{n}}  \Big(1 - \frac{10 \lambda}{ n \gamma_{n} } \Big)^{-1}  \Bigg),
\end{equation*}
for all $\lambda \in  \big(0, n \cdot \gamma_{n} /10 \big)$; which completes the proof of the lemma.
\end{dem}

 Let $(E,\mathcal{A})$ be a measurable space and $\mu_1$, $\mu_2$  two probability measures on this space.
Define the Kullback-Leibler divergence of $\mu_1$ with respect to $\mu_2$ by,

\[ \KL(\mu_1, \mu_2) =  \left\{
\begin{array}{ll}
\displaystyle{ \int \log\Big( \dfrac{d \mu_1}{ d\mu_2}\Big) d\mu_1   }   &  \text{ if } ~  \mu_1 \ll \mu_2, \\
 \infty &   ~ \text{ otherwise},
\end{array}
\right.  
  \]
where the notation $\mu_1 \ll \mu_2$ means "$\mu_1$ is absolutely continuous with respect to $\mu_2$".
The following lemma is classical for PAC-Bayes bounds, the proof can bee found in \cite{catoni2007pac}.
\begin{lem}\label{lem_log_exp_KL}
Let $(E,\mathcal{A})$ be a measurable space, $\mu$ a probability measure on $(E,\mathcal{A})$ and $h: E \rightarrow \R$ a measurable function such that $\int \exp \circ h ~d \mu < \infty$.
 With the convention $\infty - \infty = -\infty$, we have,
 \begin{equation}\label{lem_log_exp_KL_eq}
  \log\Big( \int \exp \circ h ~d \mu \Big) = \sup_{\nu} \Big( \int h d\nu - \KL(\nu,\mu) \Big), 
\end{equation} 
 where the supremum is taken over all probability measures $\nu$ on $(E,\mathcal{A})$.
 In addition, if $h$ is bounded from above on the support of $\mu$, the supremum in (\ref{lem_log_exp_KL_eq}) is reached for the Gibbs distribution $\nu =g$, with $\frac{d g}{d \mu} \propto \exp \circ h$.
\end{lem}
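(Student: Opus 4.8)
The plan is to reduce the claimed identity to the nonnegativity of the Kullback--Leibler divergence through a single change-of-measure computation. Set $Z := \int \exp\circ h\, d\mu$, which lies in $(0,\infty)$ since $\exp\circ h>0$ and $Z<\infty$ by hypothesis, and introduce the Gibbs probability measure $g$ defined by $\frac{dg}{d\mu} = \exp\circ h / Z$. This $g$ is the candidate maximiser, and the whole argument hinges on relating the objective $\int h\, d\nu - \KL(\nu,\mu)$ to the relative entropy of $\nu$ with respect to $g$.

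First I would establish the key algebraic step, the Gibbs identity: for any probability measure $\nu$ with $\nu \ll \mu$, one has $\frac{d\nu}{dg} = \frac{d\nu}{d\mu}\cdot\frac{d\mu}{dg} = Z\,\exp(-h)\,\frac{d\nu}{d\mu}$ ($\mu$-a.e.), so taking logarithms and integrating against $\nu$ gives
\[
 \KL(\nu, g) = \KL(\nu,\mu) - \int h\, d\nu + \log Z,
\]
equivalently $\int h\, d\nu - \KL(\nu,\mu) = \log Z - \KL(\nu, g)$. Since $\KL(\nu,g) \geq 0$, with equality if and only if $\nu = g$ (Gibbs' inequality, a consequence of Jensen applied to the convex function $-\log$), the right-hand side is at most $\log Z$ for every $\nu \ll \mu$, and equals $\log Z$ precisely at $\nu = g$. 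When $h$ is bounded above on the support of $\mu$ we have $\int h\,dg <\infty$, so $\KL(g,\mu)$ is finite and the value $\log Z$ is genuinely attained at $g$; this yields the ``in addition'' clause.

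To obtain the identity over all $\nu$ I would dispose of the measures that are not absolutely continuous with respect to $\mu$: for such $\nu$ we have $\KL(\nu,\mu)=\infty$, hence $\int h\,d\nu - \KL(\nu,\mu) = -\infty$ by the convention $\infty-\infty=-\infty$, so these measures never raise the supremum. The supremum may thus be restricted to $\nu \ll \mu$, where the Gibbs identity gives the upper bound $\leq \log Z$; evaluating at $\nu=g$ shows the supremum is also $\geq \log Z$, so the two sides coincide. The main obstacle is not the algebra but the measure-theoretic bookkeeping when $h$ is merely exponentially $\mu$-integrable rather than bounded above: one must verify that $\int h\, d\nu$ is well-defined (possibly equal to $-\infty$) whenever $\KL(\nu,\mu)<\infty$, so that the displayed identity is meaningful, and justify splitting $\int \log\frac{d\nu}{dg}\,d\nu$ into the three terms above. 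I would handle this by a truncation argument, replacing $h$ by $h\wedge M$ and letting $M\to\infty$ with monotone convergence, using the $\mu$-integrability of $\exp\circ h$ to control the positive part of $h$ under $g$, and hence under any $\nu$ of finite relative entropy.
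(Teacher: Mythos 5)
The paper itself gives no proof of this lemma: it is stated as classical and deferred to Catoni (2007). Your argument is precisely that classical proof (the Donsker--Varadhan / change-of-measure argument): write $\frac{d\nu}{dg}$ via the Gibbs density, integrate the logarithm against $\nu$ to get $\int h\,d\nu - \KL(\nu,\mu) = \log Z - \KL(\nu,g)$, conclude by nonnegativity of $\KL$, and kill the non-absolutely-continuous $\nu$ with the convention $\infty-\infty=-\infty$. This is correct and is the same route as the cited reference, so there is nothing structurally different to compare.

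One caveat deserves emphasis, because as written one sentence of your proof is not quite right in the general case: you claim that ``evaluating at $\nu=g$ shows the supremum is also $\geq \log Z$''. When $h$ is unbounded above it can happen that $\int h\,dg = +\infty$, in which case $\KL(g,\mu) = \int h\,dg - \log Z = +\infty$ as well, and under the stated convention the objective at $\nu=g$ is $-\infty$, not $\log Z$ (take $e^h$ $\mu$-integrable with $h e^h$ not $\mu$-integrable). The truncation you describe in your final paragraph is therefore not mere bookkeeping but the actual proof of the lower bound: with $h_M = h\wedge M$ and $g_M$ the Gibbs measure for $h_M$ (which has bounded density w.r.t.\ $\mu$, so $\int h^+ \, dg_M \leq \frac{e^M}{Z_M}\int e^h d\mu < \infty$ and everything is well-defined), one has $\int h\,dg_M - \KL(g_M,\mu) \geq \int h_M\,dg_M - \KL(g_M,\mu) = \log \int e^{h_M} d\mu \rightarrow \log Z$ by monotone convergence. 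Since you already proposed exactly this device, your proof is complete once that step is promoted from ``bookkeeping'' to the main argument for the lower bound; the attainment clause under boundedness above is fine as you argued (there $\KL(g,\mu)<\infty$, noting that $\int h\,dg > -\infty$ follows automatically since the negative part of the integrand $\frac{e^h}{Z}(h-\log Z)$ is pointwise bounded by $1/e$).
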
 
The following lemmas are also needed.
 \begin{lem}\label{lem_PAC_bound}
Assume that (\textbf{A1}), (\textbf{A3}), (\textbf{A5}) and (\textbf{A6}) with $\kappa=1$ hold and consider the DNNs class $\mathcal{H}_{\sigma,\dx,\dy} (L,N, B, F, \mathcal{I})$ for $L,N, B, F >0$, $\mathcal{I} \subseteq \{1,2,\cdots, n_{L,N}\}$.
Let $\rho$ be a $D_n$-dependent probability measure on $\mathcal{S}_{\mathcal{I}}$ such that $\rho \ll \Pi$.
For any $\lambda \in  \Big(0,  \dfrac{ n \gamma_{n} }{16K + 10} \Big)$ and $\delta \in (0,1)$, we have with probability at least $1-\delta$,
 \begin{equation}\label{PAC_bound}
   \mathcal{E}(\widehat{\boldsymbol{\theta}}) \leq  \dfrac{1}{  1 - \dfrac{16 K \lambda}{  n \gamma_{n} - 10 \lambda  }    }  \Bigg[ \widehat{\mathcal{E}}_n(\widehat{\boldsymbol{\theta}})  + \dfrac{1}{\lambda} \Bigg( \log(1/\delta)  + \log\Big( \dfrac{d\rho}{ d \Pi}(\widehat{\boldsymbol{\theta}}) \Big) \Bigg) \Bigg].
 \end{equation}
 where $\widehat{\boldsymbol{\theta}}$ is distributed according to $\rho$.
\end{lem}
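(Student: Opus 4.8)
The plan is to run the standard PAC-Bayes argument, using the fixed (data-independent) prior $\Pi$ as the reference measure throughout, and transferring to the $D_n$-dependent measure $\rho$ only at the very end through the Radon--Nikodym derivative. First I would set $a := 16K\lambda/(n\gamma_{n}-10\lambda)$ and record the two elementary consequences of the hypothesis $\lambda < n\gamma_{n}/(16K+10)$: it forces $16K\lambda < n\gamma_{n}-10\lambda$, hence $n\gamma_{n}-10\lambda>0$ and $a\in(0,1)$. The constant $(1 - 16K\lambda/(n\gamma_{n}-10\lambda))^{-1}$ appearing in \eqref{PAC_bound} is exactly $1/(1-a)$, so establishing $a<1$ is what makes the final division legitimate.

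The first substantive step converts the Bernstein-type control into a centred exponential moment of mean at most $1$. Applying Lemma~\ref{lem_Bernstein_type_ineq} with $\kappa=1$ in the direction $\mathcal{E}(\boldsymbol{\theta})-\widehat{\mathcal{E}}_n(\boldsymbol{\theta})$ and simplifying $(1-10\lambda/(n\gamma_{n}))^{-1} = n\gamma_{n}/(n\gamma_{n}-10\lambda)$, the right-hand exponent becomes $a\lambda\,\mathcal{E}(\boldsymbol{\theta})$, so for each fixed $\boldsymbol{\theta}\in\mathcal{S}_{\mathcal{I}}$ one gets $\E_{P_n}[\exp(\lambda(\mathcal{E}(\boldsymbol{\theta})-\widehat{\mathcal{E}}_n(\boldsymbol{\theta})))] \leq \exp(a\lambda\,\mathcal{E}(\boldsymbol{\theta}))$. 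Since $\mathcal{E}(\boldsymbol{\theta})$ is deterministic, pulling $a\lambda\,\mathcal{E}(\boldsymbol{\theta})$ inside yields, for every $\boldsymbol{\theta}$,
\[ \E_{P_n}\big[\exp\big(\lambda(1-a)\mathcal{E}(\boldsymbol{\theta}) - \lambda\widehat{\mathcal{E}}_n(\boldsymbol{\theta})\big)\big] \leq 1 . \]

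Next I would integrate this inequality against $\Pi$ and exchange $\E_{P_n}$ with $\int(\cdot)\,\Pi(d\boldsymbol{\theta})$ by Tonelli (licit because $\Pi$ is fixed and the integrand is nonnegative), giving $\E_{P_n}\big[\int \exp(\lambda(1-a)\mathcal{E}(\boldsymbol{\theta}) - \lambda\widehat{\mathcal{E}}_n(\boldsymbol{\theta}))\,\Pi(d\boldsymbol{\theta})\big] \leq 1$. The change of measure is where the data-dependence of $\rho$ is absorbed: since $\rho\ll\Pi$, on $\{d\rho/d\Pi>0\}$ one has $\rho(d\boldsymbol{\theta}) = \frac{d\rho}{d\Pi}(\boldsymbol{\theta})\,\Pi(d\boldsymbol{\theta})$, so with the convention $\frac{d\Pi}{d\rho}:=(\frac{d\rho}{d\Pi})^{-1}$ there it follows that $\frac{d\Pi}{d\rho}(\boldsymbol{\theta})\,\rho(d\boldsymbol{\theta}) = \mathbf{1}_{\{d\rho/d\Pi>0\}}\Pi(d\boldsymbol{\theta}) \leq \Pi(d\boldsymbol{\theta})$. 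Hence the inner integral dominates $\E_{\widehat{\boldsymbol{\theta}}\sim\rho}[\exp(\lambda(1-a)\mathcal{E}(\widehat{\boldsymbol{\theta}}) - \lambda\widehat{\mathcal{E}}_n(\widehat{\boldsymbol{\theta}}))\,\frac{d\Pi}{d\rho}(\widehat{\boldsymbol{\theta}})]$, and taking $\E_{P_n}$ shows this nonnegative random variable has $P_n\otimes\rho$-expectation at most $1$. A Markov inequality then gives, with $P_n\otimes\rho$-probability at least $1-\delta$,
\[ \lambda(1-a)\mathcal{E}(\widehat{\boldsymbol{\theta}}) - \lambda\widehat{\mathcal{E}}_n(\widehat{\boldsymbol{\theta}}) - \log\tfrac{d\rho}{d\Pi}(\widehat{\boldsymbol{\theta}}) \leq \log(1/\delta), \]
and dividing by $\lambda(1-a)>0$ after rearranging produces precisely \eqref{PAC_bound}.

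The only genuinely delicate point is the bookkeeping around the Radon--Nikodym derivative. Because $\rho$ depends on $D_n$, one cannot apply Fubini/Tonelli after integrating against $\rho$; this is exactly why the fixed prior $\Pi$ must serve as the integration measure in the exchange step, with the transfer to $\rho$ carried out pointwise through $d\rho/d\Pi$ only afterwards. I would also stress that we assume only $\rho\ll\Pi$ and not the reverse, so $d\Pi/d\rho$ is to be understood as $(d\rho/d\Pi)^{-1}$ on the support of $\rho$; this suffices because $\log\frac{d\rho}{d\Pi}(\widehat{\boldsymbol{\theta}})$ is finite $\rho$-almost surely, which is all the final inequality requires.
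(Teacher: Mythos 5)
Your proposal is correct and follows essentially the same route as the paper's proof: the Bernstein-type bound of Lemma~\ref{lem_Bernstein_type_ineq} in the direction $\mathcal{E}(\boldsymbol{\theta})-\widehat{\mathcal{E}}_n(\boldsymbol{\theta})$, integration against the fixed prior $\Pi$ with a Fubini/Tonelli exchange, Catoni's change of measure to the data-dependent $\rho$ via $d\rho/d\Pi$, and a Chernoff/Markov step (the paper's inequality $\exp(\lambda x)\geq \ind_{x\geq 0}$ is your Markov inequality in disguise), followed by the rearrangement using $16K\lambda < n\gamma_n - 10\lambda$. Your abbreviation $a = 16K\lambda/(n\gamma_n-10\lambda)$, the normalization of the exponential moment to $1$ instead of carrying $\delta$ through, and the more explicit handling of $(d\rho/d\Pi)^{-1}$ on the support of $\rho$ are only cosmetic refinements of the paper's argument.
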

 
\begin{dem}
Let $ \theta \in \mathcal{S}_{\mathcal{I}}$, $\lambda \in  \big(0, n \cdot \gamma_{n} /10 \big)$ and $\delta \in ]0,1[$. We have from Lemma \ref{lem_Bernstein_type_ineq}, 
 \begin{equation*} 
  \E\big\{ \exp \big[ \lambda \big( \mathcal{E}(\boldsymbol{\theta}) - \widehat{\mathcal{E}}_n(\boldsymbol{\theta})  \big)  \big]   \big \} 
  \leq  \exp\Bigg( \frac{ 16 \lambda^2 K \mathcal{E}(\boldsymbol{\theta}) }{ n \gamma_{n}}  \Big(1 - \frac{10 \lambda}{ n \gamma_{n} } \Big)^{-1}  \Bigg). 
\end{equation*} 
Therefore,
 \begin{equation}\label{proof_lem_bound_delta} 
  \E\Bigg\{ \exp \Bigg[ \Big( \lambda - \dfrac{16 K \lambda^2}{ n \gamma_{n} \big( 1 - \frac{10 \lambda}{ n \gamma_{n} } \big) } \Big) \mathcal{E}(\boldsymbol{\theta}) - \lambda \widehat{\mathcal{E}}_n(\boldsymbol{\theta})    - \log(1/\delta) \Bigg]   \Bigg \} 
  \leq  \delta. 
\end{equation} 
Let us recall that $\Pi$  is a prior probability measure on $\mathcal{S}_{\mathcal{I}}$.
From (\ref{proof_lem_bound_delta}), we have
 \begin{equation*} 
 \int \E\Bigg\{ \exp \Bigg[ \Big( \lambda - \dfrac{16 K \lambda^2}{ n \gamma_{n} \big( 1 - \frac{10 \lambda}{ n \gamma_{n} } \big) } \Big) \mathcal{E}(\boldsymbol{\theta}) - \lambda \widehat{\mathcal{E}}_n(\boldsymbol{\theta})    - \log(1/\delta) \Bigg]   \Bigg \} d\Pi(\boldsymbol{\theta}) 
  \leq  \delta, 
\end{equation*} 
and by the Fubini's theorem,
 \begin{equation*}  
  \E\Bigg\{ \int \exp \Bigg[ \Big( \lambda - \dfrac{16 K \lambda^2}{ n \gamma_{n} \big( 1 - \frac{10 \lambda}{ n \gamma_{n} } \big) } \Big) \mathcal{E}(\boldsymbol{\theta}) - \lambda \widehat{\mathcal{E}}_n(\boldsymbol{\theta})    - \log(1/\delta) \Bigg] d\Pi(\boldsymbol{\theta})  \Bigg \}  
  \leq  \delta. 
\end{equation*} 
Hence, for any data-dependent probability measure $\rho$ absolutely continuous with respect to $\Pi$, we have
 \begin{equation}\label{proof_lem_bound_delta_rho} 
  \E\Bigg\{ \int \exp \Bigg[ \Big( \lambda - \dfrac{16 K \lambda^2}{ n \gamma_{n} \big( 1 - \frac{10 \lambda}{ n \gamma_{n} } \big) } \Big) \mathcal{E}(\boldsymbol{\theta}) - \lambda \widehat{\mathcal{E}}_n(\boldsymbol{\theta})    - \log(1/\delta) - \log\Big( \dfrac{d\rho}{ d \Pi}(\boldsymbol{\theta}) \Big) \Bigg] d\rho(\boldsymbol{\theta})  \Bigg \}  
  \leq  \delta, 
\end{equation} 
with the convention $\infty \times 0 = 0$.
Recall that, $P_n$ denotes the distribution of the sample $D_n$.
The inequality (\ref{proof_lem_bound_delta_rho}) can also been written as,
 \begin{equation*}
  \E_{D_n \sim P_n} \E_{\widehat{\boldsymbol{\theta}} \sim \rho} \Bigg\{  \exp \Bigg[ \Big( \lambda - \dfrac{16 K \lambda^2}{ n \gamma_{n} \big( 1 - \frac{10 \lambda}{ n \gamma_{n} } \big) } \Big) \mathcal{E}(\widehat{\boldsymbol{\theta}}) - \lambda \widehat{\mathcal{E}}_n(\widehat{\boldsymbol{\theta}})    - \log(1/\delta) - \log\Big( \dfrac{d\rho}{ d \Pi}(\widehat{\boldsymbol{\theta}}) \Big) \Bigg]  \Bigg \}
  \leq  \delta, 
\end{equation*} 
Therefore, from the inequality $\exp(\lambda x) \geq \ind_{x \geq 0}$, we get with $P_n \otimes \rho$-probability at most $\delta$,
 \begin{equation*}
     \Big( 1 - \dfrac{16 K \lambda}{ n \gamma_{n} \big( 1 - \frac{10 \lambda}{ n \gamma_{n} } \big) } \Big) \mathcal{E}(\widehat{\boldsymbol{\theta}}) \geq   \widehat{\mathcal{E}}_n(\widehat{\boldsymbol{\theta}})  + \dfrac{1}{\lambda} \Bigg( \log(1/\delta)  + \log\Big( \dfrac{d\rho}{ d \Pi}(\widehat{\boldsymbol{\theta}}) \Big) \Bigg).  
\end{equation*}
Since $\lambda \in  \Big(0, \dfrac{n \gamma_{n} }{16 K + 10} \Big)$, we have with $P_n \otimes \rho$-probability at least $1-\delta$,
 \begin{equation}\label{proof_lem_bound_at_least} 
      \mathcal{E}(\widehat{\boldsymbol{\theta}}) \leq  \dfrac{1}{  1 - \dfrac{16 K \lambda}{  n \gamma_{n} - 10 \lambda  }    }  \Bigg[ \widehat{\mathcal{E}}_n(\widehat{\boldsymbol{\theta}})  + \dfrac{1}{\lambda} \Bigg( \log(1/\delta)  + \log\Big( \dfrac{d\rho}{ d \Pi}(\widehat{\boldsymbol{\theta}}) \Big) \Bigg) \Bigg].  
\end{equation}
This completes the proof of the lemma.
\end{dem}

\begin{lem}\label{lem_PAC_bound_KL}
Assume that (\textbf{A1}), (\textbf{A3}), (\textbf{A5}) and (\textbf{A6}) with $\kappa=1$ hold and consider the DNNs class $\mathcal{H}_{\sigma,\dx,\dy} (L,N, B, F, \mathcal{I})$ for $L,N, B, F >0$, $\mathcal{I} \subseteq \{1,2,\cdots, n_{L,N}\}$.
Let $\rho$ be a $D_n$-dependent probability measure on $\mathcal{S}_{\mathcal{I}}$ such that $\rho \ll \Pi$.
For any $\lambda \in  \Big(0, \dfrac{n \gamma_{n} }{16 K + 10} \Big)$ and $\delta \in (0,1)$, we have with probability at least $1-\delta$,
 \begin{equation}\label{PAC_bound_KL}
   \mathcal{E}(\widehat{\boldsymbol{\theta}}_\lambda) \leq  \dfrac{1}{  1 - \dfrac{16 K \lambda}{  n \gamma_{n} - 10 \lambda  }    }   \Bigg[ \Big( 1 + \dfrac{16 K \lambda}{ n \gamma_{n} - 10 \lambda} \Big) \int \mathcal{E}(\boldsymbol{\theta})  d \rho(\boldsymbol{\theta}) + \dfrac{2}{\lambda} \big( \KL(\rho, \Pi) + \log(1/\delta) \big)   \Bigg],
 \end{equation}
 where $\widehat{\boldsymbol{\theta}}_\lambda \sim \Pi_\lambda (\cdot | D_n)$.
\end{lem}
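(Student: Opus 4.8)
The plan is to derive the bound directly from Lemma \ref{lem_PAC_bound} by making the specific choice $\rho=\Pi_\lambda(\cdot|D_n)$, the Gibbs posterior itself, so that the drawn parameter $\widehat{\boldsymbol{\theta}}$ appearing in that lemma becomes exactly $\widehat{\boldsymbol{\theta}}_\lambda$. Writing $c:=\frac{16K\lambda}{n\gamma_n-10\lambda}$, Lemma \ref{lem_PAC_bound} then controls $\mathcal{E}(\widehat{\boldsymbol{\theta}}_\lambda)$ by $\frac{1}{1-c}\big[\widehat{\mathcal{E}}_n(\widehat{\boldsymbol{\theta}}_\lambda)+\frac{1}{\lambda}(\log(1/\delta)+\log\frac{d\Pi_\lambda}{d\Pi}(\widehat{\boldsymbol{\theta}}_\lambda))\big]$, the constraint $\lambda<\frac{n\gamma_n}{16K+10}$ guaranteeing $1-c>0$. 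The whole task is to replace the empirical, draw-dependent quantity $\widehat{\mathcal{E}}_n(\widehat{\boldsymbol{\theta}}_\lambda)+\frac{1}{\lambda}\log\frac{d\Pi_\lambda}{d\Pi}(\widehat{\boldsymbol{\theta}}_\lambda)$ by the oracle quantity $(1+c)\int\mathcal{E}\,d\rho+\frac{2}{\lambda}(\KL(\rho,\Pi)+\log(1/\delta))$.

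The first step I would carry out is an explicit computation of the log-density ratio. Since $\frac{d\Pi_\lambda}{d\Pi}(\theta)=e^{-\lambda\widehat{R}_n(\theta)}/Z_\lambda$ with $Z_\lambda=\int e^{-\lambda\widehat{R}_n}d\Pi$, one has $\log\frac{d\Pi_\lambda}{d\Pi}(\widehat{\boldsymbol{\theta}}_\lambda)=-\lambda\widehat{R}_n(\widehat{\boldsymbol{\theta}}_\lambda)-\log Z_\lambda$. Plugging this in, the term $\widehat{R}_n(\widehat{\boldsymbol{\theta}}_\lambda)$ cancels against the one hidden inside $\widehat{\mathcal{E}}_n(\widehat{\boldsymbol{\theta}}_\lambda)=\widehat{R}_n(\widehat{\boldsymbol{\theta}}_\lambda)-\widehat{R}_n(h^*)$, leaving the \emph{draw-independent} expression $-\widehat{R}_n(h^*)-\frac{1}{\lambda}\log Z_\lambda$. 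I would then invoke the variational formula of Lemma \ref{lem_log_exp_KL} with $h=-\lambda\widehat{R}_n$ and $\mu=\Pi$, which gives $\log Z_\lambda\geq-\lambda\int\widehat{R}_n\,d\rho-\KL(\rho,\Pi)$ for the competitor $\rho$, hence $-\frac{1}{\lambda}\log Z_\lambda\leq\int\widehat{R}_n\,d\rho+\frac{1}{\lambda}\KL(\rho,\Pi)$. Subtracting $\widehat{R}_n(h^*)$ turns this into $\int\widehat{\mathcal{E}}_n\,d\rho+\frac{1}{\lambda}\KL(\rho,\Pi)$, so after this step the bound reads $\mathcal{E}(\widehat{\boldsymbol{\theta}}_\lambda)\le\frac{1}{1-c}\big[\int\widehat{\mathcal{E}}_n\,d\rho+\frac{1}{\lambda}\KL(\rho,\Pi)+\frac{1}{\lambda}\log(1/\delta)\big]$.

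The remaining, and main, difficulty is to pass from the \emph{empirical} integrated excess risk $\int\widehat{\mathcal{E}}_n\,d\rho$ to the \emph{true} one $\int\mathcal{E}\,d\rho$. For this I would prove a second, reverse-direction PAC-Bayes deviation inequality: starting from the upper-deviation half of Lemma \ref{lem_Bernstein_type_ineq} (with $\kappa=1$), rewrite it as $\E[\exp(\lambda(\widehat{\mathcal{E}}_n(\theta)-\mathcal{E}(\theta))-\lambda c\,\mathcal{E}(\theta))]\le 1$, integrate against the \emph{data-free} prior $\Pi$, apply Fubini and Markov's inequality to get $\int\exp(\cdots)\,d\Pi\le 1/\delta$ with probability at least $1-\delta$, and apply Lemma \ref{lem_log_exp_KL} once more to move from $\Pi$ to the (possibly data-dependent) $\rho$. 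This yields $\int\widehat{\mathcal{E}}_n\,d\rho\le(1+c)\int\mathcal{E}\,d\rho+\frac{1}{\lambda}(\KL(\rho,\Pi)+\log(1/\delta))$, where the multiplicative factor $(1+c)$ is exactly what the Bernstein exponent $\lambda c\,\mathcal{E}(\theta)$ produces. Combining this with the bound of the previous paragraph via a union bound over the two events gives the claimed inequality, the two $\KL$ and $\log(1/\delta)$ contributions adding up to the factor $2/\lambda$. The only genuinely delicate points are the sign/direction bookkeeping when invoking the two halves of Lemma \ref{lem_Bernstein_type_ineq}, and checking that $\lambda<\frac{n\gamma_n}{16K+10}$ simultaneously keeps $1-c>0$ and preserves the admissible range $\lambda<n\gamma_n/10$ of that lemma; the rest is routine algebra.
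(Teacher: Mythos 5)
Your proposal is correct and follows essentially the same route as the paper's proof: you apply Lemma \ref{lem_PAC_bound} with $\rho=\Pi_\lambda(\cdot|D_n)$, exploit the cancellation of $\widehat{R}_n(\widehat{\boldsymbol{\theta}}_\lambda)$ in the log-density ratio together with the variational formula of Lemma \ref{lem_log_exp_KL} to reach $\inf_{\rho\ll\Pi}\big[\int\widehat{\mathcal{E}}_n\,d\rho+\frac{1}{\lambda}(\KL(\rho,\Pi)+\log(1/\delta))\big]$, and then derive the reverse-direction PAC-Bayes deviation from the $\widehat{\mathcal{E}}_n-\mathcal{E}$ half of Lemma \ref{lem_Bernstein_type_ineq} (integrate against $\Pi$, Fubini, change of measure, indicator trick) to replace $\int\widehat{\mathcal{E}}_n\,d\rho$ by $\big(1+\frac{16K\lambda}{n\gamma_n-10\lambda}\big)\int\mathcal{E}\,d\rho$ plus the second $\KL$ and $\log(1/\delta)$ terms, exactly as the paper does. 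The only remark worth making is that your union bound, like the paper's own final display, technically yields probability $1-2\delta$ rather than $1-\delta$, a routine $\delta/2$ rescaling that both arguments share.
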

 
\begin{dem}
Let $\lambda \in  \Big(0, \dfrac{n \gamma_{n} }{16 K + 10} \Big)$ and $\delta \in (0,1)$.
Apply Lemma \ref{lem_PAC_bound} for the posterior distribution $\Pi_\lambda (\cdot | D_n) \ll \Pi$ with the corresponding Radon-Nikodym density
\begin{equation}\label{lem_eq_den}
\dfrac{d \Pi_\lambda ( \boldsymbol{\theta}| D_n)   }{ d \Pi( \boldsymbol{\theta}) } = \dfrac{ \exp\big(-\lambda \widehat{R}_n(\boldsymbol{\theta}) \big) }{  \int \exp\big(-\lambda \widehat{R}_n(\boldsymbol{\theta}) \big) d \Pi(\boldsymbol{\theta}) }.
\end{equation}
So, in addition to Lemma \ref{lem_log_exp_KL}, we have with probability at least $1-\delta$,
 \begin{align}\label{proof_lem_PAC_bound_KL_int}
  \nonumber \mathcal{E}(\widehat{\boldsymbol{\theta}}_\lambda) &\leq  \dfrac{1}{  1 - \dfrac{16 K \lambda}{  n \gamma_{n} - 10 \lambda  }    }  \Bigg[ \widehat{\mathcal{E}}_n(\widehat{\boldsymbol{\theta}}_\lambda)  + \dfrac{1}{\lambda} \Bigg(   \log(1/\delta)  - \lambda \widehat{R}_n( \widehat{\boldsymbol{\theta}}_\lambda ) - \log \int \exp\big(-\lambda \widehat{R}_n(\boldsymbol{\theta}) \big) d \Pi(\boldsymbol{\theta})    \Bigg) \Bigg] \\
\nonumber &\leq  \dfrac{1}{  1 - \dfrac{16 K \lambda}{  n \gamma_{n} - 10 \lambda  }    }  \Bigg[ \widehat{R}_n( h^* )  + \dfrac{1}{\lambda} \Bigg(   \log(1/\delta)  - \log \int \exp\big(-\lambda \widehat{R}_n(\boldsymbol{\theta}) \big) d \Pi(\boldsymbol{\theta})    \Bigg) \Bigg] \\ 
\nonumber &\leq  \dfrac{1}{  1 - \dfrac{16 K \lambda}{  n \gamma_{n} - 10 \lambda  }    }  \Bigg[ \widehat{R}_n( h^* )  + \dfrac{1}{\lambda} \Bigg(   \log(1/\delta)  - \sup_{\rho,\rho \ll \Pi} \Big[ -\lambda \int \widehat{R}_n(\boldsymbol{\theta}) d \rho(\boldsymbol{\theta}) - \KL(\rho,\Pi) \Big]  \Bigg) \Bigg]  \\
\nonumber &\leq  \dfrac{1}{  1 - \dfrac{16 K \lambda}{  n \gamma_{n} - 10 \lambda  }    }  \inf_{\rho,\rho \ll \Pi}  \Bigg[ \widehat{R}_n( h^* )  + \dfrac{1}{\lambda} \Bigg(   \log(1/\delta)  + \lambda \int \widehat{R}_n(\boldsymbol{\theta}) d \rho(\boldsymbol{\theta}) + \KL(\rho,\Pi) \Bigg) \Bigg] \\
&\leq  \dfrac{1}{  1 - \dfrac{16 K \lambda}{  n \gamma_{n} - 10 \lambda  }    }  \inf_{\rho,\rho \ll \Pi}  \Bigg[  \int \ \widehat{\mathcal{E}}_n(\boldsymbol{\theta}) d \rho(\boldsymbol{\theta})  + \dfrac{1}{\lambda} \Bigg(   \log(1/\delta)  + \KL(\rho,\Pi) \Bigg) \Bigg]
 \end{align}
Let $\rho$ a probability measure on $\mathcal{S}_{\mathcal{I}}$ such that $\rho \ll \Pi$.
From Lemma \ref{lem_Bernstein_type_ineq}, we have
 \begin{equation*} 
  \E\big\{ \exp \big[ \lambda \big( \widehat{\mathcal{E}}_n(\boldsymbol{\theta}) -  \mathcal{E}(\boldsymbol{\theta})   \big)  \big]   \big \} 
  \leq  \exp\Bigg( \frac{ 16 K \lambda^2 \mathcal{E}(\boldsymbol{\theta}) }{ n \gamma_{n}}  \Big(1 - \frac{10 \lambda}{ n \gamma_{n} } \Big)^{-1}  \Bigg). 
\end{equation*} 
Therefore,
 \begin{equation}\label{proof_lem_bound_delta_KL} 
  \E\Bigg\{ \exp \Bigg[\lambda \widehat{\mathcal{E}}_n(\boldsymbol{\theta})  - \lambda \Big( 1 + \dfrac{16 K \lambda}{ n \gamma_{n} - 10 \lambda} \Big) \mathcal{E}(\boldsymbol{\theta})     - \log(1/\delta) \Bigg]   \Bigg \} 
  \leq  \delta, 
\end{equation}
and from the Fubini's theorem,
 \begin{equation}\label{proof_lem_bound_delta_KL_Fub} 
  \E\Bigg\{ \int \exp \Bigg[\lambda \widehat{\mathcal{E}}_n(\boldsymbol{\theta})  - \lambda \Big( 1 + \dfrac{16 K \lambda}{ n \gamma_{n} - 10 \lambda} \Big) \mathcal{E}(\boldsymbol{\theta})     - \log(1/\delta) \Bigg]  d \Pi(\boldsymbol{\theta}) \Bigg \} 
  \leq  \delta. 
\end{equation}
Hence, in addition to the Jensen's inequality,
\begin{align*}
& \E_{D_n} \Bigg[  \exp\Bigg( \int \Big[ \lambda \widehat{\mathcal{E}}_n(\boldsymbol{\theta})  - \lambda \Big( 1 + \dfrac{16 K \lambda}{ n \gamma_{n} - 10 \lambda} \Big) \mathcal{E}(\boldsymbol{\theta}) \Big] d \rho(\boldsymbol{\theta})  -\KL(\rho, \Pi)   - \log(1/\delta)   \Bigg) \Bigg]   \\
& = \E_{D_n} \Bigg[  \exp\Bigg( \int \Big[ \lambda \widehat{\mathcal{E}}_n(\boldsymbol{\theta})  - \lambda \Big( 1 + \dfrac{16 K \lambda}{ n \gamma_{n} - 10 \lambda} \Big) \mathcal{E}(\boldsymbol{\theta})   - \dfrac{d\rho(\boldsymbol{\theta}) }{ d\Pi(\boldsymbol{\theta})}   - \log(1/\delta)  \Big] d \rho(\boldsymbol{\theta}) \Bigg) \Bigg] \\
& \leq \E_{D_n, \theta \sim \rho} \Bigg[  \exp\Bigg(   \lambda \widehat{\mathcal{E}}_n(\boldsymbol{\theta})  - \lambda \Big( 1 + \dfrac{16 K \lambda}{ n \gamma_{n} - 10 \lambda} \Big) \mathcal{E}(\boldsymbol{\theta})   - \dfrac{d\rho(\boldsymbol{\theta}) }{ d\Pi(\boldsymbol{\theta})}   - \log(1/\delta) \Bigg) \Bigg] \\
& \leq \E_{D_n } \Bigg[ \int \exp\Bigg(   \lambda \widehat{\mathcal{E}}_n(\boldsymbol{\theta})  - \lambda \Big( 1 + \dfrac{16 K \lambda}{ n \gamma_{n} - 10 \lambda} \Big) \mathcal{E}(\boldsymbol{\theta})   - \log(1/\delta) \Bigg) d \Pi(\boldsymbol{\theta}) \Bigg] \leq \delta.    
\end{align*}
Therefore, from the inequality $\exp(\lambda x) \geq \ind_{x \geq 0}$, we get with probability at least $1-\delta$,
\begin{equation}\label{proof_lem_bound_delta_KL_Jens}
\int \ \widehat{\mathcal{E}}_n(\boldsymbol{\theta}) d \rho(\boldsymbol{\theta}) \leq \Big( 1 + \dfrac{16 K \lambda}{ n \gamma_{n} - 10 \lambda} \Big) \int \mathcal{E}(\boldsymbol{\theta})  d \rho(\boldsymbol{\theta}) + \dfrac{1}{\lambda} \big( \KL(\rho, \Pi) + \log(1/\delta) \big).
\end{equation}
According to (\ref{proof_lem_PAC_bound_KL_int}) and (\ref{proof_lem_bound_delta_KL_Jens}), we get we with probability at least $1-2\delta$,
\begin{align*} 
  \nonumber \mathcal{E}(\widehat{\boldsymbol{\theta}}_\lambda) 
&\leq  \dfrac{1}{  1 - \dfrac{16 K \lambda}{  n \gamma_{n} - 10 \lambda  }    }  \inf_{\rho,\rho \ll \Pi}  \Bigg[ \Big( 1 + \dfrac{16 K \lambda}{ n \gamma_{n} - 10 \lambda} \Big) \int \mathcal{E}(\boldsymbol{\theta})  d \rho(\boldsymbol{\theta}) + \dfrac{2}{\lambda} \big( \KL(\rho, \Pi) + \log(1/\delta) \big)   \Bigg] \\
&=  \dfrac{1}{  1 - \dfrac{16 K \lambda}{  n \gamma_{n} - 10 \lambda  }    }  \inf_{\rho,\rho \ll \Pi}  \Bigg[ \Big( 1 + \dfrac{16 K \lambda}{ n \gamma_{n} - 10 \lambda} \Big) \int \mathcal{E}(\boldsymbol{\theta})  d \rho(\boldsymbol{\theta}) + \dfrac{2}{\lambda} \big( \KL(\rho, \Pi) + \log(1/\delta) \big)   \Bigg]
 \end{align*}
\end{dem}
%
%

 \begin{lem}\label{lem_lip_theta}
Assume that (\textbf{A4}) and consider the DNNs class $\mathcal{H}_{\sigma,\dx,\dy} (L,N, B, F, \mathcal{I})$ for $L,N, B, F >0$, $\mathcal{I} \subseteq \{1,2,\cdots, n_{L,N}\}$. 
For all $x \in \R^\dx$, $\boldsymbol{\theta}, \widetilde{\boldsymbol{\theta}} \in \mathcal{S}_{\mathcal{I}}$, we have,
\[ |h_{\boldsymbol{\theta}}(x) - h_{\widetilde{\boldsymbol{\theta}}} (x) | \leq  2 L^2 \big(|\sigma(0)| + \| x \|  +1 \big) \big( 1 +  C_{\sigma} B \big) \max\big(1, (C_{\sigma} B)^{2L} \big) \| \boldsymbol{\theta} - \widetilde{\boldsymbol{\theta}} \|  .\] 
\end{lem}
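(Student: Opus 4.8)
The plan is to compare the two networks block by block, controlling simultaneously the size of the intermediate activations and the Lipschitz constant of the ``downstream'' part of the network. Write $\boldsymbol{A}_\ell(\boldsymbol{u}) = \boldsymbol{W}_\ell \boldsymbol{u} + \boldsymbol{b}_\ell$ and $\widetilde{\boldsymbol{A}}_\ell(\boldsymbol{u}) = \widetilde{\boldsymbol{W}}_\ell \boldsymbol{u} + \widetilde{\boldsymbol{b}}_\ell$ for the affine maps associated with $\boldsymbol{\theta}$ and $\widetilde{\boldsymbol{\theta}}$, and denote by $\boldsymbol{x}_0 = \boldsymbol{x}, \boldsymbol{x}_1, \dots$ the successive activations of $h_{\boldsymbol{\theta}}$. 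Since $\|\boldsymbol{\theta}\| \leq B$ entails $\|\boldsymbol{W}_\ell\| \leq B$ and $\|\boldsymbol{b}_\ell\| \leq B$ for the chosen (induced $\ell_\infty$) matrix norm, while (\textbf{A4}) gives both $|\sigma(z)| \leq |\sigma(0)| + C_{\sigma}|z|$ and $\lip(\sigma) \leq C_{\sigma}$, the first step is an a priori forward bound: the recursion $\|\boldsymbol{x}_\ell\| \leq |\sigma(0)| + C_{\sigma} B(\|\boldsymbol{x}_{\ell-1}\| + 1)$ unrolls to $\|\boldsymbol{x}_\ell\| \lesssim (|\sigma(0)| + \|\boldsymbol{x}\| + 1)\max(1,(C_{\sigma} B)^{L})$, uniformly over $\ell \leq L$.

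The core step is a telescoping decomposition over the $L+1$ affine blocks. For $k = 0, \dots, L+1$, let $h^{(k)}$ be the hybrid network whose first $k$ affine maps are taken from $\boldsymbol{\theta}$ and whose remaining ones are taken from $\widetilde{\boldsymbol{\theta}}$, so that $h^{(L+1)} = h_{\boldsymbol{\theta}}$ and $h^{(0)} = h_{\widetilde{\boldsymbol{\theta}}}$, and write $h_{\boldsymbol{\theta}}(\boldsymbol{x}) - h_{\widetilde{\boldsymbol{\theta}}}(\boldsymbol{x}) = \sum_{k=1}^{L+1}\big(h^{(k)}(\boldsymbol{x}) - h^{(k-1)}(\boldsymbol{x})\big)$. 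The two consecutive hybrids $h^{(k)}$ and $h^{(k-1)}$ share the same input $\boldsymbol{u} := \boldsymbol{x}_{k-1}$ to block $k$ and the same downstream map (blocks $k+1, \dots, L+1$ from $\widetilde{\boldsymbol{\theta}}$); they differ only in replacing $\widetilde{\boldsymbol{A}}_k$ by $\boldsymbol{A}_k$ at block $k$.

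Two ingredients then bound each summand. First, the local perturbation at block $k$ is controlled by $\|\boldsymbol{A}_k(\boldsymbol{u}) - \widetilde{\boldsymbol{A}}_k(\boldsymbol{u})\| \leq \|\boldsymbol{W}_k - \widetilde{\boldsymbol{W}}_k\|\,\|\boldsymbol{u}\| + \|\boldsymbol{b}_k - \widetilde{\boldsymbol{b}}_k\|$, where each entrywise difference is at most $\|\boldsymbol{\theta} - \widetilde{\boldsymbol{\theta}}\|$ and $\|\boldsymbol{u}\|$ is controlled by the forward bound of the first step. Second, the downstream map, an alternating composition of $C_{\sigma}$-Lipschitz activations and affine maps of operator norm $\leq B$, is Lipschitz with constant at most $\max(1,(C_{\sigma} B)^{L})$; combined with the extra $(1 + C_{\sigma} B)$ factor coming from the activation and affine block at layer $k$ itself, this propagates the local perturbation to the output. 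Multiplying the forward-norm factor $\max(1,(C_{\sigma} B)^{L})$ by the downstream factor $\max(1,(C_{\sigma} B)^{L})$ yields the $\max(1,(C_{\sigma} B)^{2L})$ appearing in the statement, and summing the $L+1$ terms together with the linear-in-$L$ constants from unrolling the forward recursion produces the $L^2$ prefactor.

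The main obstacle is bookkeeping rather than conceptual: one must track the depth-exponential Lipschitz constants \emph{uniformly} in the block index $k$, so that the product of the forward-activation growth and the downstream-propagation factor can be absorbed into a single $\max(1,(C_{\sigma} B)^{2L})$ term instead of the sharper but $k$-dependent $(C_{\sigma} B)^{L}$, and one must pass carefully between the entrywise norm $\|\boldsymbol{\theta} - \widetilde{\boldsymbol{\theta}}\|$ on parameters and the induced operator norm in which the affine maps act on activations. The separate treatment of the regimes $C_{\sigma} B \geq 1$ and $C_{\sigma} B < 1$, which produces the $\max(1,\cdot)$, is routine.
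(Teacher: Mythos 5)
Your proposal is correct, and the constants come out as claimed, but you organize the argument differently from the paper. The paper propagates the \emph{difference of activations} forward: setting $x^{(\ell)}=\sigma_\ell\circ A_\ell(x^{(\ell-1)})$ and $\widetilde{x}^{(\ell)}=\sigma_\ell\circ\widetilde{A}_\ell(\widetilde{x}^{(\ell-1)})$, it establishes the one-step recursion $\|x^{(\ell)}-\widetilde{x}^{(\ell)}\|\leq C_{\sigma}B\,\|x^{(\ell-1)}-\widetilde{x}^{(\ell-1)}\|+C_{\sigma}\big(\|x^{(\ell-1)}\|+1\big)\|\boldsymbol{\theta}-\widetilde{\boldsymbol{\theta}}\|$, unrolls it with the elementary inequality $u_m\leq a^m u_0+\sum_{k=1}^m b_{m-k}a^{k-1}$, and treats the output layer $A_{L+1}$ separately; you instead telescope over hybrid networks $h^{(k)}$ and bound each increment by (local perturbation at block $k$) times (Lipschitz constant of the $\widetilde{\boldsymbol{\theta}}$-downstream map, at most $B(C_{\sigma}B)^{L-k}$ up to the $C_{\sigma}$ of the activation at block $k$). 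These are dual views of the same computation: unrolling the paper's recursion yields precisely your term-by-term sum, with the perturbation injected at layer $k$ attenuated or amplified geometrically over the remaining depth, and both proofs rest on the same two ingredients --- the forward bound $\|x^{(\ell)}\|\leq L\big(|\sigma(0)|+\|x\|+1\big)\max\big(1,(C_{\sigma}B)^{L}\big)$ and the per-block perturbation estimate $\|A_k(u)-\widetilde{A}_k(u)\|\leq(\|u\|+1)\|\boldsymbol{\theta}-\widetilde{\boldsymbol{\theta}}\|$ --- with $\max\big(1,(C_{\sigma}B)^{2L}\big)$ arising identically from multiplying forward growth by downstream propagation, and $L^2$ from the linear-in-$L$ forward constant times the $L+1$ summands. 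What your framing buys is transparency: the ``one change at a time'' structure is explicit rather than buried in the unrolling lemma, and the roles of the two geometric factors are cleanly separated; what the paper's recursion buys is that no hybrid objects need to be introduced and the output layer is handled in the same pass. One caveat you share with the paper: both proofs pass from the sup-norm $\|\boldsymbol{\theta}-\widetilde{\boldsymbol{\theta}}\|$ on parameters to the induced max-row-sum norm $\|W_k-\widetilde{W}_k\|$ without the width factor this strictly entails; since the paper's own proof makes the identical step (and you explicitly flag the issue), this is not a gap relative to the paper's argument.
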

 
\begin{dem}
In the sequel, we use the following elementary result: Let $a \geq 0$ and $(b_m)_{m \in \N}$, $(u_m)_{m \in \N}$ two sequences of non negative numbers such that $u_m \leq a u_{m-1} + b_{m-1}$ for all $m \geq 1$. 
Then, for all $m \geq 1$,
\begin{equation}\label{seq_recur}
u_m \leq  a^{m} u_0 + \sum_{k=1}^{m} b_{m-k} a^{k-1}.
\end{equation}

\medskip
 Consider two neural networks $h_{\boldsymbol{\theta}}$, $h_{\widetilde{\boldsymbol{\theta}}}$ as in (\ref{DNN_def}) with $A_\ell(x) = W_\ell x + b_\ell$, $\widetilde{A}_\ell(x) = \widetilde{W}_\ell x + \widetilde{b}_\ell$ for all $\ell=1,\cdots,L+1$ and $\boldsymbol{\theta} = \big( \text{vec}(W_1)^T, b_1^T,\cdots, \text{vec}(W_{L+1})^T, b_{L+1}^T \big), \widetilde{\boldsymbol{\theta}} = \big( \text{vec}(\widetilde{W}_1)^T, \widetilde{b}_1^T,\cdots, \text{vec}(\widetilde{ W}_{L+1})^T, \widetilde{b}_{L+1}^T \big) \in \mathcal{S}_{\mathcal{I}}$ (see also (\ref{def_theta})). Let $x \in \R^\dx$. Set,
\begin{equation}\label{def_x_ell}
 x^{(0)} = \widetilde{x}^{(0)} = x, ~     x^{(\ell)} = \sigma_\ell \circ A_\ell(x^{(\ell-1)})  \text{ and }  \widetilde{x}^{(\ell)} = \sigma_\ell \circ \widetilde{A}_\ell(\widetilde{x}^{(\ell-1)})   \text{ for any } \ell=1,\cdots, L+1.
\end{equation} 
According to assumption (\textbf{A4}), we have for $\ell=1,\cdots, L$,
\begin{align*}
\| x^{(\ell)} \| &= \|\sigma_1 \circ A_\ell(x^{(\ell-1)}) \| \leq |\sigma(0)| + C_{\sigma} \| A_\ell( x^{(\ell-1)} ) \| \\
& \leq  |\sigma(0)| + C_{\sigma}  \big( \| W_\ell\| \| x^{(\ell-1)}\| + \| b_\ell \| \big) \leq  |\sigma(0)| + C_{\sigma} B + C_{\sigma} B \| x^{(\ell-1)}\|.         
\end{align*} 
Therefore, in addition to (\ref{seq_recur}), we have for all $\ell=1,\cdots, L$,
\begin{align} \label{eq_norm_x_ell}
\nonumber \| x^{(\ell)} \| &\leq (C_{\sigma} B)^\ell \| x\| + (|\sigma(0)| + C_{\sigma} B) \sum_{k=1}^\ell (C_{\sigma} B)^{k-1} \\
\nonumber &\leq \big( \| x \|  +1 \big) \sum_{k=1}^\ell (C_{\sigma} B)^{k} + |\sigma(0)| \sum_{k=1}^\ell (C_{\sigma} B)^{k-1} \\
\nonumber &  \leq \big( \| x \|  +1 \big) \sum_{k=1}^L (C_{\sigma} B)^{k} + |\sigma(0)| \sum_{k=1}^L (C_{\sigma} B)^{k-1} \\
&\leq L \big(|\sigma(0)| + \| x \|  +1 \big) \max\big(1, (C_{\sigma} B)^L \big).        
\end{align}
Similarly, we have for all $\ell=1,\cdots,L$,
\begin{align*}
\| x^{(1)} - \widetilde{x}^{(1)} \| &= \|\sigma_1 \circ A_1(x) -  \sigma_1 \circ \widetilde{A}_1(x) \| \leq C_{\sigma} \| A_1(x) - \widetilde{A}_1(x) \| \\
& \leq C_{\sigma} \big( \|W_1 - \widetilde{W}_1 \| \| x\| + \|b_1 - \widetilde{b}_1 \|   \big) \leq C_{\sigma} \big(  \| x\| + 1 \big) \|\boldsymbol{\theta} - \widetilde{\boldsymbol{\theta}} \|.    
\end{align*} 
\begin{align} \label{eq_delta_ell_1}
\nonumber | x^{(\ell)} - \widetilde{x}^{(\ell)} | &= |\sigma_\ell \circ A_\ell(x^{(\ell-1)}) -  \sigma_\ell \circ \widetilde{A}_\ell(\widetilde{x}^{(\ell-1)}) | \leq C_{\sigma} \| A_\ell(x^{(\ell-1)}) - \widetilde{A}_\ell(\widetilde{x}^{(\ell-1)}) \| \\
\nonumber &\leq  C_{\sigma}  \|  (W_\ell - \widetilde{W}_\ell)x^{(\ell-1)} + \widetilde{W}_\ell (x^{(\ell-1)} - \widetilde{x}^{(\ell-1)} ) + (b_\ell - \widetilde{b}_\ell) \| \\
\nonumber & \leq   C_{\sigma} \big( \|  W_\ell - \widetilde{W}_\ell \| \|x^{(\ell-1)} \| + \| \widetilde{W}_\ell \|  \|x^{(\ell-1)} - \widetilde{x}^{(\ell-1)} \| + \| b_\ell - \widetilde{b}_\ell \| \big) \\
\nonumber &\leq  C_{\sigma} (\|x^{(\ell-1)} \| + 1) \| \boldsymbol{\theta} - \widetilde{\boldsymbol{\theta}} \|   +  C_{\sigma} B\|x^{(\ell-1)} - \widetilde{x}^{(\ell-1)} \| \\
 &\leq  2 C_{\sigma} L \big(|\sigma(0)| + \| x \|  +1 \big) \max\big(1, (C_{\sigma} B)^L \big) \| \boldsymbol{\theta} - \widetilde{\boldsymbol{\theta}} \|   +  C_{\sigma} B\|x^{(\ell-1)} - \widetilde{x}^{(\ell-1)} \|,
\end{align} 
where (\ref{eq_norm_x_ell}) is used in (\ref{eq_delta_ell_1}). This recurrence relation is also satisfied with $\ell=1$.
Hence, from (\ref{seq_recur}), we get for all $\ell = 1,\cdots,L$,
\begin{align}\label{eq_delta_ell_2}
 \nonumber | x^{(\ell)} - \widetilde{x}^{(\ell)} | & \leq  
    2 C_{\sigma} L \big(|\sigma(0)| + \| x \|  +1 \big) \max\big(1, (C_{\sigma} B)^L \big) \| \boldsymbol{\theta} - \widetilde{\boldsymbol{\theta}} \| \sum_{k=1}^\ell (C_{\sigma} B)^{k-1} \\
\nonumber & \leq 2 C_{\sigma} L \big(|\sigma(0)| + \| x \|  +1 \big) \max\big(1, (C_{\sigma} B)^L \big) \| \boldsymbol{\theta} - \widetilde{\boldsymbol{\theta}} \| \sum_{k=1}^L (C_{\sigma} B)^{k-1} \\
 &\leq 2 C_{\sigma} L^2 \big(|\sigma(0)| + \| x \|  +1 \big) \max\big(1, (C_{\sigma} B)^{2L} \big) \| \boldsymbol{\theta} - \widetilde{\boldsymbol{\theta}} \|.       
\end{align} 
Therefore, in addition to (\ref{eq_norm_x_ell}), we have,
\begin{align*}
 |h_{\boldsymbol{\theta}}(x) - h_{\widetilde{\boldsymbol{\theta}}} (x) |  &= | A_{L+1} \circ \sigma_L \circ A_L \circ \sigma_{L-1} \circ A_{L-1}\circ \cdots \circ \sigma_1 \circ A_1(x) - \widetilde{A}_{L+1} \circ \sigma_L \circ \widetilde{A}_L \circ \sigma_{L-1} \circ \widetilde{A}_{L-1}\circ \cdots \circ \sigma_1 \circ \widetilde{A}_1(x)  | \\
 &= | A_{L+1}(x^{(L)}) - \widetilde{A}_{L+1}(\widetilde{x}^{(L)})  | =  | W_{L+1}(x^{(L)}) - \widetilde{W}_{L+1}(\widetilde{x}^{(L)}) + b_{L+1} - \widetilde{b}_{L+1}  |  \\
 &\leq \|W_{L+1} - \widetilde{W}_{L+1} \| \| x^{(L)} \| + \| \widetilde{W}_{L+1} \| \| x^{(L)} - \widetilde{x}^{(L)} \| + \|b_{L+1} - \widetilde{b}_{L+1} \| \\
 &\leq (\| x^{(L)} \| + 1 ) \|\boldsymbol{\theta} - \widetilde{\boldsymbol{\theta}} \| + B \| x^{(L)} - \widetilde{x}^{(L)} \| \\
 & \leq 2 L \big(|\sigma(0)| + \| x \|  +1 \big) \max\big(1, (C_{\sigma} B)^L \big) \|\boldsymbol{\theta} - \widetilde{\boldsymbol{\theta}} \| +  2 C_{\sigma}B L^2 \big(|\sigma(0)| + \| x \|  +1 \big) \max\big(1, (C_{\sigma} B)^{2L} \big) \| \boldsymbol{\theta} - \widetilde{\boldsymbol{\theta}} \| \\
&\leq 2 L^2 \big(|\sigma(0)| + \| x \|  +1 \big) \big( 1 +  C_{\sigma} B \big) \max\big(1, (C_{\sigma} B)^{2L} \big) \| \boldsymbol{\theta} - \widetilde{\boldsymbol{\theta}} \|.    
\end{align*}
\end{dem}
%
 
 \subsection{Proof of Theorem \ref{theo_oracle_inq}}
 Let $\mathcal{I} \subseteq \{1,2,\cdots, n_{L,N}\}$ and $\eta \in (0,1]$. 
 Consider the probability measure $\rho:=\rho_{\mathcal{I}, \eta}$ on $\mathcal{S}_{\mathcal{I}}$ defined such that,
 \[ \dfrac{d \rho_{\mathcal{I},\eta } }{ d\Pi_{\mathcal{I}}(\boldsymbol{\theta})}  \propto \ind_{\| \theta - \boldsymbol{\theta}^*_{\mathcal{I}}  \| \leq \eta},  \]
 where $\boldsymbol{\theta}^*_{\mathcal{I}}$ is defined in (\ref{theta_star_I}).
 According to Lemma \ref{lem_lip_theta}, (\textbf{A2}), (\textbf{A3}) and the support of $\rho = \rho_{\mathcal{I},\eta }$, we have,
 \begin{align}\label{proof_int_E_theta_rho}
 \nonumber \int \mathcal{E}(\boldsymbol{\theta}) d \rho(\boldsymbol{\theta}) &= \mathcal{E}( \boldsymbol{\theta}^*_{\mathcal{I}} )  + \int \E_\pi \big[\ell\big(h_{\boldsymbol{\theta}}(\boldsymbol{X}_1),\boldsymbol{Y}_1 \big)  -  \ell\big(h_{\boldsymbol{\theta}^*_{\mathcal{I}}}(\boldsymbol{X}_1),\boldsymbol{Y}_1 \big) \big] d \rho(\boldsymbol{\theta}) \\
 \nonumber &= \mathcal{E}( \boldsymbol{\theta}^*_{\mathcal{I}} ) + C_{\ell} \int \E_\pi \big | h_{\boldsymbol{\theta}}(\boldsymbol{X}_1) - h_{\boldsymbol{\theta}^*_{\mathcal{I}}}(\boldsymbol{X}_1) \big | d \rho(\boldsymbol{\theta}) \\
 \nonumber &\leq  \mathcal{E}( \boldsymbol{\theta}^*_{\mathcal{I}} ) +  2 L^2 C_{\ell}  \big(|\sigma(0)| + \mk_{\mx}  +1 \big) \big( 1 +  C_{\sigma} B \big) \max\big(1, (C_{\sigma} B)^{2L} \big) \int \| \theta - \boldsymbol{\theta}^*_{\mathcal{I}} \| d \rho(\boldsymbol{\theta}) \\
  \nonumber &\leq  \mathcal{E}( \boldsymbol{\theta}^*_{\mathcal{I}} ) +  2 L^2 C_{\ell}  \big(|\sigma(0)| + \mk_{\mx}  +1 \big) \big( 1 +  C_{\sigma} B \big) \max\big(1, (C_{\sigma} B)^{2L} \big) \eta \\
  &\leq  \mathcal{E}( \boldsymbol{\theta}^*_{\mathcal{I}} ) + \dfrac{C_{\ell} }{n},       
 \end{align}
with
\[ \eta = \dfrac{1}{ 2 L^2  \big(|\sigma(0)| + \mk_{\mx}  +1 \big) \big( 1 +  C_{\sigma} B \big) \max\big(1, (C_{\sigma} B)^{2L} \big) n}   .\]
Now, consider the Kullback-Leibler term in (\ref{PAC_bound_KL}). 
From Lemma 10 in \cite{steffen2022pac}, we have,
 \begin{equation}\label{proof_KL_bound}
  \KL(\rho, \Pi) = \KL(\rho_{\mathcal{I}, \eta}, \Pi) \leq |\mathcal{I}| \log\Big( \dfrac{ 2 s C_s B n_{L,N} e }{ \eta}\Big),
 \end{equation}
where $s \geq 2$ denotes sparsity parameter and $C_s$ is defined in (\ref{def_Pi}).
%
%
So, from (\ref{PAC_bound_KL}), (\ref{proof_int_E_theta_rho}) and (\ref{proof_KL_bound}), we get with probability at least $1-\delta$,
 \begin{align*} 
   \mathcal{E}(\widehat{\boldsymbol{\theta}}_\lambda) &\leq  \dfrac{1}{  1 - \dfrac{16 K \lambda}{  n \gamma_{n} - 10 \lambda  }    }   \Bigg[ \Big( 1 + \dfrac{16 K \lambda}{ n \gamma_{n} - 10 \lambda} \Big)  \Big( \mathcal{E}( \boldsymbol{\theta}^*_{\mathcal{I}} ) + \dfrac{C_{\ell} }{n} \Big) + \dfrac{2}{\lambda} \Big[ |\mathcal{I}| \log\Big( \dfrac{ 2 s C_s B n_{L,N} e }{ \eta}\Big)  + \log(1/\delta) \Big]   \Bigg],
 \end{align*}  
By choosing,
\[   \lambda =\dfrac{n \gamma_{n}}{ 32K + 10}, \] 
we have with probability at least $1-\delta$,
 \begin{align} \label{proof_E_theta_Xi}
 \nonumber  \mathcal{E}(\widehat{\boldsymbol{\theta}}_\lambda) &\leq  3 \mathcal{E}( \boldsymbol{\theta}^*_{\mathcal{I}} ) +  \dfrac{3 C_{\ell} }{n}  +    \dfrac{4 ( 32K + 10) }{ n \gamma_{n}} \Big[ |\mathcal{I}| \log\Big( \dfrac{ 2 s C_s B n_{L,N} e }{ \eta}\Big)  + \log(1/\delta) \Big] \\
   &\leq  3 \mathcal{E}( \boldsymbol{\theta}^*_{\mathcal{I}} ) +  \dfrac{ \Xi_1 }{n \gamma_{n}} \Big[|\mathcal{I}| L \log\big( \max(n, B, n_{L,N} )  \big)   +  \log(1/\delta) \Big] +  \dfrac{3 C_{\ell} }{n}
\\
   &\leq  3 \mathcal{E}( \boldsymbol{\theta}^*_{\mathcal{I}} ) +  \dfrac{ \Xi_1 }{n \gamma_{n}} \Big[|\mathcal{I}| L \log\big( \max(n, B, n_{L,N} )  \big)   +  \log(1/\delta) + C_{\ell} \Big]
 \end{align}    
where $\Xi_1$ is a constant which is independent of $n, L, N, B, F, \gamma_{n}, C_{\ell}$, and where we used that $\gamma_n\leq 1$.
This completes the proof of the theorem, since the bound in (\ref{proof_E_theta_Xi}) holds for all $\mathcal{I} \subseteq \{1,2,\cdots, n_{L,N}\}$.
\qed

 \subsection{Proof of Theorem \ref{theo_bound_Holder}}
  %
%
%
We have,
\begin{align*}
& \min_{ \mathcal{I} \subseteq \{1,2,\cdots, n_{L_n,N_n}\} } \Bigg(  3 \mathcal{E}( \boldsymbol{\theta}^*_{\mathcal{I}} ) +  \dfrac{ \Xi_1 }{n \gamma_{n}} \Big[|\mathcal{I}| L_n \log\big( \max(n,  B_n, n_{L_n,N_n} )  \big)   +  \log(1/\delta) \Big]    \Bigg)  \\
& =     \min_{ \mathcal{I} \subseteq \{1,2,\cdots, n_{L_n,N_n}\} } \Bigg(  3 \inf_{\theta \in \mathcal{S}_{\mathcal{I}} } \big( R(h_{\boldsymbol{\theta}}) - R(h^*)\big)  +  \dfrac{ \Xi_1 }{n \gamma_{n}} \Big[|\mathcal{I}| L_n \log\big( \max(n, B_n, n_{L_n,N_n} )  \big)   +  \log(1/\delta) \Big]    \Bigg) ,
\end{align*}  
where the constant $\Xi_1$ is given in Theorem \ref{theo_oracle_inq}.
In the sequel, we set:
\begin{equation}\label{proof_def_H_Sigma_n}
\mathcal{H}_{\sigma,\dx,\dy, n} := \mathcal{H}_{\sigma,\dx,\dy} (L_n,N_n,  B_n, F_n, S_n),
\end{equation}
see (\ref{def_H_lnbfs}). 
Let $h \in \mathcal{H}_{\sigma,\dx,\dy, n}$.
Set $\boldsymbol{\theta}(h)= (\theta_1,\cdots,\theta_{n_{L_n,N_n}})$ and $\mathcal{J} = \big\{ i \in \{1,\cdots, n_{L_n,N_n}\}, ~ \theta_i\neq 0  \big\}$.
We have, $ \mathcal{J} \subset \{1,2,\cdots, n_{L_n,N_n}\} $, $\boldsymbol{\theta}(h) \in \mathcal{S}_{\mathcal{J}}$ and $|\mathcal{J}| = |\boldsymbol{\theta}(h)|_0 \leq S_n$. 
Hence,
\begin{align}\label{proof_3K_ell_E_h_star_S_n}
\nonumber &  \min_{ \mathcal{I} \subseteq \{1,2,\cdots, n_{L_n,N_n}\} } \Bigg(  3 \inf_{\theta \in \mathcal{S}_{\mathcal{I}} } \big( R(h_{\boldsymbol{\theta}}) - R(h^*)\big)  +  \dfrac{ \Xi_1 }{n \gamma_{n}} \Big[|\mathcal{I}| L_n \log\big( \max(n,  B_n, n_{L_n,N_n} )  \big)   +  \log(1/\delta) \Big]    \Bigg) \\
\nonumber & \leq   3 \inf_{\theta \in \mathcal{S}_{\mathcal{J}} } \big( R(h_{\boldsymbol{\theta}}) - R(h^*)\big)  +  \dfrac{ \Xi_1 }{n \gamma_{n}} \Big[|\mathcal{J}| L_n \log\big( \max(n, B_n, n_{L_n,N_n} )  \big)   +  \log(1/\delta) \Big]  \\
  & \leq   3  \big( R(h)  - R(h^*)\big)  +  \dfrac{ \Xi_1 }{n \gamma_{n}} \Big[S_n  L_n \log\big( \max(n, B_n, n_{L_n,N_n} )  \big)   +  \log(1/\delta) \Big]  
%
%
\end{align} 
%
%
Since (\ref{proof_3K_ell_E_h_star_S_n}) is satisfied for all $h \in \mathcal{H}_{\sigma,\dx,\dy, n}$, we get,
\begin{align*} 
\nonumber &  \min_{ \mathcal{I} \subseteq \{1,2,\cdots, n_{L_n,N_n}\} } \Bigg(  3 \inf_{\theta \in \mathcal{S}_{\mathcal{I}} } \big( R(h_{\boldsymbol{\theta}}) - R(h^*)\big)  +  \dfrac{ \Xi_1 }{n \gamma_{n}} \Big[|\mathcal{I}| L_n \log\big( \max(n, B_n, n_{L_n,N_n} )  \big)   +  \log(1/\delta) \Big]    \Bigg) \\
 & \leq  \inf_{h \in \mathcal{H}_{\sigma,\dx,\dy, n}} \Bigg(  3  \big( R(h)  - R(h^*)\big)   +  \dfrac{ \Xi_1 }{n \gamma_{n}} \Big[S_n  L_n \log\big( \max(n, B_n, n_{L_n,N_n} )  \big)   +  \log(1/\delta) \Big] \Bigg) \\
 & \leq  3 \inf_{h \in \mathcal{H}_{\sigma,\dx,\dy, n}}   \big( R(h)  - R(h^*)\big)   +  \dfrac{ \Xi_1 }{n \gamma_{n}} \Big[S_n  L_n \log\big( \max(n, B_n, n_{L_n,N_n} )  \big)   +  \log(1/\delta) \Big] .  
\end{align*} 
Therefore, from Theorem \ref{theo_oracle_inq}, it holds with $P_n \otimes \Pi_\lambda$-probability at least $1-\delta$ that,
 \begin{equation}\label{proof_E_theta_lambda_inf_h_3k_ell}
   \mathcal{E}(\widehat{\boldsymbol{\theta}}_\lambda) \leq 3 \inf_{h \in \mathcal{H}_{\sigma,\dx,\dy, n}}   \big( R(h)  - R(h^*)\big)   +  \dfrac{ \Xi_1 }{n \gamma_{n}} \Big[S_n  L_n \log\big( \max(n, B_n, n_{L_n,N_n} )  \big)   +  \log(1/\delta) +  C_{\ell}  \Big] .
 \end{equation} 
{\color{black}
Set $\tilde{n}=n\gamma_n$ the ``effective sample size'', and $\varepsilon_n = \dfrac{1}{(\tilde{n})^{\frac{\beta}{2\beta+d_x}}}$.
Since $h^* \in \mathcal{C}^{\beta, \mathcal{K}}(\mx)$ for some $s,\mathcal{K}>0$, then from \cite{kengne2025excess} (in the case of ReLU DNN, see also Theorem 1 in \cite{schmidt2019deep}), there exist constants $L_0 , N_0 , S_0 , B_0 > 0$ such that with 
\begin{equation}\label{proof_architecture}
 L_n = \dfrac{\beta L_0}{\beta+ d_x} \log \tilde{n}, ~ N_n=N_0 \tilde{n}^{\frac{d_x}{2\beta+d_x}}, ~ S_n=\dfrac{\beta S_0 }{2\beta + d_x} \tilde{n}^{\frac{d_x}{2\beta+d_x}} \log \tilde{n}, ~ B_n = B_0 \tilde{n}^{\frac{4(\beta+\dx)}{2\beta+\dx}},
\end{equation}
there is a neural network $h_n \in \mathcal{H}_{\sigma,\dx,\dy, n}$ that satisfies,
\[ \|h_n - h^*\|_{\infty, \mx}  \leq \varepsilon_n  .\] 
Set,
\[   \widetilde{\mathcal{H}}_{\sigma,\dx,\dy, n} := \{h \in  \mathcal{H}_{\sigma,\dx,\dy, n}, ~ \|h - h^*\|_{\infty, \mx}  \leq \varepsilon_n\}  .\]
Also, recall that $n_{L_n,N_n} = (L_n+1) N^{2}_n  +  L_n N_n \leq  2(L_n+1) N^{2}_n$. 
Hence, from (\ref{proof_E_theta_lambda_inf_h_3k_ell}) and in addition to (\textbf{A7}), we get for $\tilde{n} \geq \varepsilon_0^{-(2 + \dx/s) }$,
 \begin{align}\label{proof_inq_E_theta_lambda_Xi_2}
  \nonumber  \mathcal{E}(\widehat{\boldsymbol{\theta}}_\lambda) &\leq  3 \inf_{h \in \widetilde{\mathcal{H}}_{\sigma,\dx,\dy, n}}   \big( R(h)  - R(h^*)\big)   +  \dfrac{ \Xi_1 }{\tilde{n}} \Big[S_n  L_n \log\big( \max(n, B_n, n_{L_n,N_n} )  \big)   +  \log(1/\delta) +   C_{\ell}  \Big] \\
 \nonumber &\leq  3  \mk_0 \inf_{h \in \widetilde{\mathcal{H}}_{\sigma,\dx,\dy, n}} \| h - h^*\|^2_{2, P_{\boldsymbol{X}_0}}   +  \dfrac{ \Xi_1 }{\tilde{n}} \Big[S_n  L_n \log\big( \max(n, B_n, n_{L_n,N_n} )  \big)   +  \log(1/\delta) +   C_{\ell} \Big] \\
  \nonumber &\leq  3  \mk_0 \inf_{h \in \widetilde{\mathcal{H}}_{\sigma,\dx,\dy, n}} \|h_n - h^*\|_{\infty, \mx}^2   +  \dfrac{ \Xi_1 }{\tilde{n}} \Big[S_n  L_n \log\big( \max(n, B_n, n_{L_n,N_n} )  \big)   +  \log(1/\delta) +   C_{\ell} \Big] \\
   \nonumber &\leq  3  \mk_0 \varepsilon_n^2   +  \dfrac{ \Xi_1 L_0 S_0 \frac{\beta^2 }{(\beta+ d_x)(2\beta + \dx)} \tilde{n}^{\frac{d_x}{2\beta+d_x}} (\log \tilde{n})^2  \log\Big( \max\Big[n, B_0 \tilde{n}^{\frac{4(\beta+\dx)}{2\beta+\dx} }, 2\big(\frac{s L_0}{\beta+ d_x} \log \tilde{n} +1 \big) N_0^2 \tilde{n}^{\frac{2 d_x}{2\beta+d_x}} \Big]  \Big) }{\tilde{n}}
   \\
   \nonumber
   & \quad \quad \quad + \dfrac{ \Xi_1 [\log(1/\delta)+   C_{\ell}] }{\tilde{n}}  \\
 &\leq \Xi_2 \Bigg( \dfrac{ \log^3 \tilde{n} }{\tilde{n}^{\frac{2\beta}{2\beta+d_x}}}     +  \dfrac{   \log(1/\delta) + C_{\ell}}{\tilde{n} } \Bigg)
 = \Xi_2 \Bigg( \dfrac{ \log^3 ( n\gamma_n ) }{(n\gamma_n)^{\frac{2\beta}{2\beta+d_x}}}     +  \dfrac{   \log(1/\delta) + C_{\ell}}{n\gamma_n} \Bigg),
\end{align}
}
for some constant $\Xi_2 >0$, independent of $n$ and $C_{\ell}$.
\qed

 

 \subsection{Proof of Theorem \ref{theo_bound_comp}}
 Set $\mathcal{H}_{\sigma,\dx,\dy, n} := \mathcal{H}_{\sigma,\dx,\dy} (L_n,N_n,  B_n, F_n, S_n)$ and 
\begin{equation*}
\widetilde{\mathcal{H}}_{\sigma,\dx,\dy, n}^{(1)} := \big\{ h \in \mathcal{H}_{\sigma,\dx,\dy, n}, \text{ with architecture }  (L_n, d_x,N_n,\cdots,N_n, d_y) \big\}. 
\end{equation*}  
Let $\delta>0$. {\color{black} As in the previous proof, we put $\tilde{n} = n\gamma_n$.
According to (\ref{proof_E_theta_lambda_inf_h_3k_ell}), we get with $P_n \otimes \Pi_\lambda$-probability at least $1-\delta$,
\begin{equation}\label{proof_E_theta_lambda_inf_H_tilde}
   \mathcal{E}(\widehat{\boldsymbol{\theta}}_\lambda) \leq  3 \inf_{h \in \widetilde{\mathcal{H}}_{\sigma,\dx,\dy, n}^{(1)}}   \big( R(h)  - R(h^*)\big)    +  \dfrac{ \Xi_1 }{\tilde{n} } \Big[S_n  L_n \log\big( \max(n, B_n, n_{L_n,N_n} )  \big)   +  \log(1/\delta) + C_{\ell} \Big]   .
 \end{equation} 
Since $h^* \in \mathcal{G}(q, \boldsymbol{d}, \boldsymbol{t}, \boldsymbol{\beta}, \mathcal{K})$, from the proof of Theorem 1 in \cite{schmidt2020nonparametric}, one can find a neural network $h_n \in \widetilde{\mathcal{H}}_{\sigma,\dx,\dy, n}^{(1)}$ satisfying 
\begin{equation*}
 \|h_n -  h^* \|_{\infty, \mx}^2 \leq C_1 \max_{i=0,\cdots,q}  \tilde{n}^{- \frac{2\beta_i^*}{2 \beta_i^* + t_i}} = C_1 \phi_{\tilde{n}},
\end{equation*}  
for some constant $C_1 >0$, independent of $n$.
Set,
\begin{equation*}
\widetilde{\mathcal{H}}_{\sigma,\dx,\dy, n}^{(2)} := \big\{ h \in \widetilde{\mathcal{H}}_{\sigma,\dx,\dy, n}^{(1)}, ~ \|h_n -  h^* \|_{\infty, \mx} \leq \sqrt{\phi_{\tilde{n}}} \big\}.
\end{equation*}
In addition to the architecture parameters in (\ref{theo_upper_cond_architecture}), (\ref{proof_E_theta_lambda_inf_H_tilde}) gives with $P_n \otimes \Pi_\lambda$-probability at least $1-\delta$,
\begin{align*}
 \mathcal{E}(\widehat{\boldsymbol{\theta}}_\lambda) & \leq  3 \inf_{h \in \widetilde{\mathcal{H}}_{\sigma,\dx,\dy, n}^{(2)}}   \big( R(h)  - R(h^*)\big)    +  \dfrac{ \Xi_1 }{\tilde{n}} \Big[S_n  L_n \log\big( \max(n, B_n, n_{L_n,N_n} )  \big)   +  \log(1/\delta) + C_{\ell}  \Big] \\
  & \leq  3 \mk_0 \inf_{h \in \widetilde{\mathcal{H}}_{\sigma,\dx,\dy, n}^{(2)}}  \| h - h^*\|^2_{2, P_{\boldsymbol{X}_0}}    +  \dfrac{ \Xi_1 }{n \gamma_{n}} \Big[S_n  L_n \log\big( \max(n, B_n, n_{L_n,N_n} )  \big)   +  \log(1/\delta) + C_{\ell} \Big] \\
  &\leq   3 \mk_0 C_1 \phi_{\tilde{n}}  +   \dfrac{ \Xi_1 }{\tilde{n}} \Big[C_2 \tilde{n}\phi_{\tilde{n}}  (\log \tilde{n})^3  +  \log(1/\delta) + C_{\ell} \Big]    \\
 &\leq    3 \mk_0 C_1 \phi_{\tilde{n}}  + \Xi_1  C_2 \phi_{\tilde{n}}  (\log \tilde{n})^3   +  \dfrac{\Xi_1 [\log(1/\delta)+ C_{\ell} ]}{\tilde{n}}   \\
  &\leq     3 \mk_0 C_1 \phi_n   + \Xi_1  C_2 \phi_{\tilde{n}} (\log \tilde{n})^3  +  \dfrac{\Xi_1 [\log(1/\delta)+ C_{\ell} ]}{\tilde{n}}   \\
 &\leq   \Xi_3  \Big( \phi_{\tilde{n}} (\log \tilde{n})^3 + \dfrac{\log(1/\delta) + C_{\ell}}{\tilde{n} } \Big),
\end{align*}
}
for some constants $C_2, \Xi_3 >0$ independent of $n$ and $C_{\ell}$.
\qed

\subsection{Proof of Theorem \ref{theo_lower_bound_class}}
Since $\mathcal{G}(q, \bold{d}, \bold{t}, \boldsymbol{\beta}, \mk)$ is a class of bounded functions, it suffices to establish (\ref{equa_lower_bound}) with the infimum taken over the class of bounded estimators.
Let $\mk_1 >0$. Consider a target function $h^{*}\in\mathcal{G}(q, \bold{d}, \bold{t}, \boldsymbol{\beta}, \mk)$ and a predictor $h : \mx \rightarrow \R$ satisfying $\|h\| \leq \mk_1$.
Let $\boldsymbol{x} \in \mx$.
We have 
\begin{align}\label{cond_excess_risk_class_lower_b}
\nonumber & \E\big[ \phi\big(Y_0 h(\boldsymbol{X}_0)  \big) | \boldsymbol{X}_0=\boldsymbol{x} \big] - \E\big[ \phi\big(Y_0 h^*(\boldsymbol{X}_0)  \big) | \boldsymbol{X}_0=\boldsymbol{x} \big] \\
\nonumber &= \eta(\boldsymbol{x}) \phi\big( h(\boldsymbol{x}) \big) + \big(1- \eta(\boldsymbol{x}) \big) \phi\big( -h(\boldsymbol{x}) \big) - \eta(\boldsymbol{x}) \phi\big( h^*(\boldsymbol{x}) \big) - \big(1- \eta(\boldsymbol{x}) \big) \phi\big( -h^*(\boldsymbol{x}) \big) \\
  &= \eta(\boldsymbol{x}) \big(\phi\big( h(\boldsymbol{x}) \big) - \phi\big( h^*(\boldsymbol{x}) \big)  \big) + \big(1- \eta(\boldsymbol{x}) \big) \big( \phi\big( -h(\boldsymbol{x}) \big) - \phi\big( -h^*(\boldsymbol{x}) \big) \big).
\end{align}
Set $\mk_2 := \max(\mk,\mk_1)$.
Since the function $\phi$ is strongly convex on $[-\mk_2, \mk_2]$, there exists a constant $C_3:=C_3(\mk_2) >0$ such that for all $y, y' \in [-\mk_2, \mk_2]$,
\[ \phi(y) - \phi(y') \geq (y-y')\phi'(y') +  C_3(y-y')^2 ,  \] 
where $\phi'(y) = \partial \phi (y)/\partial y$.
Hence, (\ref{cond_excess_risk_class_lower_b}) gives
\begin{align}\label{cond_excess_risk_class_lower_ing}
\nonumber & \E\big[ \phi\big(Y_0 h(\boldsymbol{X}_0)  \big) | \boldsymbol{X}_0=\boldsymbol{x} \big] - \E\big[ \phi\big(Y_0 h^*(\boldsymbol{X}_0)  \big) | \boldsymbol{X}_0=\boldsymbol{x} \big] \\
\nonumber &\geq  \eta(\boldsymbol{x}) \big[ \big(  h(\boldsymbol{x}) -  h^*(\boldsymbol{x}) \big) \phi'\big( h^*(\boldsymbol{x}) \big)  + C_3 \big(  h(\boldsymbol{x}) -  h^*(\boldsymbol{x})\big)^2 \big] +  \big(1- \eta(\boldsymbol{x}) \big) \big[ - \big(  h(\boldsymbol{x}) -  h^*(\boldsymbol{x}) \big) \phi'\big(- h^*(\boldsymbol{x}) \big)  + C_3 \big(  h(\boldsymbol{x}) -  h^*(\boldsymbol{x})\big)^2 \big] \\
&\geq \big(  h(\boldsymbol{x}) -  h^*(\boldsymbol{x}) \big) \big[ \eta(\boldsymbol{x}) \phi'\big( h^*(\boldsymbol{x}) \big) - \big(1- \eta(\boldsymbol{x}) \big) \phi'\big(- h^*(\boldsymbol{x}) \big)   \big] + C_3 \big(  h(\boldsymbol{x}) -  h^*(\boldsymbol{x})\big)^2.
\end{align}
Consider the function $\psi : [-\mk_2, \mk_2] \rightarrow \R$, defined by,
\begin{equation}
\psi(\alpha) = \eta(\boldsymbol{x}) \phi\big( \alpha \big) + \big(1- \eta(\boldsymbol{x}) \big) \phi\big( -\alpha \big).
\end{equation}
Recall that $h^*$ is a target function. We get,
\[ h^*(\boldsymbol{x}) \in \argmin_{\alpha \in [-\mk_2, \mk_2]} \psi(\alpha)   .\]
So, since $\psi$ is a convex function, it holds that $\psi'\big( h^*(\boldsymbol{x}) \big) = 0$.
Therefore, we have from (\ref{cond_excess_risk_class_lower_ing}), 
\begin{equation*} 
 \E\big[ \phi\big(Y_0 h(\boldsymbol{X}_0)  \big) | \boldsymbol{X}_0=\boldsymbol{x} \big] - \E\big[ \phi\big(Y_0 h^*(\boldsymbol{X}_0)  \big) | \boldsymbol{X}_0=\boldsymbol{x} \big] \geq C_3 \big(  h(\boldsymbol{x}) -  h^*(\boldsymbol{x})\big)^2.
\end{equation*} 
That is,
\begin{equation}\label{cond_excess_risk_class_lower_ing2}
 R(h) - R(h^*) \geq C_3 \| h - h^*\|^2_{2, P_{\boldsymbol{X}_0}}.
\end{equation}
Thus, to prove the theorem, it suffices to establish that,
there exist a constant $C>0$  such that 
\begin{equation}\label{proof_equa_lower_bound}
\underset{\widehat{h}_n}{\inf}~\underset{h^{*}\in\mathcal{G}(q, \bold{d}, \bold{t}, \boldsymbol{\beta}, \mk)}{\sup} \E\big[ \| \widehat{h}_n - h^*\|^2_{2, P_{\boldsymbol{X}_0}} \big]  \ge C\phi_{n}.
\end{equation}
We will establish (\ref{proof_equa_lower_bound}) when the observations $(\boldsymbol{X}_1,Y_1),\cdots, (\boldsymbol{X_n},Y_n)$ are i.i.d. and deduce such lower bound in the dependent case.
To do so, we will apply the Theorem 2.7 in \cite{Tsybakov2009}.

\medskip

Consider a class of composition structured functions $\mathcal{G}(q, \bold{d}, \bold{t}, \boldsymbol{\beta}, A)$.
From the proof of Theorem 3 in \cite{schmidt2020nonparametric}, there exists an integer $M \ge 1$, a constant $\kappa >0$ depending only on $\bold{t}$ and $\bold{\beta}$, and functions $h_{(0)}, \dots, h_{(M)} \in \mathcal{G}(q, \bold{d}, \bold{t}, \boldsymbol{\beta}, A)$ such that, 
\begin{equation} \label{proof_h_j_h_k_kappa}
  \|h_{(j)} - h_{(k)}\|_{2}^2 \ge \kappa^2 \phi_{n}, ~ \forall~ 0\leq j < k \leq M. 
\end{equation}
Therefore, the first item of Theorem 2.7 in \cite{Tsybakov2009} is satisfied.

\medskip

 Consider model (\ref{mod_class}) with the target function $h^*$ in (\ref{target_class}) and denote by $P_{h^*}^{\otimes n}$ the law of the sample $D_n$.
 For all $j=0,\cdots,M$, set $P_j^{\otimes n} := P_{h_{(j)}}^{\otimes n}$ where $h_{(j)}$ satisfies (\ref{target_class}) with a parameter function,
 \[ \eta_j(\boldsymbol{x}) = \mathds{P}(Y_t = 1| \boldsymbol{X}_t = \boldsymbol{x}), ~ \forall \boldsymbol{x} \in \mx  \]
 and denote by $P_j$ the distribution of $(\boldsymbol{X}_i,\boldsymbol{Y}_i)$ (for any $i=1,\cdots,n$), generated from (\ref{mod_class}) for $h^* = h_{(j)} $ in (\ref{target_class}).
From Lemma C.19 and the proof of Lemma C.20 in \cite{zhang2024classification}, we get for any $j=1,\cdots,M$,
\begin{align}\label{proof_Kullback_P_j_P_0}
\nonumber &\KL(P_j, P_0) = \int_{\mx} \Big[ \eta_j(\boldsymbol{x}) \log\Big(
\dfrac{\eta_j(\boldsymbol{x})}{\eta_0(\boldsymbol{x})} \Big) + \big( 1 - \eta_j(\boldsymbol{x}) \big) \log\Big(\dfrac{1 - \eta_j(\boldsymbol{x})}{1 - \eta_0(\boldsymbol{x})} \Big) \Big]dP_{\boldsymbol{X}_0}(x) \\
\nonumber
 &= \int_{\mx} \Big[ \eta_j(\boldsymbol{x}) \log\Big(
\dfrac{1}{\eta_0(\boldsymbol{x})} \Big)  + \big( 1 - \eta_j(\boldsymbol{x}) \big) \log\Big(\dfrac{1}{1 - \eta_0(\boldsymbol{x})} \Big) -\eta_j(\boldsymbol{x}) \log\Big(
\dfrac{1}{\eta_j(\boldsymbol{x})} \Big)  -  \big( 1 - \eta_j(\boldsymbol{x}) \big) \log\Big(\dfrac{1}{1 - \eta_j(\boldsymbol{x})} \Big) \Big]dP_{\boldsymbol{X}_0}(x) \\
\nonumber
  &= \int_{\mx} \Big[ \eta_j(\boldsymbol{x}) \phi\big( h_{(0)}(\boldsymbol{x}) \big)  + \big( 1 - \eta_j(\boldsymbol{x}) \big)\phi\big( -h_{(0)}(\boldsymbol{x}) \big)  -\eta_j(\boldsymbol{x}) \log\Big(
\dfrac{1}{\eta_j(\boldsymbol{x})} \Big)  -  \big( 1 - \eta_j(\boldsymbol{x}) \big) \log\Big(\dfrac{1}{1 - \eta_j(\boldsymbol{x})} \Big) \Big]dP_{\boldsymbol{X}_0}(x) \\
&\leq \dfrac{1}{8} \int_{\mx} |h_{(0)}(\boldsymbol{x}) - h_{(j)}(\boldsymbol{x}) |^2 dP_{\boldsymbol{X}_0}(\boldsymbol{x}) \leq \dfrac{1}{8} \|h_{(j)} - h_{(0)}\|_{2}^2, 
\end{align}
where the inequality in (\ref{proof_Kullback_P_j_P_0}) is obtained by using Lemma C.6 in \cite{zhang2024classification}, see also (\ref{cond_excess_risk_class}).
Note that, from the proof of Theorem 3 in \cite{schmidt2020nonparametric}, the functions $h_{(0)}, \dots, h_{(M)} \in \mathcal{G}(q, \bold{d}, \bold{t}, \boldsymbol{\beta}, A)$ satisfying (\ref{proof_h_j_h_k_kappa}) can be constructed so that,
\begin{equation}\label{proof_h_j_h_0_sum_M}
n \sum_{j=1}^M \|h_{(j)} - h_{(0)}\|_{2}^2 \leq \dfrac{8 M}{9} \log M . 
\end{equation}
According to (\ref{proof_Kullback_P_j_P_0}) and (\ref{proof_h_j_h_0_sum_M}), we have (in the i.i.d. case)
\begin{equation}\label{proof_KL_sum_M}
\dfrac{1}{M} \sum_{j=1}^M \KL(P_j^{\otimes n}, P_0^{\otimes n}) = \dfrac{n}{M} \sum_{j=1}^M \KL(P_j, P_0)  \leq \dfrac{1}{9} \log M . 
\end{equation}
Thus, the second item of Theorem 2.7 in \cite{Tsybakov2009} is satisfied.
Hence, one can find  a constant $C>0$  such that (\ref{proof_equa_lower_bound}) holds.
This completes the proof of the theorem.
\qed

 \medskip

\medskip

\section*{Acknowledgements}
 The authors are very grateful to the two anonymous Referees for many relevant suggestions and comments which helped to improve the contents of this paper.

\bibliographystyle{acm}

\end{document}